\definecolor{darkgreen}{RGB}{30 150 30}
\definecolor{Gray}{gray}{0.9}
\newcommand{\N}{\mathbb{N}}
\newcommand{\R}{\mathbb{R}}
\DeclareMathOperator*{\argmin}{arg\,min}
\newcommand{\bs}[1]{\boldsymbol{#1}}
\title{\LARGE{\textbf{Weight Matrix Dimensionality Reduction in Deep Learning via Kronecker Multi-layer Architectures}}\footnote{This research was sponsored by ARL under Cooperative Agreement Number W911NF-12-2-0023. The views and conclusions contained in this document are those of the authors and should not be interpreted as representing the official policies, either expressed or implied, of ARL or the U.S. Government. The U.S. Government is authorized to reproduce and distribute reprints for Government purposes notwithstanding any copyright notation herein.  The first and third authors are partially supported by NSF DMS-1848508 and AFOSR FA9550-20-1-0338.}}
\author{Jarom D. Hogue\footnote{Scientific Computing and Imaging Institute, University of Utah, Salt Lake City, UT(\href{mailto:jdhogue@sci.utah.edu}{jdhogue@sci.utah.edu}).} \and Robert M. Kirby\footnote{Scientific Computing and Imaging Institute and School of Computing, University of Utah, Salt Lake City, UT(\href{mailto:kirby@cs.utah.edu}{kirby@cs.utah.edu}).} \and Akil  Narayan\footnote{Scientific Computing and Imaging Institute and Department of Mathematics, University of Utah, Salt Lake City, UT(\href{mailto:akil@sci.utah.edu}{akil@sci.utah.edu}).}}
\date{\small\today}
\titleformat{\section}[runin]{\normalfont\bfseries}{\thesection.}{0.5em}{}
\titleformat{\subsection}[runin]{\normalfont\bfseries}{\thesubsection.}{0.5em}{}
\newtheorem{theorem}{Theorem}[section]
\newtheorem{corollary}{Corollary}[theorem]
\newtheorem{lemma}[theorem]{Lemma}
\begin{document}

\maketitle

\noindent\rule{\textwidth}{0.8pt}

\begin{abstract}
  Deep learning using neural networks is an effective technique for generating models of complex data. However, training such models can be expensive when networks have large model capacity resulting from a large number of layers and nodes. For training in such computationally prohibitive regimes, a reduction of trainable parameters eases the computational burden, and allows implementations of more robust networks. We propose one such novel type of parameter reduction via a new deep learning architecture based on fast matrix multiplication of a matrix Kronecker product decomposition; our network construction can be viewed as a Kronecker product-induced sparsification of an ``extended'' fully connected network. Analysis and practical examples show that this architecture allows a neural network to be trained and implemented with a significant reduction in computational time and resources, while achieving a similar or better error level compared to a traditional feedforward neural network.  
\end{abstract}

\section{Introduction}
Statistical learning using deep neural networks has achieved impressive results in building models for prediction, summarization, and classification of large data sets \cite{goodfellow_deep_2016}. The capacity of such models is dictated by the depth and width (number of layers and nodes, respectively) of the neural network, but such high-capacity networks impose a nontrivial computational burden during training. In such regimes, some reduction of trainable parameters may be implemented in some form to ease the computational burden.  While this may take several forms, of particular interest is accelerating training of neural networks without degrading model performance. One of the computational burdens that arise for high-capacity networks during training is the cost of forward- and back-propagation, amounting to the cost of evaluation of the network and the cost of implementing the computational graph corresponding to the chain rule for differentiation, respectively. In this paper, we analogize this problem to that of matrix multiplication: Matrix-vector multiplication for large matrices can be expensive, but is much more efficient if certain structural properties of matrices exist that can be computationally exploited. In particular, we exploit the fact that the Kronecker product \cite{van1993} provides a low dimensional representation of a large matrix, and use a corresponding implementation of this dimensionality reduction for deep learning. 

Our approach aims to gain computational efficiency by imparting a Kronecker product structure on the \textit{architecture} of a neural network. Ultimately we aim to accelerate training of neural networks without degrading predictive accuracy. We implement a ``dual layer'' approach that is inspired by the structure of a Kronecker product and show that this procedure can significantly reduce computational time for both forward computation and back-propagation, when the network size/capacity is relatively large, compared to a fully connected feedforward neural network. Our approach is a type of connection sparsification approach, but of an ``extended'' network and not the original one. See Figure~\ref{fig:nodes} for a visual depiction of the new Kronecker Dual Layer architecture that we propose. We also show in such cases that we can maintain or even improve accuracy when training on several practical examples.

In short, the contributions of this paper are as follows: (i) We propose a new Kronecker product-inspired deep learning architecture, the Kronecker Dual Layer (KDL), that exploits simplification of arithmetic operations in Kronecker products for matrix multiplication to effect acceleration in both forward- and back-propagation phases of learning; (ii) we provide proof-of-concept theoretical analysis suggesting when a KDL network can be expected to perform well compared to fully connected networks; (iii) we demonstrate the practical effectiveness of KDL architectures on real-world datasets through several test examples, see Table~\ref{Tab:sizes}; and lastly, (iv) we provide open source code for a Tensorflow implementation at \href{https://github.com/JaromHogue/KDLayers}{github.com/JaromHogue/KDLayers}.

\subsection{Related work}\label{ssec:related}
The Kronecker product  has already been incorporated in several areas within the deep learning framework: (i) In \cite{Movshovitz, Zhou} the authors apply a Kronecker product decomposition (KPD) to decompose weight matrices of a trained network, although this typically requires a large number of terms for acceptable accuracy and is thus of limited applicability;  (ii) a generalized KPD is extended to multi-dimensional tensors in \cite{Hameed} to reduce the number of weight parameters and computational complexity in convolutional neural networks; (iii) the Kronecker product has been shown as a viable method to reduce the computational time for back-propagation via an approximate inverse of the Fisher information matrix, \cite{Martens}, providing a means to increase decay rate in the loss; and (iv) a ``Kronecker neural network'' in \cite{Jagtap}, has been established to implement adaptive activation functions in order to avoid local minima while training. We emphasize that our approach is distinct from these methods, as we fundamentally alter the network architecture in an attempt to accelerate training.

In addition to the alternative uses of the Kronecker product mentioned above, there are other methods that seek to reduce the nodes and/or connections of a trained network in order to reduce the computational burden of training or prediction. Dropout, see e.g. \cite{hinton_improving_2012,labach_survey_2019,srivastava_dropout_2014}, randomly ignores nodes or connections with a set probability when training, and has the added benefit of reducing co-adaptation of features.  Pruning, see e.g. \cite{aghasi_fast_2020,zhu_prune_2018}, on the other hand, seeks to force weights with a minimal impact to zero, thereby increasing sparsity within the trained network. Although these methods share the same broad goal as this work, our KDL approach splits layers based on a Kronecker product to form a new architecture, and is a novel means of reducing the required computational resources while maintaining accuracy that is not directly comparable to Dropout, pruning, or other related approaches and extensions.

\subsection{Notation}
A plain lowercase letter $v$ will denote either a scalar or a function, a bold lowercase letter $\bs{v}$ will represent a vector, and a plain uppercase letter $V$ will represent a matrix.  Subscripts will denote indices within a vector or matrix, with a colon denoting \textsc{Matlab}-style slicing of the full range of indices.  Parentheses in the superscript will denote variations based on layer, terms in a summand, and so forth, with multiple such designations separated by commas.  In addition, matrices may be numbered with a single digit in the subscript, in which case indices will be noted in parenthesis following the single digit in the subscript.

\section{Fully Connected Network}

We give a brief introduction to deep neural networks \cite{goodfellow_deep_2016}. An artificial neural network is a function defined by a series of compositions and can be identified by the choice of an activation function $\phi: \R \rightarrow \R$ and a collection of weight matrices and bias vectors. With $n_1$ and $n_L$ the input and output dimensions of the map, respectively, a \textit{fully connected neural network} (FNN) mapping input $\bs{x} \in \R^{n_1}$ to $\bs{y} \in \R^{n_L}$ can be defined as,
\begin{align}\label{eq:fnn}
  \bs{y} &= \bs{a}^{(L)}, & \bs{a}^{(\ell+1)} &= \left(\phi \circ \tilde{h}_{W^{(\ell+1)}, \bs{b}^{(\ell+1)}}\right)\left(\bs{a}^{(\ell)}\right), \hskip 5pt \ell = 1, \ldots, L-1,
\end{align}
where $\bs{a}^{(1)} = \bs{x}$, $L \in \N$ is the number of layers of the network, and $\tilde{h}_{W, \bs{b}}$ is an affine map defined through its weight matrix $W \in \R^{n_\ell \times n_{\ell+1}}$ and bias vector $\bs{b} \in \R^{n_{\ell+1}}$,
\begin{align*}
  \tilde{h}_{W, \bs{b}} &: \R^{n_\ell} \rightarrow \R^{n_{\ell+1}}, & \tilde{h}_{\{W, \bs{b}\}}(\bs{a}) &= W \bs{a} + \bs{b}.
\end{align*}
In \eqref{eq:fnn}, $\phi$ operating on vectors is defined componentwise. With our notation, the $L-2$ intermediate stages $\left\{\bs{a}^{(\ell)}\right\}_{\ell=2}^{L-1}$ are \textit{hidden layers}, and each component of $\bs{a}^{(\ell)}$ is a \textit{node}. We let $n_\ell$, $\ell \in [L]$, denote the number of units in layer $\ell$, so that $\bs{a}^{(\ell)} \in \R^{n_\ell}$.  Using the notation,
\begin{align*}
  \tilde{\theta} &\coloneqq \left\{ W^{(2)}, \bs{b}^{(2)}, \ldots, W^{(L)}, \bs{b}^{(L)}  \right\} \text{ with}\\
  \tilde{\theta}^{(\ell)} &\coloneqq \left\{ W^{(\ell)}, \bs{b}^{(\ell)} \right\},
\end{align*}
the FNN input-to-output map is then,
\begin{align}\label{eq:htilde-def}
  \bs{y} &= \bs{y}(\bs{x}; \tilde{\theta}) = \left( \phi \circ \tilde{h}_{\tilde{\theta}^{(L)}} \circ \phi \circ \tilde{h}_{\tilde{\theta}^{(L-1)}} \cdots \phi \circ \tilde{h}_{\tilde{\theta}^{(2)}} \right) (\bs{x}), & \tilde{h}_{\tilde{\theta}^{(\ell)}} &= \tilde{h}_{\{W^{(\ell)}, \bs{b}^{(\ell)}\}}.
\end{align}
We will focus on the fixed-model capacity neural network setup where the architectural parameters $L$ and $\{n_\ell\}_{\ell \in [L]}$, along with the activation function $\phi$ are fixed before training. Some popular choices of activation function include the hyperbolic tangent, the sigmoid function, a rectified linear unit, and a linear map. The weight matrices $W^{(\ell)}$ and bias vectors $\bs{b}^{(\ell)}$ are updated through optimization-based training; we seek to choose $\tilde{\theta}$ to minimize an $\ell^2$-type loss function that balances model complexity $R(\tilde{\theta})$ against fidelity to available training data $(\bs{x}_m, \bs{y}_m)_{m \in [M]}$,
\begin{align*}
  {\mathcal{L}}\left(\tilde{\theta}\right) = \sum_{m=1}^M \frac{1}{2}{\mathcal{L}}_m\left(\tilde{\theta}\right) + \frac{\lambda}{2} R\left(\tilde{\theta}\right) = \sum_{m=1}^M \frac{1}{2}\left\| \bs{y}(\bs{x}_m) - \bs{y}_m \right\|_2^2 + \frac{\lambda}{2} R\left(\tilde{\theta}\right)
\end{align*}
where $\lambda > 0$ is a tunable hyperparameter. 
In this paper, we choose $R$ as a Tikhonov-type regularization,
\begin{align}\label{eq:R-def}
  R\left(\tilde{\theta}\right) = \sum_{\ell=2}^L \left(\left\|W^{(\ell)}\right\|_F^2 + \left\|\bs{b}^{(\ell)}\right\|_2^2 \right).
\end{align}
Minimization of ${\mathcal{L}}$ over the optimization variables $\tilde{\theta}$ proceeds typically with first-order or quasi-Newton methods, so that computation of $\frac{\partial {\mathcal{L}}}{\partial \tilde{\theta}}$ is required. Practical algorithms achieve this through back-propagation, summarized by the iteration,
\begin{align*}
  W^{(L+1)T} \bm{\delta}_m^{(L+1)} &\coloneqq \bs{y}(\bs{x}_m) - \bs{y}_m, & 
  \bm{\delta}^{(\ell)} &= \phi'(W^{(\ell)}\bm{a}^{(\ell-1)}+\bm{b}^{(\ell)})\circ W^{(\ell+1)T} \bm{\delta}^{(\ell+1)},
\end{align*}
for $\ell = L, \ldots, 2$, where $\circ$ between vectors denotes a componentwise (Hadamard) product, $\phi': \R \rightarrow \R$ is the derivative of $\phi$, and application to vectors is again defined componentwise.  This results in the relations,
\begin{align*}
  \frac{\partial {\mathcal{L}}_m}{\partial W^{(\ell)}} &= \bs{\delta}^{(\ell)} \bs{a}^{(\ell)T}, & 
  \frac{\partial {\mathcal{L}}_m}{\partial \bs{b}^{(\ell)}} &= \bs{\delta}^{(\ell)}.
\end{align*}
To train the network, e.g., with simple gradient descent and fixed learning rate $\eta$, we implement the update,
\begin{align*}
  W^{(\ell)} &\gets W^{(\ell)} - \eta \sum_{m=1}^M \frac{\partial {\mathcal{L}}_m}{\partial W^{(\ell)}} - \eta \lambda W^{(\ell)}, & 
  \bs{b}^{(\ell)} &\gets \bs{b}^{(\ell)} - \eta \sum_{m=1}^M \frac{\partial {\mathcal{L}}_m}{\partial \bs{b}^{(\ell)}} - \eta \lambda b^{(\ell)}.
\end{align*}
In practice, more sophisticated optimization algorithms are used, e.g., \cite[Chapter 8]{goodfellow_deep_2016}. In all the expressions above, application of matrix-vector multiplications involving $W^{(\ell)}$ can form a substantial portion of the computational burden, especially if the hidden layers have large dimension $n_\ell$. In this manuscript, we seek to alleviate this burden while retaining model capacity.

\section{The Kronecker Product}
As matrix and vector sizes increase, matrix-vector operations require more computational resources and time; the Kronecker product \cite{van1993} is one strategy to ameliorate this complexity when the matrices involved have a certain type of exploitable structure. Given $L \in \R^{m_1 \times n_1}$ and $R \in \R^{m_2 \times n_2}$, the Kronecker product (KP) $L \otimes R$ is defined as,
\begin{align}\label{eqn:Kron2}
K \coloneqq L\otimes R = \left[\begin{array}{ccc} l_{11}R & \cdots & l_{1n_1}R\\
\vdots & \ddots & \vdots\\
l_{m_11}R & \cdots & l_{m_1n_1}R\end{array}\right] \in \R^{m_1 m_2 \times n_1 n_2}.
\end{align} 
Given $\bs{x} \in \R^{n_1 n_2}$, computing $K \bs{x}$ can be accomplished via the relation,
\begin{align}\label{eq:kron-rearrangement}
  K\bs{x} &= R X L^T, & X &= \mathrm{mat}(\bs{x}),
\end{align}
where $\mathrm{mat}: \R^{n_1 n_2} \rightarrow \R^{n_2 \times n_1}$ is a matricization operation, the inverse of vectorization $\mathrm{vec}: \R^{n_2 \times n_1} \rightarrow \R^{n_1 n_2}$. The major appeal of the above representation is that $\bs{x} \mapsto K \bs{x}$ requires $\mathcal{O}(m_1 m_2 n_1 n_2)$ operations, whereas $X \mapsto R X L^T$ requires only $\mathcal{O}(n_2 n_1 m_2 + n_1 n_2 m_1)$ operations, which can result in substantial computational savings.

While many matrices cannot be represented exactly as a Kronecker product, all matrices whose row and column dimensions are not prime integers can be approximated by a sum of Kronecker product matrices. To explain this further, let $W \in \R^{m \times n}$ be given, with $m = m_1 m_2$ and $n = n_1 n_2$ arbitrary integer factorizations of $m$ and $n$. A rank-$k$ KPD approximation of $W$ is given by
\begin{align*}
  W &\approx \sum_{j=1}^k L^{(j)} \otimes R^{(j)}, & L^{(j)} \in \R^{m_1 \times n_1}, \; R^{(j)} \in \R^{m_2 \times n_2},
\end{align*}
for some choice of matrices $L^{(j)}, R^{(j)}$. Note that the phrase \textit{rank-$k$ KPD approximation} is a slight abuse of terminology, as $L^{(j)} \otimes R^{(j)}$ is not necessarily a rank-$1$ matrix. A Frobenius-norm optimal rank-$k$ KPD approximation can be determined by introducing a \textit{rearrangement} $\mathcal{R}$ of $W$, identified through a block partition of $W$,
\begin{align*}
W = \left[\begin{array}{ccc} W_1 & \cdots & W_{m_1(n_1-1)+1}\\
\vdots & \ddots & \vdots\\
W_{m_1} & \cdots & W_{m_1n_1}\end{array}\right], \hskip 5pt W_{j} \in \R^{m_2 \times n_2} \hskip 5pt
  \stackrel{\bs{w}_{j} = \mathrm{vec}(W_{j}) }{\Longrightarrow} 
  \mathcal{R}(W) = \left[\begin{array}{c} \bm{w}_1^T\\ \bm{w}_2^T\\  \vdots\\ \bm{w}_{m_1n_1}^T\end{array}\right].
\end{align*}
With $\bs{l} = \mathrm{vec}(L)$ and $\bs{r} = \mathrm{vec}(R)$, then note that $\mathcal{R}(L \otimes R) = \bs{l} \bs{r}^T$, and therefore \linebreak $\left\| W - L \otimes R\right\|_F = \left\| \mathcal{R}(W) - \bs{l}\bs{r}^T \right\|_F$. This last relation allows one to leverage the Hilbert-Schmidt-Eckart-Young theorem on Frobenius-norm optimal low-rank approximations using the singular value decomposition \cite[Theorem~2.4.8]{Golub}. While this immediately yields an optimal rank-1 KPD approximation \cite[Corrolary~2.2]{van1993}, the result is generalizable to rank-$k$ approximations for $k > 1$. 
\begin{lemma}[\cite{van2000}]\label{lemma:kron-approx}
  Given $W\in\mathbb{R}^{m_1m_2\times n_1n_2}$, let its rearrangement have singular value decomposition (SVD) $\mathbb{R}^{m_1n_1\times m_2n_2}\ni\mathcal{R}(W) = U\Sigma V^T = \sum_{i=1}^r \sigma_i \bm{u}_i \bm{v}_i^T$, with the singular values $\{\sigma_i\}_{i \in [r]}$ arranged in non-increasing order and $r = \mathrm{rank}(\mathcal{R}(W))$. Let $W_k$ denote a rank-$k$ KPD defined by,
  \begin{align*}
    W_k \coloneqq \sum_{j=1}^k \mathrm{mat}(\sigma_i \bs{u}_i) \otimes \mathrm{mat}(\bs{v}_i).
  \end{align*}
  Then $W_k$ is an optimal rank-$k$ KPD approximation to $W$:
  \begin{align*}
    W_k &\in \argmin_{L^{(j)} \in \R^{m_1 \times n_1}, R^{(j)} \in \R^{m_2 \times n_2}, j \in [k]} \left\| W - \sum_{j=1}^k L^{(j)} \otimes R^{(j)} \right\|_F^2, \\
    \left\| W - W_k \right\|_F^2 &= \sum_{i=k+1}^r \sigma_i^2.
  \end{align*}
\end{lemma}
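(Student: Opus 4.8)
The plan is to reduce the KPD approximation problem to a standard low-rank matrix approximation problem, for which the Eckart-Young theorem (\cite[Theorem~2.4.8]{Golub}) applies directly. The bridge between the two problems is the rearrangement operator $\mathcal{R}$ together with the two facts already recorded in the excerpt: that $\mathcal{R}$ is linear, and that it acts on a single Kronecker product by $\mathcal{R}(L \otimes R) = \bs{l}\bs{r}^T$ with $\bs{l} = \mathrm{vec}(L)$, $\bs{r} = \mathrm{vec}(R)$. Once the optimization over KPD terms is rewritten as an optimization over rank-$\leq k$ matrices, the truncated SVD of $\mathcal{R}(W)$ supplies the minimizer, and the error formula falls out immediately.

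First I would record that $\mathcal{R}$ merely permutes the entries of its argument and is therefore a Frobenius-norm isometry, $\|A\|_F = \|\mathcal{R}(A)\|_F$ for every admissible $A$. Combined with linearity, applying $\mathcal{R}$ to a general rank-$k$ KPD gives
\begin{align*}
  \mathcal{R}\left( \sum_{j=1}^k L^{(j)} \otimes R^{(j)} \right) = \sum_{j=1}^k \bs{l}^{(j)} \left(\bs{r}^{(j)}\right)^T,
\end{align*}
where $\bs{l}^{(j)} = \mathrm{vec}(L^{(j)}) \in \R^{m_1 n_1}$ and $\bs{r}^{(j)} = \mathrm{vec}(R^{(j)}) \in \R^{m_2 n_2}$. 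The isometry then converts the objective into
\begin{align*}
  \left\| W - \sum_{j=1}^k L^{(j)} \otimes R^{(j)} \right\|_F^2 = \left\| \mathcal{R}(W) - \sum_{j=1}^k \bs{l}^{(j)} \left(\bs{r}^{(j)}\right)^T \right\|_F^2.
\end{align*}

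The crux of the argument is the observation that, as the factors $L^{(j)}$ and $R^{(j)}$ range over $\R^{m_1 \times n_1}$ and $\R^{m_2 \times n_2}$, the vectors $\bs{l}^{(j)}$ and $\bs{r}^{(j)}$ range over all of $\R^{m_1 n_1}$ and $\R^{m_2 n_2}$, since $\mathrm{vec}$ is a bijection onto exactly these spaces; consequently the sum $\sum_{j=1}^k \bs{l}^{(j)} (\bs{r}^{(j)})^T$ realizes precisely the set of matrices in $\R^{m_1 n_1 \times m_2 n_2}$ of rank at most $k$. Thus minimizing the KPD objective over all admissible factors is \emph{equivalent} to minimizing $\|\mathcal{R}(W) - B\|_F^2$ over all $B$ with $\mathrm{rank}(B) \leq k$, which is exactly the setting of Eckart-Young. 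Its minimizer is the truncated SVD $B^\star = \sum_{i=1}^k \sigma_i \bs{u}_i \bs{v}_i^T$ with residual $\sum_{i=k+1}^r \sigma_i^2$. Pulling $B^\star$ back through the inverse rearrangement — which by the rank-$1$ relation sends $\bs{u}\bs{v}^T \mapsto \mathrm{mat}(\bs{u}) \otimes \mathrm{mat}(\bs{v})$ — yields $W_k$, and the isometry transfers the residual verbatim to give $\|W - W_k\|_F^2 = \sum_{i=k+1}^r \sigma_i^2$.

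I expect the only genuine obstacle to be the equivalence of feasible sets in the penultimate step: one must confirm that the dimensions $m_1 n_1$ and $m_2 n_2$ of the factor vectors match the dimensions of the rearranged matrix $\mathcal{R}(W)$, so that every rank-$\leq k$ matrix $B$ is attainable and no spurious constraint is silently imposed on the low-rank problem. Everything else is bookkeeping built on the isometry and the two previously established properties of $\mathcal{R}$.
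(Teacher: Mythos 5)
Your proof is correct and takes essentially the same route as the paper, which establishes $\mathcal{R}(L \otimes R) = \bs{l}\bs{r}^T$ and the Frobenius-norm identity $\left\|W - L \otimes R\right\|_F = \left\|\mathcal{R}(W) - \bs{l}\bs{r}^T\right\|_F$, then invokes the Hilbert--Schmidt--Eckart--Young theorem and defers the rank-$k$ generalization to the cited reference \cite{van2000}. Your key step --- that the sums $\sum_{j=1}^k \bs{l}^{(j)} \left(\bs{r}^{(j)}\right)^T$ realize exactly the set of matrices of rank at most $k$ in $\R^{m_1 n_1 \times m_2 n_2}$, so the two feasible sets coincide --- is precisely the detail needed to complete that sketch, and it is sound.
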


\section{KP Dual Layer Networks}

This section introduces our new neural network architecture. Given an FNN with layer $\ell$ nodal states $\bs{a}^{(\ell)}$ defined by \eqref{eq:fnn}, a KPD $W^{(\ell)} = \sum_{i=1}^r L^{(\ell,i)T}\otimes R^{(\ell,i)}$, with $r = \mathrm{rank}(\mathcal{R}(W^{(\ell)}))$ could be used to produce a new intermediate layer utilizing KP-based multiplication. I.e., we have the exact representation,
\begin{align}\label{eq:fnn-kp-representation}
\bs{a}^{(\ell)} = \tilde{h}_{\tilde{\theta}^{(\ell)}}(\bs{a}^{(\ell-1)}) = \text{vec}\left(\sum_{i=1}^r R^{(\ell,i)}A^{(\ell-1)}L^{(\ell,i)}\right) + \bm{b}^{(\ell)}, 
\end{align}
where $A^{(\ell-1)} = \text{mat}\left(\bm{a}^{(\ell-1)}\right)$ is reshaped to form a matrix with appropriate dimensions. A straightforward approach is then to truncate the sum to $k < r$ terms. However, the error incurred by such an approach is bounded by the truncated singular values of the weight rearrangement, and such singular values are not guaranteed to decay quickly, in turn requiring large $k$ to accurately represent $W^{(\ell)}$.  Figure~\ref{fig:comp} demonstrates this for examples that will be introduced in Table~\ref{Tab:times}.  For each of these examples, a rank close to the full rank of the rearrangement of the weight matrices is needed to achieve a test error similar to the test error when using the trained weight matrices, making direct KPD-based truncation unattractive.

\begin{figure}[!htbp]\begin{center}
\subfloat[BSD (a) 
\label{bikes8_comp}]{\includegraphics[width=0.25\textwidth]{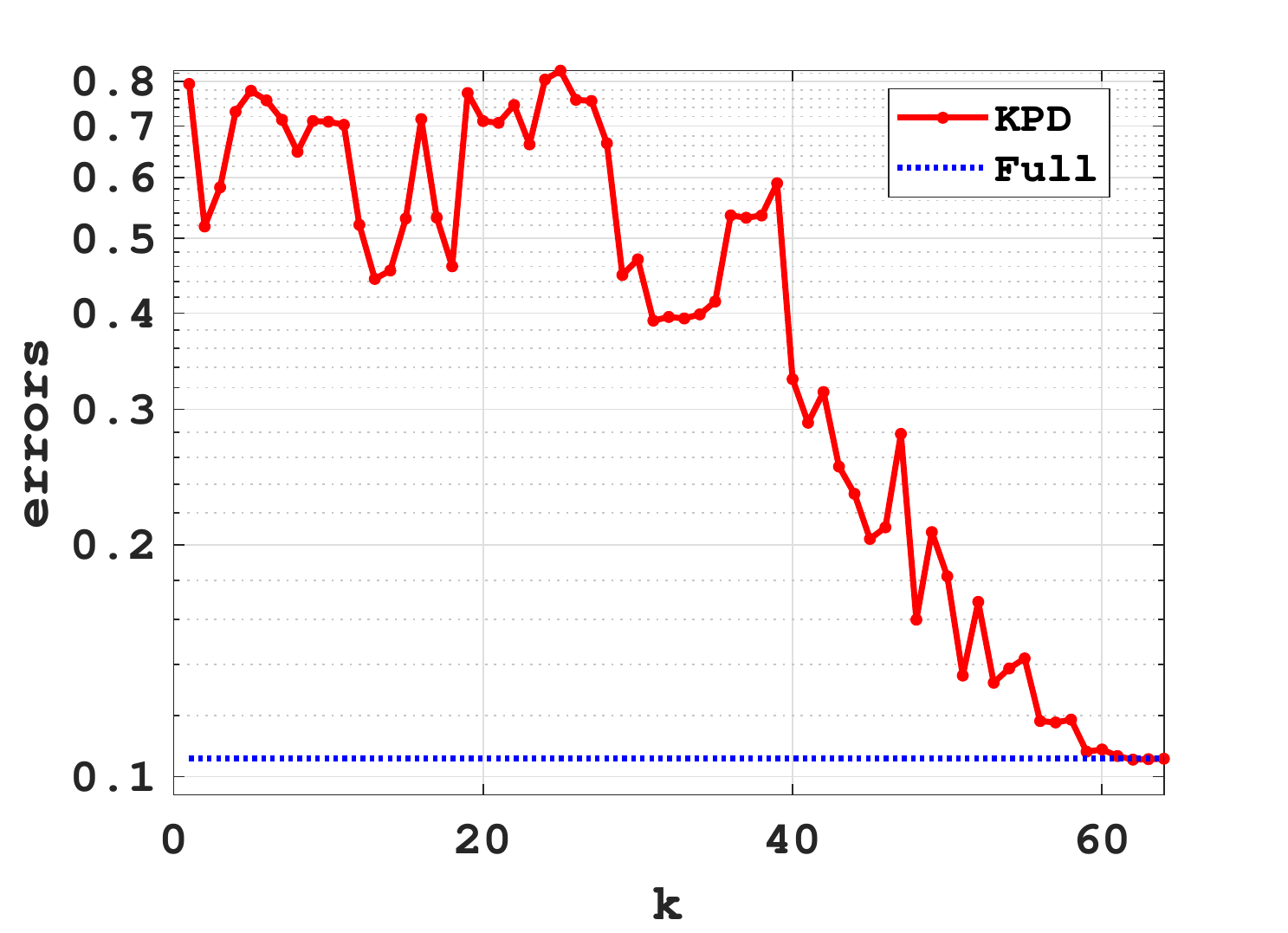}}
\subfloat[BSD (b)  
\label{bikes_comp}]{\includegraphics[width=0.25\textwidth]{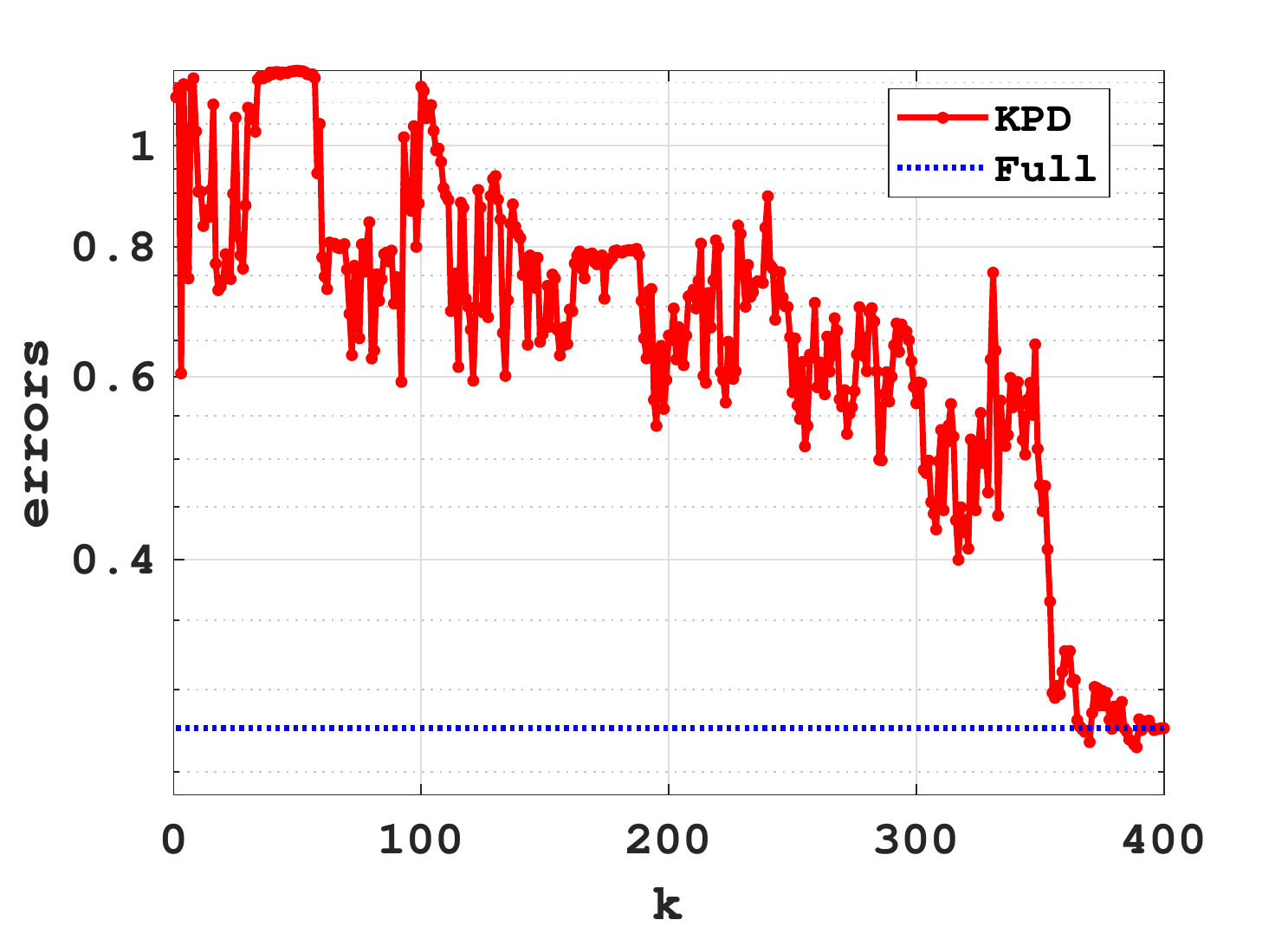}}
\subfloat[BF  
\label{blog_comp}]{\includegraphics[width=0.25\textwidth]{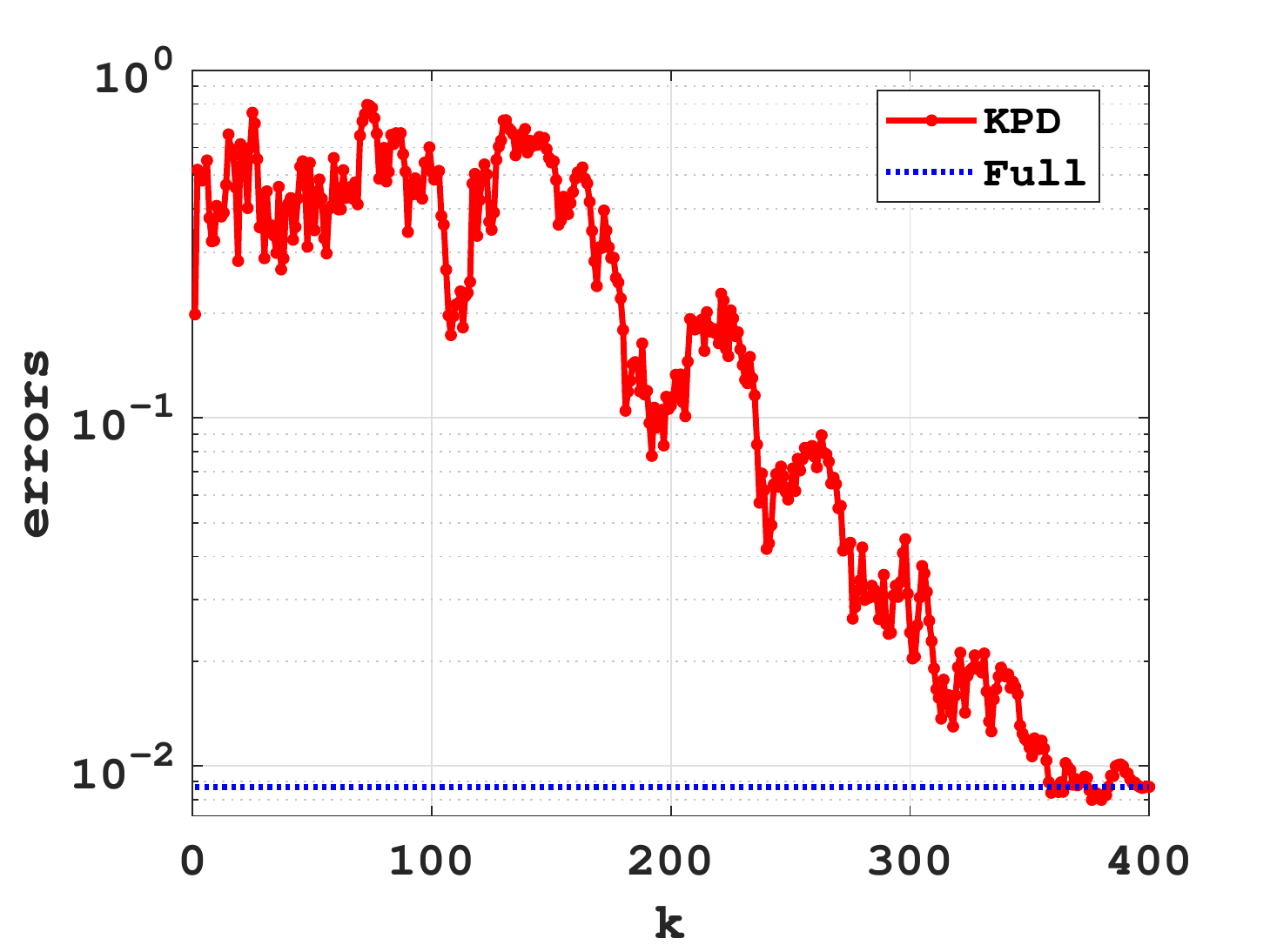}}
\subfloat[MNIST  
\label{MNIST_comp}]{\includegraphics[width=0.25\textwidth]{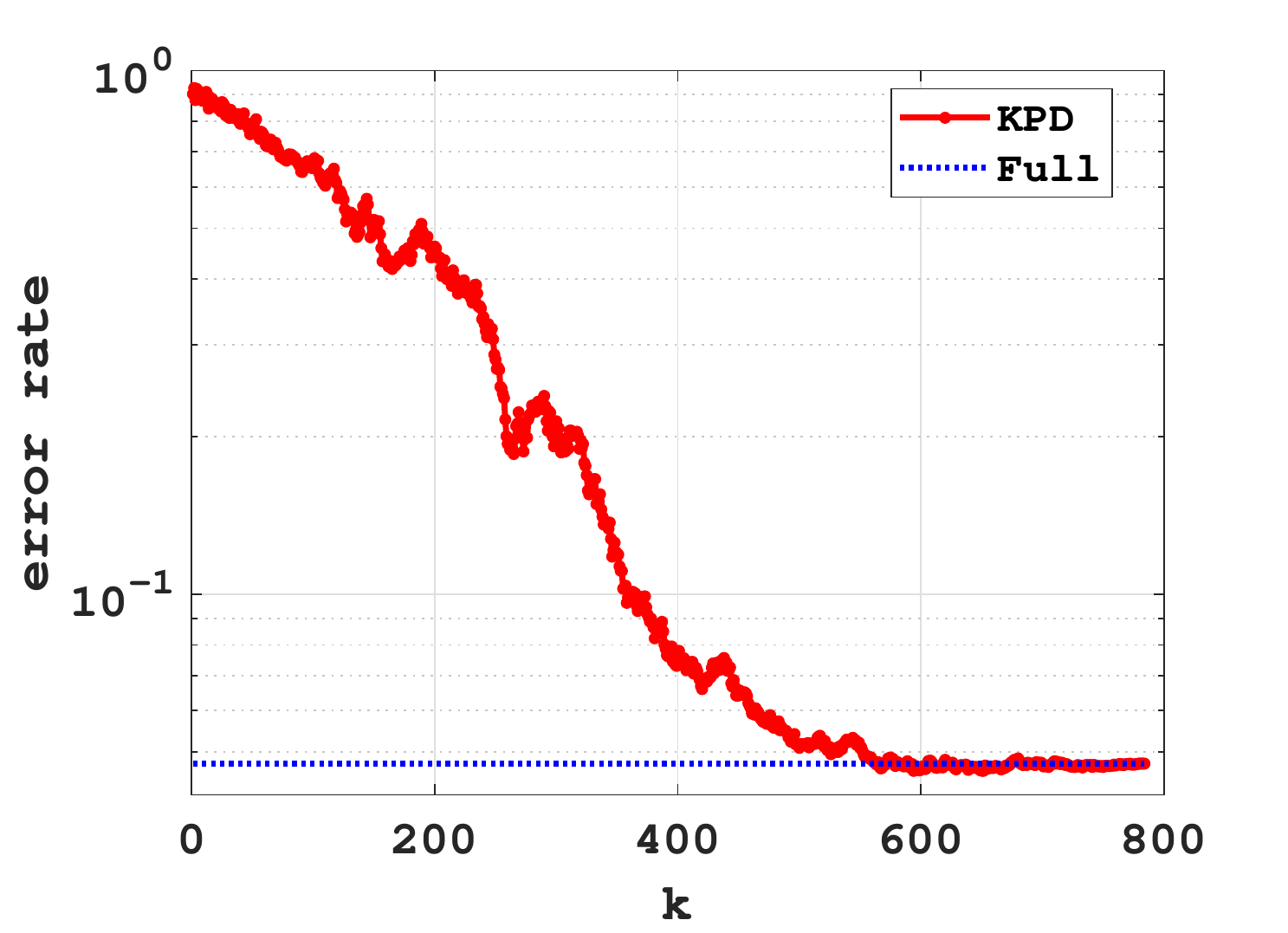}}
\end{center}
\caption{Figures \ref{bikes8_comp} to \ref{MNIST_comp} show the test errors across choice of KPD rank, $k$, compared to the full rank of the rearrangement of each weight matrix for BSD (a and b), BF, and MNIST respectively as defined in Table~\ref{Tab:times}. 
\label{fig:comp}}
\end{figure}

To balance this potential loss of accuracy caused by reducing the Kronecker rank, we augment model capacity via inclusion of an extra activation function and bias vector at the new intermediate layer. The ``intermediate layer'' is the result of performing only one matrix multiplication in the KPD matrix-vector multiplication, e.g., it is the state $A^{(\ell-1)} L^{(\ell,i)}$ in \eqref{eq:fnn-kp-representation}.
With $A_R^{(\ell)}$ the new matrix-valued intermediate state, and with $\phi_1$ and $\phi_2$ activation functions, then we define a rank-$k$ KP dual layer (KDL) as the operation, 
\begin{align*}
  A_R^{(\ell)} = \left( \phi_1 \circ h_{\theta_R^{(\ell)}} \circ \phi_2 \circ \widetilde{h}_{\theta_L^{(\ell)}} \right) \left(A_R^{(\ell-1)}\right),
\end{align*}
where we have introduced new layer-$\ell$ parameters $\theta_L^{(\ell)}$ and $\theta_R^{(\ell)}$,
\begin{align}\nonumber
\theta_L^{(\ell)} &\coloneqq \left\{ W_L^{(\ell,1)},B_L^{(\ell,1)},\cdots,W_L^{(\ell,k)},B_L^{(\ell,k)} \right\},\\\nonumber
  \theta_R^{(\ell)} &\coloneqq \left\{ W_R^{(\ell,1)},B_R^{(\ell,1)},\cdots,W_R^{(\ell,k)},B_R^{(\ell,k)} \right\},\\\label{eq:theta-def}
\theta &\coloneqq \left\{ \theta_L^{(2)}, \theta_R^{(2)}, \cdots, \theta_L^{(L)},\theta_R^{(L)} \right\}.
\end{align}
The function $\tilde{h}_{\theta_L^{(\ell)}}$ is an extension of the affine function introduced in \eqref{eq:htilde-def}\footnote{$\widetilde{h}$ in \eqref{eq:htilde-def} takes a single weight matrix and bias vector as parameters, but $\widetilde{h}$ here takes $k$ weight matrices and bias vectors as parameters.}, and the newly introduced functions $h_\theta$, with a slight abuse in notation, are $k$-fold sums / collections of FNN affine maps $\widetilde{h}$,
\begin{align}
  \widetilde{h}_{\theta_L^{(\ell,i)}}\left( A_R^{(\ell-1)} \right) &= A_R^{(\ell-1)}W_L^{(\ell,i)} + B_L^{(\ell,i)},\nonumber\\
  h_{\theta_L^{(\ell)}}\left( A_R^{(\ell-1)} \right) &= \left(\widetilde{h}_{\theta_L^{(\ell,1)}}\left( A_R^{(\ell-1)} \right),  \ldots, \widetilde{h}_{\theta_L^{(\ell,k)}}\left( A_R^{(\ell-1)} \right)\right)\nonumber\\
  A_L^{(\ell,i)} &= \left(\phi_2 \circ \widetilde{h}_{\theta_L^{(\ell,i)}}\right)\left( A_R^{(\ell-1)} \right),\label{Eqn:KPlayer1}\\ 
  h_{\theta_R^{(\ell)}}\left( \left\{A_L^{(\ell,i)}\right\}_{i=1}^k \right) &= \sum_{i=1}^k \widetilde{h}_{\theta_R^{(\ell,i)}}\left(A_L^{(\ell,i)}\right) = \sum_{i=1}^k \left(W_R^{(\ell,i)} A_L^{(\ell,i)} + B_R^{(\ell,i)}\right), \nonumber\\
A_R^{(\ell)} &= \left( \phi_1 \circ h_{\theta_R^{(\ell)}} \circ \phi_2 \circ h_{\theta_L^{(\ell)}} \right) \left(A_R^{(\ell-1)}\right), \label{Eqn:KPlayer2}
\end{align} 
effectively splitting the layer into two intermediate states $A_L^{(\ell,i)}$, $i \in [k]$, and $A_R^{(\ell)}$.  

In what follows, we will use the notation,
\begin{subequations}\label{eq:f-def}
\begin{align}\label{eq:fl-def}
  f_\ell (A) = \left( \phi_1 \circ h_{\theta_R^{(\ell)}} \circ \phi_2 \circ \tilde{h}_{\theta_L^{(\ell)}} \right) (A),
\end{align}
to denote the function that transitions the nodes at layer $\ell-1$ to those at layer $\ell$ for $\ell \geq 2$. 
Note that reshaping between layers is only necessary if the output dimensions from one layer do not match the input dimensions from the next layer. For notational convenience in our analysis, we define,
\begin{align}\label{eq:f1-def}
  f_1(\cdot) = X = \mathrm{mat}(\bs{x}),
\end{align}
\end{subequations}
and in particular we have $f_1(0) = X$, where $\bs{x}$ is the input to the neural network. A higher order splitting into Kronecker multi-layers (KML) is discussed in Section 2 in the supplemental materials, and is hierarchical in nature.  While KDLs may be applied only to a subset of layers in an FNN, of interest here is a neural network based solely on KDLs, i.e., a KDL-NN, where intermediate layers are left in matrix form (instead of vectorizing). 
The KDL-NN is now given by 
\begin{align}\label{eq:kdl-nn}
  Y_k^{\kappa}(X) = Y_k^\kappa(X;\theta) = \left( f_L \circ f_{L-1} \cdots \circ f_2 \right) (X).
\end{align}
The associated regularized loss for training a rank-$k$ KDL-NN over all data points $(\bs{x}_m, \bs{y}_m)$ for $m=1, \ldots, M$, is given by,
\begin{align*}
  \mathcal{L}(\theta) = \sum_{m=1}^M \frac{1}{2} \mathcal{L}_m^\kappa(\theta) + \frac{\lambda}{2} R^\kappa(\theta) = \sum_{m=1}^M \left\| \mathrm{mat}(\bs{y}_m) - Y_k^\kappa(\mathrm{mat}(\bs{x}_m)) \right\|^2_F + \frac{\lambda}{2} R^\kappa(\theta),
\end{align*}
where $R^\kappa$ is the same Tikhonov-type regularizer as in \eqref{eq:R-def}, but sums the squared Frobenius norms for all weights and bias matrices contained in $\theta$.

\subsection{KDL-NN architecture}
In what follows, we use notation that describes node configuration as a sequence of numbers that count nodes in each layer, e.g., the FNN in Figure~\ref{full} corresponds to ${4|9|9|4}$, describing a fully connected network with 4 input nodes, 2 hidden layers with 9 nodes each, and an output layer with 4 nodes.  A rank-$1$ KDL-NN may be described by using pairs of values for each layer corresponding to the matricized shape of the nodes, such as ${(2,2) | (3,3) | (3,3) | (2,2)}$, where the input and output layers with 4 nodes are shaped as $2\times 2$ matrices and the two hidden layers with 9 nodes each are shaped as $3\times 3$ matrices; the layers added by the KDL-NN compared to the FNN architecture correspond to matrices of size $2\times 3$, $3\times 3$ and $3\times 2$, respectively. We denote a rank-$k$ KDL-NN with similar notation, ${(2,2) |^k (3,3) |^k (3,3) |^k (2,2)}$, indicating that $k$ factors of each of the $2 \times 3$, $3 \times 3$, and $3 \times 2$ added intermediate layers are present. (A bar $|$ without a superscript indicates $k=1$.) In our numerical examples, we provide a comparison against a third type of network, an ``extended'' FNN, labeled E-FNN, corresponding to the same number of nodes as the KDL-NN, but with full connections between nodes in sequential layers. An E-FNN corresponding to the same number of nodes as the KPD-NN architecture ${(2,2) | (3,3) | (3,3) | (2,2)}$ then corresponds to ${4|6|9|9|9|6|4}$.

We visually compare the architecture of the above FNN, KDL-NN, and E-FNN examples in Figure~\ref{fig:nodes} for ranks $k=1$ and $k=2$. The KDL-NN sparsifies some connections of the FNN while adding more nodes. Alternatively, the KDL-NN is a type of structured connection sparsification of the E-FNN. Note that we provide comparisons against E-FNNs as a baseline for model capacity; our goal is not to computationally sparsify E-FNNs, but rather to build new architectures based on simpler FNNs. Table~\ref{Tab:connects} summarizes architectures described above and reports the total number of trainable parmeters for the networks shown in Figure~\ref{fig:nodes}.  Note that KDL-NNs are fully connected across each set of dual layers.

\begin{figure}[!htbp]\begin{center}
  \resizebox{\textwidth}{!}{
  \renewcommand{\tabcolsep}{0pt}
  \begin{tabular}{ccc}
    \multirow{2}{*}[4em]{\subfloat[FNN (175 parameters) \label{full}]{\includegraphics[width=0.33\textwidth]{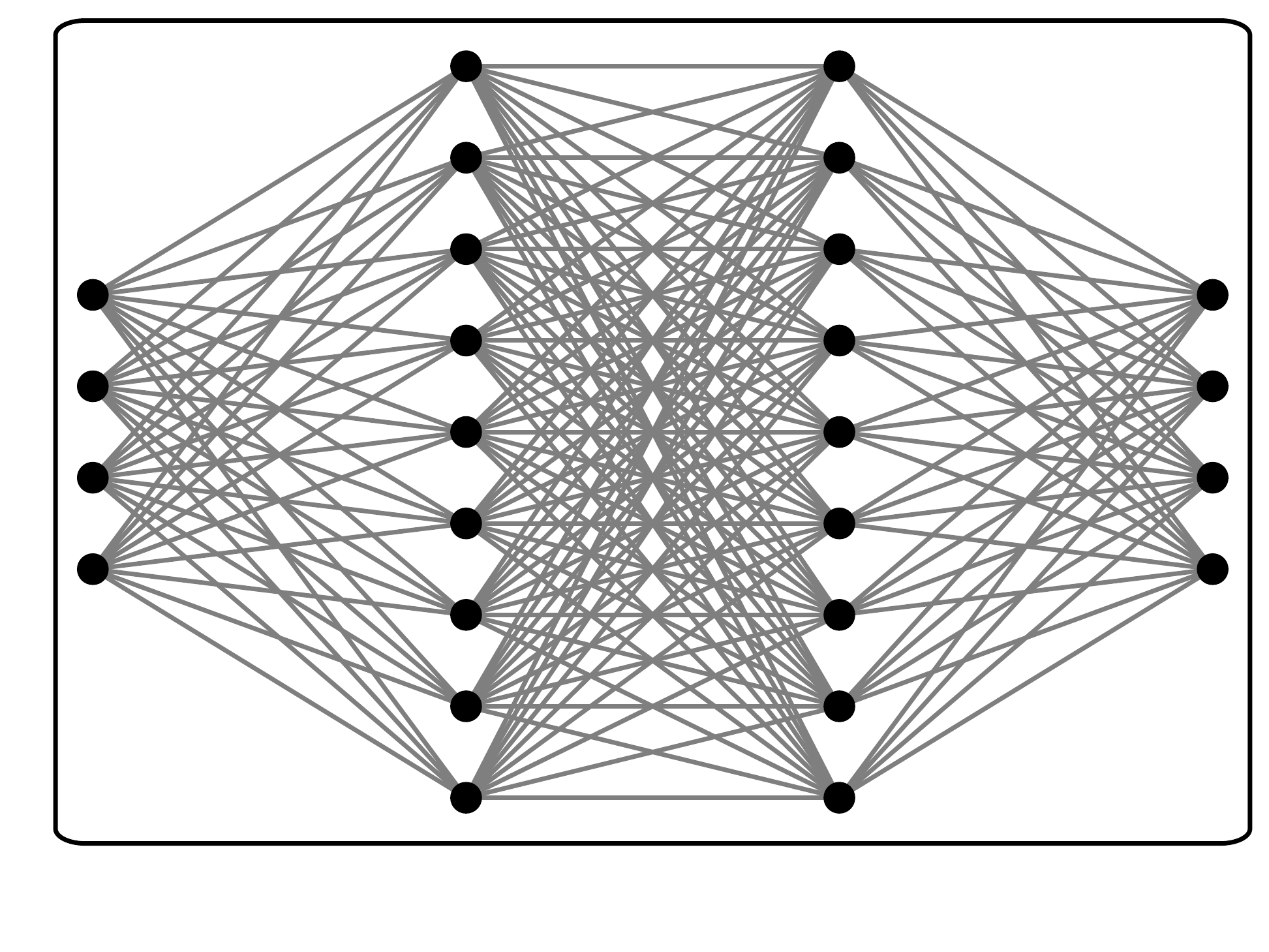}}} & 
    \subfloat[KDL-NN (157 parameters) \label{KP}]{\includegraphics[width=0.33\textwidth]{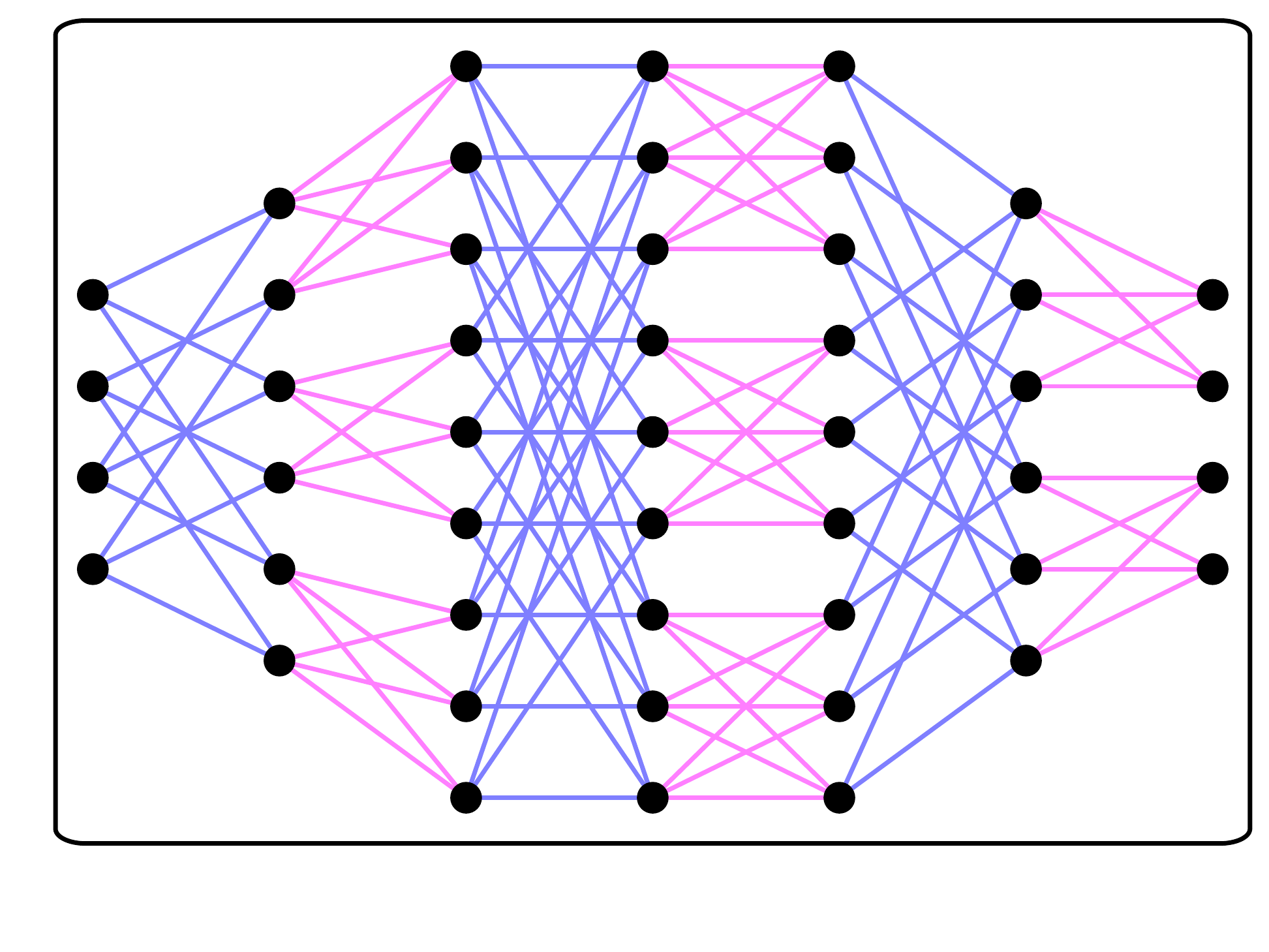}} & 
  \subfloat[E-FNN (361 parameters) \label{Extended}]{\includegraphics[width=0.33\textwidth]{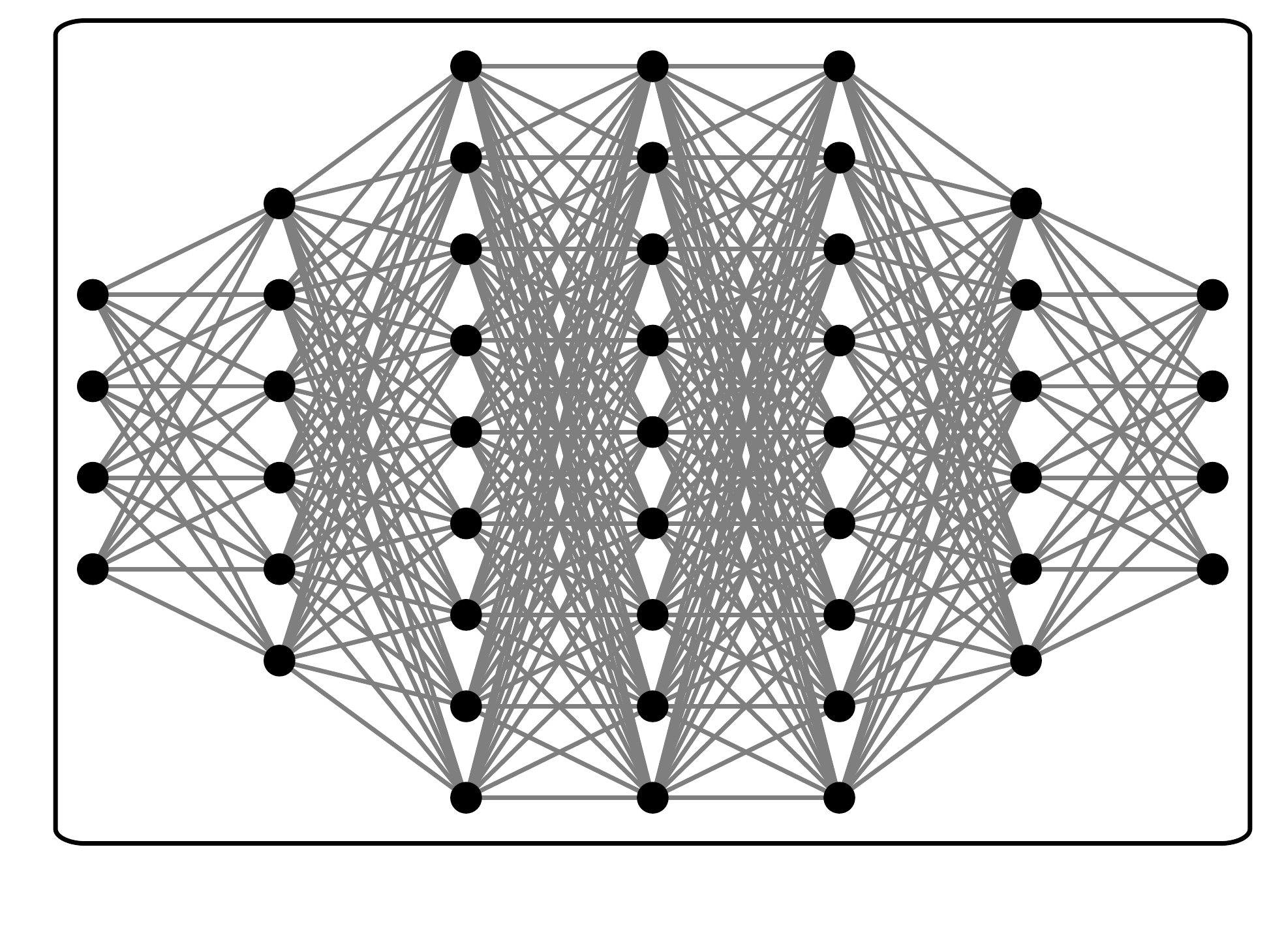}} \\
    & 
    \subfloat[{\medmuskip=0mu \thinmuskip=0mu \thickmuskip=0mu KDL-NN, $k=2$ (292 parameters)} \label{KP2}]{\includegraphics[width=0.33\textwidth]{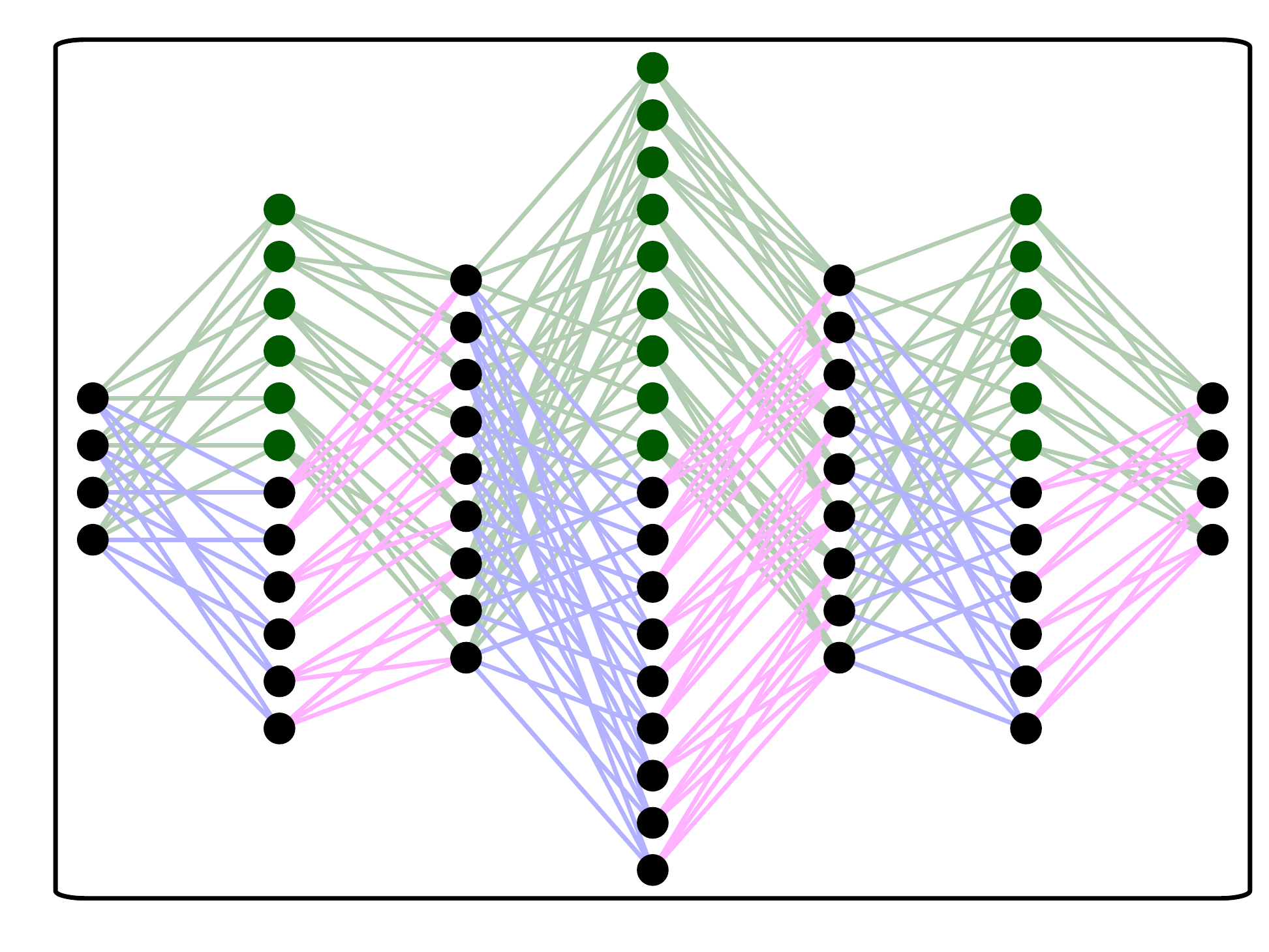}} 
    &
    \subfloat[E-FNN, {\medmuskip=0mu \thinmuskip=0mu \thickmuskip=0mu $k=2$} (700 parameters) \label{Extended2}]{\includegraphics[width=0.33\textwidth]{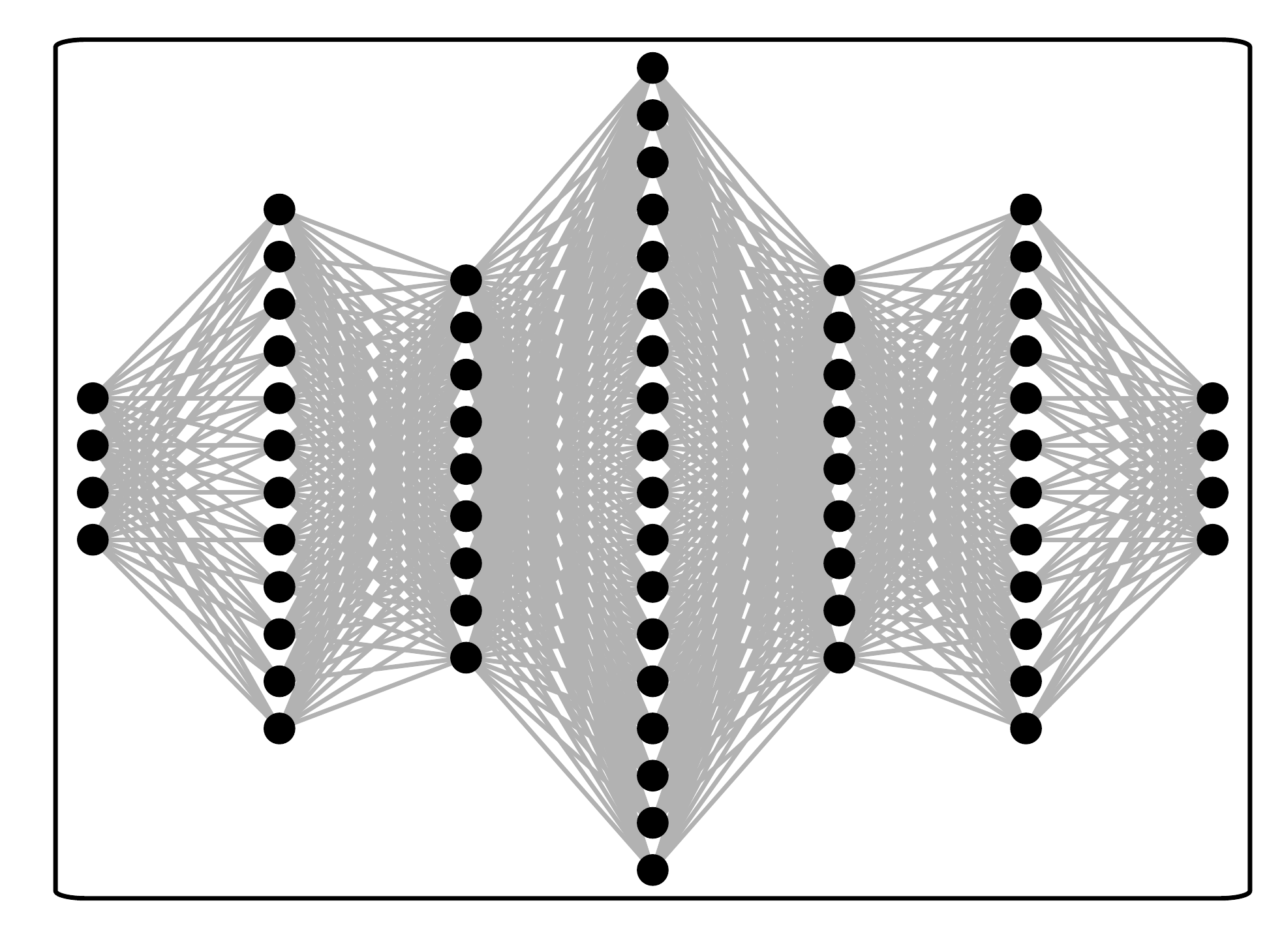}}
  \end{tabular}
}
\end{center}
\caption{Left: A fully connected FNN is shown in Figure~\ref{full}. Top row: Figure~\ref{KP} shows the resulting connections for reshaping the input and output into $2\times 2$ blocks and the hidden layers into $3\times 3$ blocks with intermediate layers added corresponding to a KDL-NN rank~1 formulation, where the alternating pattern represents multiplying from the right shown with blue connections and from the left with maroon connections. The same nodes from the KDL-NN are then used to show the resulting E-FNN in Figure~\ref{Extended}.  Bottom row: corresponding KPD-NN rank~2 formulation (Figure~\ref{KP2}), with the additional nodes and connections plotted in green. The same nodes are then used to show the resulting E-FNN in Figure~\ref{Extended2}.  The architecture and total number of parameters for all networks pictured here are given in Table~\ref{Tab:connects}. \label{fig:nodes}}
\end{figure}

\begin{table}[!htbp]
  \centering 
    \renewcommand{\tabcolsep}{0.4cm}
    \renewcommand{\arraystretch}{1.3}
    {\scriptsize
    \begin{tabular}{@{}rcc@{}}
      \toprule
      Network type & Architecture & Number of parameters \\\midrule
      FNN & $4| 9| 9| 4$ & 175\\[5pt]
      KDL-NN & $(2,2) | (3, 3)| (3, 3)| (2, 2)$ & 157\\
      corresponding E-FNN & $4| 6| 9| 9| 9| 6| 4$ & 361\\[5pt]
      KDL-NN ($k = 2$) & $(2, 2) |^2 (3, 3)|^2 (3, 3)|^2 (2, 2)$ & 292\\ 
      corresponding E-FNN & $4| 12| 9| 18| 9| 12| 4$ & 700\\ 
    \bottomrule
    \end{tabular}
  }
    \renewcommand{\arraystretch}{1}
    \renewcommand{\tabcolsep}{12pt}
  \caption{Architectures and parameter counts for networks shown in Figure~\ref{fig:nodes}.}\label{Tab:connects}
\end{table}

\subsection{Truncation error analysis for KDL-NN architectures}

In this section, we analyze the theoretical performance of \textit{simplified} KDL-NN architectures based on the Kronecker product approximation error estimate in Lemma~\ref{lemma:kron-approx}. Because we use these estimates that are based on a truncated SVD, we expect the performance of a trained KDL-NN to perform much better. In addition, the simplified networks have smaller model capacity than the practical networks we use (as described next). Nevertheless, these results show that KDL-NNs can inherit accuracy from the approximability of fully connected weight matrices by Kronecker sums. 

We recall that our KDL-NN architectures involve two activation functions $\phi_1$ and $\phi_2$ in \eqref{Eqn:KPlayer2}. The \textit{simplified} networks we analyze below take $\phi_2$ as the identity (linear) function, which allows us to directly tie the approximation error to Kronecker sum truncation errors of matrices. In practice (i.e., in our numerical results) we use non-identity activation functions. 

To state our results, we require a norm on the KDL-NN parameters \eqref{eq:theta-def}. Consider $\theta^{(\ell)}$, the KDL-NN parameters associated to layer $\ell$,
\begin{align*}
  \theta^{(\ell)} = \left\{ \theta_L^{(\ell)}, \theta_R^{(\ell)} \right\},
\end{align*}
and introduce a particular function $|\cdot|_\kappa$ operating on such layer-$\ell$ parameters, 
\begin{align*}
  \left|\theta^{(\ell)}\right|_\kappa^2 \coloneqq \sum_{q=1}^k \left\| W_L^{(\ell,q)} \right\|_F \left\| W_R^{(\ell,q)} \right\|_F.
\end{align*}
It is straightforward to show that this function is non-negative but dominated by the function $\theta^{(\ell)} \mapsto \left\| \mathrm{vec}(\theta^{(\ell)}) \right\|_2$, so that $|\cdot|_\kappa$ is weaker than a corresponding standard $\ell^2$ norm on the vectorized parameters. 

We assume that an FNN architecture is given, and that an associated KDL-NN architecture is prescribed (e.g., so that we have a well-defined matricization of input/output vectors). Our results are relative to a \textit{trained} FNN $\bs{y}$ in \eqref{eq:htilde-def}, which we express in matricized form:
\begin{align*}
  Y &= Y(X), & Y &= \mathrm{mat}(\bs{y}), & X &= \mathrm{mat}(\bs{x}).
\end{align*}
This trained FNN has weight matrices $W^{(\ell)}$. By constructing a KDL-NN whose parameters correspond to approximating $W^{(\ell)}$ via Kronecker product sums, one expects that the classical Kronecker sum bounds in Lemma~\ref{lemma:kron-approx} can be leveraged for error estimates in the KDL-NN case. Our results appear in terms of $\ell^2$ norms of truncated singular values from a Kronecker product rearrangement: 
\begin{align}\label{eq:kdl-truncation}
  \epsilon^{(\ell,k)} \coloneqq \sqrt{\sum_{i=k+1}^{r^{(\ell)}} (\sigma_i^{(\ell)})^2 } \stackrel{\textrm{Lemma } \ref{lemma:kron-approx}}{=} \min_{L^{(j)} \in \R^{m_1 \times n_1}, R^{(j)} \in \R^{m_2 \times n_2}, j \in [k]} \left\| W^{(\ell)} - \sum_{j=1}^k L^{(\ell,j)} \otimes R^{(\ell,j)} \right\|_F^2
\end{align}
where $\left\{\sigma_i^{(\ell)}\right\}_{i \in r^{(\ell)}}$ are the ordered singular values of the rank-$r^{(\ell)}$ rearrangement of the FNN weight matrix $W^{(\ell)}$. 
Our main technical result characterizes errors for a simplified KDL-NN relative to an FNN using the ``norms'' $\left|\theta^{(\ell)}\right|_\kappa$ and the Kronecker rank truncation parameters $\epsilon^{(\ell)}$.
\begin{theorem}\label{thm:main-result}
  Suppose a trained FNN $Y$ is given, and consider a rank-$k$ KDL-NN network $Y^\kappa_k$ in \eqref{eq:kdl-nn} where we choose the inner activation function $\phi_2(x) \equiv x$, and assume that $\phi_1$ is $c_1$-Lipschitz for some $c_1 > 0$. Then given a data pair $(\bs{x}_m, \bs{y}_m)$ with corresponding matricization $(X_m, Y_m)$, training $Y^\kappa_k$ over the KDL-NN parameters $\theta$ yields,
\begin{align}\label{eq:main-result}
  \argmin_{\theta} \left\| Y(X) - Y_k^\kappa(X;\theta) \right\|_F \leq \sum_{i=2}^L \epsilon^{(i,k)} \left(\prod_{j=i+1}^L \left|\theta^{(j)}\right|_{\kappa}^2 \right) \sum_{k=1}^{i-1} c_1^{L-k} \left\|f_k(0) \right\|_F.
\end{align}
  where the parameters $\theta^{(\ell)}$ are defined by setting $W_L^{(\ell,i)} = L^{(\ell,i)}$ and $W_R^{(\ell,i)} = R^{(\ell,i)}$, with $L^{(\ell,i)}$ and $R^{(\ell,i)}$ defined as in \eqref{eq:kdl-truncation} and $f_1$ is as defined in \eqref{eq:f1-def}.
\end{theorem}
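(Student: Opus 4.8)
The plan is to prove the stated bound by exhibiting an explicit admissible choice of KDL parameters and estimating the resulting error; since the left-hand side is really the \emph{minimum} of the training objective over $\theta$ (the $\argmin$ notation notwithstanding), any single admissible choice furnishes an upper bound. Concretely, I would build the approximating KDL-NN by taking $\phi_1$ equal to the FNN activation $\phi$, setting $W_L^{(\ell,i)} = L^{(\ell,i)}$ and $W_R^{(\ell,i)} = R^{(\ell,i)}$ to be the optimal rank-$k$ Kronecker factors of $W^{(\ell)}$ from Lemma~\ref{lemma:kron-approx}, and choosing the KDL biases $B_L^{(\ell,i)}, B_R^{(\ell,i)}$ so that the effective layer bias $\sum_i (W_R^{(\ell,i)} B_L^{(\ell,i)} + B_R^{(\ell,i)})$ reproduces the matricized FNN bias. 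With $\phi_2$ the identity, each KDL map $f_\ell$ then has the closed form $f_\ell(A) = \phi\big(\sum_{i=1}^k R^{(\ell,i)} A L^{(\ell,i)} + \tilde B^{(\ell)}\big)$, which is exactly the matricized FNN layer map $F_\ell$ except that its Kronecker sum is truncated from $r^{(\ell)}$ to $k$ terms. The target error $\|Y(X) - Y_k^\kappa(X)\|_F = \|(F_L\circ\cdots\circ F_2)(X) - (f_L\circ\cdots\circ f_2)(X)\|_F$ is then controlled by a propagation-of-error argument through the composition.

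The argument rests on two per-layer estimates. First, a \emph{truncation estimate}: for any fixed input $A$, $F_\ell(A)$ and $f_\ell(A)$ differ only in the omitted Kronecker terms, so using the vectorization/rearrangement identity $\mathrm{vec}(\sum_i R^{(\ell,i)} A L^{(\ell,i)}) = (\sum_i L^{(\ell,i)T}\otimes R^{(\ell,i)})\mathrm{vec}(A)$ together with $c_1$-Lipschitz continuity of $\phi$ and Lemma~\ref{lemma:kron-approx} gives $\|F_\ell(A) - f_\ell(A)\|_F \le c_1\,\big\|\sum_{i=k+1}^{r^{(\ell)}} L^{(\ell,i)T}\otimes R^{(\ell,i)}\big\|_2\,\|A\|_F \le c_1\,\epsilon^{(\ell,k)}\|A\|_F$, where the last step bounds the spectral norm by the Frobenius norm and invokes the optimal-truncation value $\epsilon^{(\ell,k)}$. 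Second, a \emph{Lipschitz estimate} for the KDL map: since $A \mapsto \sum_i R^{(\ell,i)} A L^{(\ell,i)}$ is linear, submultiplicativity of the spectral norm, the bound $\|\cdot\|_2 \le \|\cdot\|_F$, and the very definition of $|\cdot|_\kappa$ give $\|f_\ell(A) - f_\ell(A')\|_F \le c_1\big(\sum_i \|R^{(\ell,i)}\|_F\|L^{(\ell,i)}\|_F\big)\|A - A'\|_F = c_1 |\theta^{(\ell)}|_\kappa^2 \|A-A'\|_F$; this is precisely why the ``norm'' $|\cdot|_\kappa$ is defined as a sum of products of factor Frobenius norms.

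With these in hand I would telescope by swapping one layer at a time, replacing $F_i$ by $f_i$ starting from the input end, so that the outer maps transporting each single-layer discrepancy to the output are the \emph{KDL} maps $f_{i+1},\dots,f_L$; their composite Lipschitz constant is $\prod_{j=i+1}^L c_1|\theta^{(j)}|_\kappa^2$, while the discrepancy introduced at layer $i$ is the truncation error evaluated at the FNN state $Z_{i-1}$, namely $c_1\epsilon^{(i,k)}\|Z_{i-1}\|_F$. Summing over $i=2,\dots,L$ yields $\|Y(X)-Y_k^\kappa(X)\|_F \le \sum_{i=2}^L \epsilon^{(i,k)}\big(\prod_{j=i+1}^L |\theta^{(j)}|_\kappa^2\big)\,\|Z_{i-1}\|_F$ up to collected powers of $c_1$. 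The remaining step bounds the state norm $\|Z_{i-1}\|_F$: unrolling the FNN recursion $\|Z_\ell\|_F \le c_1(\text{layer factor})\|Z_{\ell-1}\|_F + \|F_\ell(0)\|_F$, and using that the matched biases force $F_\ell(0) = f_\ell(0)$ together with $f_1(0)=X$ from \eqref{eq:f1-def}, produces a sum of the form $\sum_{p=1}^{i-1} c_1^{L-p}\|f_p(0)\|_F$, which is the factor appearing in \eqref{eq:main-result}.

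I expect the main obstacle to be the bookkeeping that makes the constants line up with the precise form of \eqref{eq:main-result}: choosing the KDL biases so that $F_\ell(0)=f_\ell(0)$ (so the offset contributions can be written through $\|f_p(0)\|_F$), tracking which layers contribute Lipschitz factors versus truncation factors in the telescope, and being careful that the per-layer discrepancy is anchored at the FNN state rather than the KDL state. The mathematical content is entirely contained in the two per-layer estimates and Lemma~\ref{lemma:kron-approx}; the labor is in assembling them into the exact recursive bound while respecting the (somewhat loose) powers of $c_1$ and the matricization/reshaping conventions between consecutive layers.
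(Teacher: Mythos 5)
Your proposal is correct and follows essentially the same route as the paper's proof in Appendix~\ref{app:proof}: the same explicit parameter choice ($W_L^{(\ell,i)}=L^{(\ell,i)}$, $W_R^{(\ell,i)}=R^{(\ell,i)}$, matched biases, identity $\phi_2$), the same two per-layer ingredients (the KPD truncation bound of Lemmas~\ref{Lem:3}--\ref{Lem:4} and the Lipschitz bound $c_1|\theta^{(\ell)}|_\kappa^2$ of Lemma~\ref{Lem:Lip}), and your layer-by-layer telescoping is exactly the paper's recursion $E^{(\ell)} \leq C^{(\ell)}E^{(\ell-1)} + e^{(\ell)}$ unrolled, with the truncation discrepancy anchored at the FNN state and transported by the KDL maps. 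Your closing caveat about the ``layer factor'' in the state-norm unrolling is apt, since the paper's own derivation of the $\sum_k c_1^{L-k}\|f_k(0)\|_F$ term quietly treats $f_{\ell-1}$ as merely $c_1$-Lipschitz there as well.
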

Under some additional assumptions, the above can be simplified to more clearly reveal the components of the bound.
\begin{corollary}
  Suppose $f_{\ell}(0) = 0$ for $\ell= 2, \ldots, L$ and the activation function $\phi_1(\bm{x}) = \phi(x)$ is 1-Lipschitz (ReLu, Tanh, and Sigmoid are such examples \cite{Scaman}), then Theorem~\ref{thm:main-result} holds with
\begin{equation*}
\argmin_{\theta} \left\| Y(X) - Y_k^\kappa(X;\theta) \right\|_F \leq \left( \sum_{i=2}^L \epsilon^{(i,k)} \prod_{j=i+1}^L \left|\theta^{(j)}\right|_{\kappa}^2 \right) \left\| X \right\|_F.
\end{equation*}
\end{corollary}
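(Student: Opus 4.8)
The plan is to derive the corollary directly from Theorem~\ref{thm:main-result} by substituting its two extra hypotheses into the right-hand side of \eqref{eq:main-result}; no new estimates are required. First I would invoke the bound of Theorem~\ref{thm:main-result} verbatim, which is legitimate because the corollary's hypotheses are a special case of the theorem's: the activation $\phi_1$ being $1$-Lipschitz means we may take $c_1 = 1$, and the theorem already permits $\phi_2 \equiv \mathrm{id}$. So the starting point is exactly the expression $\sum_{i=2}^L \epsilon^{(i,k)}\bigl(\prod_{j=i+1}^L |\theta^{(j)}|_\kappa^2\bigr)\sum_{k=1}^{i-1} c_1^{L-k}\|f_k(0)\|_F$.

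The heart of the argument is the simplification of the inner sum $\sum_{k=1}^{i-1} c_1^{L-k}\|f_k(0)\|_F$. Setting $c_1 = 1$ erases the geometric weights $c_1^{L-k}$, leaving $\sum_{k=1}^{i-1}\|f_k(0)\|_F$ (here the summation dummy is $k$, not the fixed Kronecker rank). The hypothesis $f_\ell(0) = 0$ for $\ell = 2,\ldots,L$ then annihilates every term in this sum except the one at index $k=1$; since the outer sum runs over $i \geq 2$, the value $k=1$ always lies in the admissible range $1 \le k \le i-1$, so for each $i$ exactly one term survives. Recalling from \eqref{eq:f1-def} that $f_1(0) = X$, the entire inner sum collapses to $\|X\|_F$, independently of $i$.

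Finally, because the surviving factor $\|X\|_F$ no longer depends on the outer index $i$, I would factor it out of the sum over $i$, giving
\[
\sum_{i=2}^L \epsilon^{(i,k)}\left(\prod_{j=i+1}^L \left|\theta^{(j)}\right|_\kappa^2\right)\|X\|_F = \left(\sum_{i=2}^L \epsilon^{(i,k)}\prod_{j=i+1}^L \left|\theta^{(j)}\right|_\kappa^2\right)\|X\|_F,
\]
which is precisely the claimed bound, completing the derivation.

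The argument presents no genuine obstacle; it is entirely bookkeeping on top of Theorem~\ref{thm:main-result}. The one point that warrants care is the notational collision in \eqref{eq:main-result} between the fixed Kronecker rank $k$ and the summation index $k$ of the inner sum: one must be careful to apply the hypothesis $f_\ell(0)=0$ to the latter. I would also explicitly check the boundary case $i=2$, where the inner sum consists of the single term $k=1$, to confirm that no index $i$ in the range $\{2,\ldots,L\}$ ever picks up a nonzero $f_\ell(0)$ with $\ell \ge 2$.
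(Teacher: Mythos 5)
Your proposal is correct and matches the paper's approach: the paper states this corollary without a separate proof, treating it precisely as the direct substitution you carry out — setting $c_1 = 1$ in \eqref{eq:main-result}, using $f_\ell(0) = 0$ for $\ell \geq 2$ to collapse the inner sum to the single $k=1$ term $\|f_1(0)\|_F = \|X\|_F$ via \eqref{eq:f1-def}, and factoring that constant out of the sum over $i$. Your remark about the notational collision between the Kronecker rank $k$ and the inner summation index $k$ is a fair observation about the paper's statement of Theorem~\ref{thm:main-result}, and you resolve it the intended way.
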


We provide the proof of Theorem~\ref{thm:main-result} in Appendix~\ref{app:proof}. This theorem provides a theoretical connection between the size of the KPD truncation errors $\epsilon^{(i,k)}$ of a trained FNN and the predictive performance of a KDL-NN relative to this FNN. This result does \textit{not} immediately translate into a practical error estimate since (a) we have made the simplifying assumption that $\phi_2$ is the identity, and (b) training a KDL-NN does \textit{not} involve KPD truncations from weights of an FNN. We also do not expect this bound to be sharp since its proof (see Appendix~\ref{app:proof}) invokes the triangle inequality several times.

Nevertheless, the components of the estimate in \eqref{eq:main-result} give insight into when we expect KDL-NN approaches to work well: First, if all the $\epsilon^{(i,k)}$, $i = 2, \ldots, L$ are small, then we expect that a KDL-NN can perform at least as well as a corresponding FNN. I.e., when trained weight matrices of an FNN have ``small'' Kronecker rank, we expect KDL-NNs to perform well. The remaining terms can be interpreted as quantities that measure how well-behaved a KDL-NN is. For example, appearance of the $|\theta^{(j)}|_\kappa$ functions indicates that the size of the weight matrices affects performance, and $f_k(0)$ is the output of a layer-$k$ KDL function with zero input. Note in particular that $f_1(0) = X$, so that the norm of the input $X$ to the KDL-NN affects the bound, as expected.

\subsection{Numerical Cost of Forward Operations and Back-Propagation}\label{ssec:cost}

We now discuss the computational cost of a KDL-NN and give a broad technical explanation of why we expect the KDL-NN to be more efficient in practice. 
Given a KDL-NN defined by \eqref{eq:kdl-nn}, gradient descent updates are performed on layer $\ell$ from $L$ to $2$ via the relations,
\begin{align*}
W_R^{(\ell,i)} &\leftarrow (1-\lambda\eta)W_R^{(\ell,i)} - \eta \Delta_1^{(\ell,i)}A_L^{(\ell,i)T}\\
W_L^{(\ell,i)} &\leftarrow (1-\lambda\eta)W_L^{(\ell,i)} - \eta A_R^{(\ell-1)T}\Delta_2^{(\ell,i)}\\
B_R^{(\ell,i)} &\leftarrow (1-\lambda\eta)B_R^{(\ell,i)} - \eta \Delta_1^{(\ell,i)}\\
B_L^{(\ell,i)} &\leftarrow (1-\lambda\eta)B_L^{(\ell,i)} - \eta \Delta_2^{(\ell,i)},
\end{align*}
where the intermediate matrices $\Delta_1^{(\ell,i)}$, $\Delta_2^{(\ell,i)}$ and $\Gamma^{(\ell)}$ are defined in Appendix~\ref{app:back}.
A simple implementation of back-propagation with learning rate $\eta$ for KDL pair $\ell$ from layer pairs $L$ to $2$ is given in Algorithm~\ref{alg:bp}.

\begin{algorithm}
\caption{KDL Back-Propagation}
\label{alg:bp}
\begin{algorithmic}[1]
\STATE{Given: $\ell$, $\lambda$, $\phi_1$, $\phi_2$, $Z_R^{(\ell,i)}$, $\Gamma^{(\ell)}$, $Z_L^{(\ell,i)}$, $W_R^{(\ell,i)}$, $W_L^{(\ell,i)}$, $B_R^{(\ell,i)}$, $B_L^{(\ell,i)}$}
\STATE{Initialize $\Gamma^{(\ell-1)} = 0$}
\FOR{ $i = 1$ to $k$ }
\STATE{$\Delta_1^{(\ell,i)} = \phi_1'(Z_R^{(\ell,i)})\circ \Gamma^{(\ell)}$}
\STATE{$\Delta_2 = ((W_R^{(\ell,i)})^T\Delta_1^{(\ell,i)})\circ \phi_2'(Z_L^{(\ell,i)})$}
\STATE{$\Gamma^{(\ell-1)} \leftarrow \Gamma^{(\ell-1)} + \Delta_2^{(\ell,i)} (W_L^{(\ell,i)})^T$}
\STATE{$W_R^{(\ell,i)} \leftarrow (1-\lambda)W_R^{(\ell,i)} - \eta\Delta_1^{(\ell,i)} (A_L^{(\ell,i)})^T$}
\STATE{$W_L^{(\ell,i)} \leftarrow (1-\lambda)W_L^{(\ell,i)} - \eta(A_R^{(\ell-1)T}\Delta_2^{(\ell,i)})$}
\STATE{$B_R^{(\ell,i)} \leftarrow (1-\lambda)B_R^{(\ell,i)} - \eta\Delta_1^{(\ell,i)}$}
\STATE{$B_L^{(\ell,i)} \leftarrow (1-\lambda)B_L^{(\ell,i)} - \eta \Delta_2^{(\ell,i)}$}
\ENDFOR
\end{algorithmic}
\end{algorithm}

The cost of forward operations for an FNN layer \eqref{eq:fnn} with $W\in\mathbb{R}^{m_1m_2\times n_1n_2}$ is dominated by $\mathcal{O}(m_1m_2n_1n_2)$ flops and the cost of the activation function operating on $m_1m_2$ values. In addition, back-propagation requires action by $\phi'$ on $m_1m_2$ values, and is then dominated by $\mathcal{O}(m_1m_2n_1n_2)$ flops for the update.

In comparison, the KDL-NN formulation in \eqref{eq:kdl-nn} is dominated by $\mathcal{O}(m_2n_1(n_2 + m_1))$ in total, with activation function operating on $m_1n_2$ and $m_1m_2$ elements respectively.  Back-propagation then requires action by $\phi'$ on $m_1n_2$ and $m_1m_2$ values respectively.  Updates on the KDL weight matrices are dominated by $\mathcal{O}(m_1n_2(m_2 + n_1))$ flops.  The dominant cost for the KDL-NNs is minimized when $m_2 \approx n_1 \approx m_1 \approx n_2$, reflecting the same savings that one achieves in matrix multiplication involving reshaping of Kronecker product matrices. We show in our results that these savings are considerable in practice.

\section{Numerical Results}

We compare our deep learning performance (training cost and test data accuracy) for our novel KDL-NN architecture against FNNs and E-FNNs under two main objectives.  The first objective is to isolate the effects of implementing a KDL-NN on the training time and accuracy, for which a ``bare bones'' implementation in \textsc{Matlab} is utilized.  The second objective, covered in Subsection~\ref{sub:Tensorflow}, is to determine the ease of implementing KDLs in an existing framework, for which KDL-NNs and convolutional neural networks (CNNs) with KDLs are implemented in Tensorflow.   

Here, the networks are trained first on the function
$$ f(\bm{x}) = \left(\frac{\prod_{k=1}^{\lceil n_1/2 \rceil}\left( 1 + 4^k x_k^2 \right)}{\prod_{k=\lceil n_1/2 \rceil+1}^{n_1}\left( 100 + 5x_k \right)}\right)^\frac{1}{d} $$
evaluted for normal random $x_k\in (-1,1)$ for $k=1,\cdots,n_1$, similar to \cite{adcock,chkifa}, and then on the Bike Sharing Dataset (BSD) \cite{Fanaee-T}, the BlogFeedback data set (BF) \cite{Buza}, and the MNIST data set \cite{Lecun}.  We will also compare results combined with a convolutional neural network (CNN) on the CIFAR-10 data set \cite{Krizhevsky}.  We summarize the sizes of these data sets and the number of inputs/outputs in Table~\ref{Tab:sizes}.

\begin{table}[!htbp]
  \begin{center}
  \resizebox{0.8\textwidth}{!}{
    \renewcommand{\tabcolsep}{0.4cm}
    \renewcommand{\arraystretch}{1.3}
    {\scriptsize
    \begin{tabular}{@{}rrrrc@{}}
      \toprule
      Data set & Inputs $n_1$ & Outputs $n_L$ & $(M, \widetilde{M})$ \\\midrule
      $f(\bm{x})$ \cite{adcock,chkifa} & 8 & 1 & (10000, 1000) \\
      BSD \cite{Fanaee-T} & 14 &  1 & (13903, 3476) \\
      BF \cite{Buza} & 280 & 1 & (41918, 10479) \\
      MNIST \cite{Lecun} & 784 & 10 & (60000, 10000) \\
      CIFAR-10 \cite{Krizhevsky} & 3072 & 10 & (50000, 10000)\\
    \bottomrule
    \end{tabular}
  }
    \renewcommand{\arraystretch}{1}
    \renewcommand{\tabcolsep}{12pt}
  }
  \end{center}
  \caption{Available data and input/output sizes $n_1/n_L$ for the examples in this paper. Also shown are the number of training/test points $M/\widetilde{M}$.}\label{Tab:sizes}
\end{table}

While $M$ data points are used to train the networks, $\widetilde{M}$ points $\left(\widetilde{\bs{x}}_m, \widetilde{\bs{y}}_m\right)_{m \in [\widetilde{M}]}$ used as test data to determine performance on unseen inputs. We report standard $\ell^2$ test losses,
\begin{align*}
  \mathcal{L} = \frac{1}{\widetilde{M}} \sum_{m=1}^{\widetilde{M}} \left\| \widetilde{y}_m - \widetilde{y}_\ast(\widetilde{x}_m) \right\|_2^2,
\end{align*}
where $\widetilde{y}_\ast$ is the (vectorized) output of an FNN, E-FNN, or KDL-NN architecture. All tests are run on a system with a 2.10GHz $\times 64$ processor with 125.5 GiB memory using \textsc{Matlab} R2021a.  

In all examples, we prescribe an FNN architecture, make choices for integer factorizations of input and hidden layer sizes (e.g., $9 = 3 \times 3$), and derive a corresponding KDL-NN and subsequently E-FNN architecture. The discussion in Subsection~\ref{ssec:cost} motivates that the choice of integer factorization should maximize the geometric mean of the factor sizes in order to minimize the training cost. The ReLu activation function is used for training $f(\bm{x})$ in Table~\ref{Tab:sizes}, and tanh is used for all other examples.

\subsection{Fixed-rank KDL-NN performance}

We demonstrate the efficacy of KDL-NN architectures with fixed Kronecker rank $k$. Errors and timing are shown for $f(\bm{x})$ with rank~$1$ KDL-NN in Figure~\ref{fig:fun}, and for BSD, BF, and MNIST in Figure~\ref{fig:val} with ranks~1 and 2 KDL-NNs.  This data is summarized in Table~\ref{Tab:times}.   Note that stochastic gradient descent is used here, but the choice of optimizer and minibatch size do not seem to significantly effect the results.  Similar results are available in Figure~\ref{fig:val_a100} in the supplemental materials using the Adam optimizer with a minibatch size of 100.

For matrix-vector multiplication, the Kronecker product operations \eqref{eq:kron-rearrangement} significantly improve practical efficiency when the matrices $L$ and $R$ are large, but not when they are small. This property extends to our KDL-NN architecture. We demonstrate this by prescribing two different FNN architectures for the BSD dataset: BSD(a) corresponds to an FNN where the $L$ and $R$ matrices have ``small'' sizes, and BSD(b) to one where they have ``large'' sizes. The results in Figure~\ref{fig:val} use stochastic gradient descent (SGD) and show that for the BSD(a) architecture, the FNN is more efficient to train than the KDL-NN network due to the small sizes of the factorized matrices. However, for BSD(b), we increase the network size and observe that KDL-NN training is much faster. These observations are consistent with what one would expect for Kronecker product-based matrix multiplication.  In terms of accuracy, we see that the KDL-NN architecture tends to maintain or improve model capacity compared to FNN architectures, even for rank~1 KDL-NNs.

\begin{figure}[!htbp]
  \begin{center}
\subfloat[ $f(\bm{x})$
\label{fun_test}]{\includegraphics[width=0.35\textwidth]{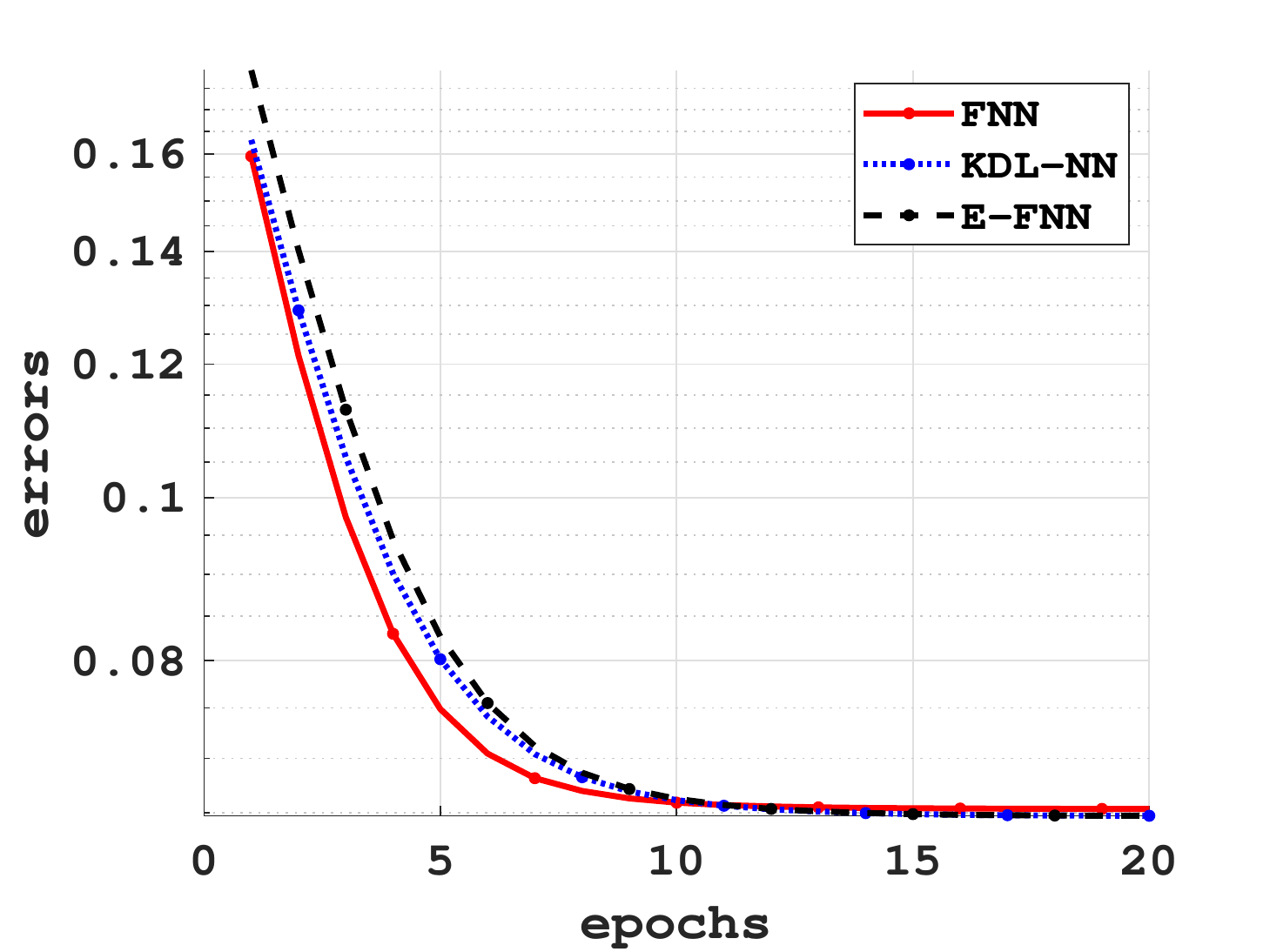}}
\subfloat[ $f(\bm{x})$
\label{fun_hist}]{\includegraphics[width=0.35\textwidth]{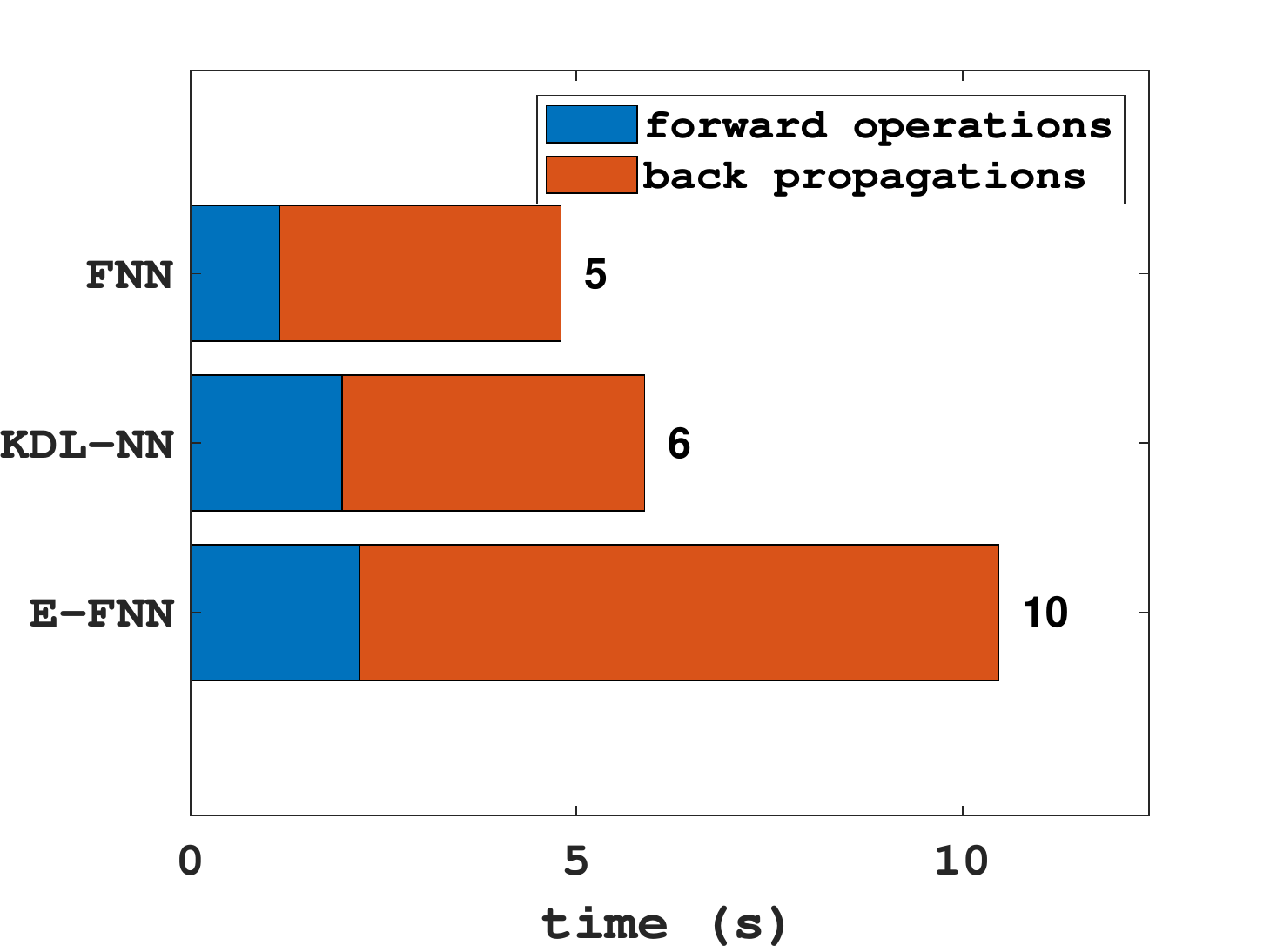}}
\end{center}
\caption{Figures~\ref{fun_test} and \ref{fun_hist} show the test errors and timing, broken down by forward operations and back-propagations, for $f(\bm{x})$ using networks defined in Table~\ref{Tab:times} with Kronecker rank~1. \label{fig:fun}}
\end{figure}

\begin{figure}[!htbp]
  \begin{center}
\includegraphics[width=0.24\textwidth]{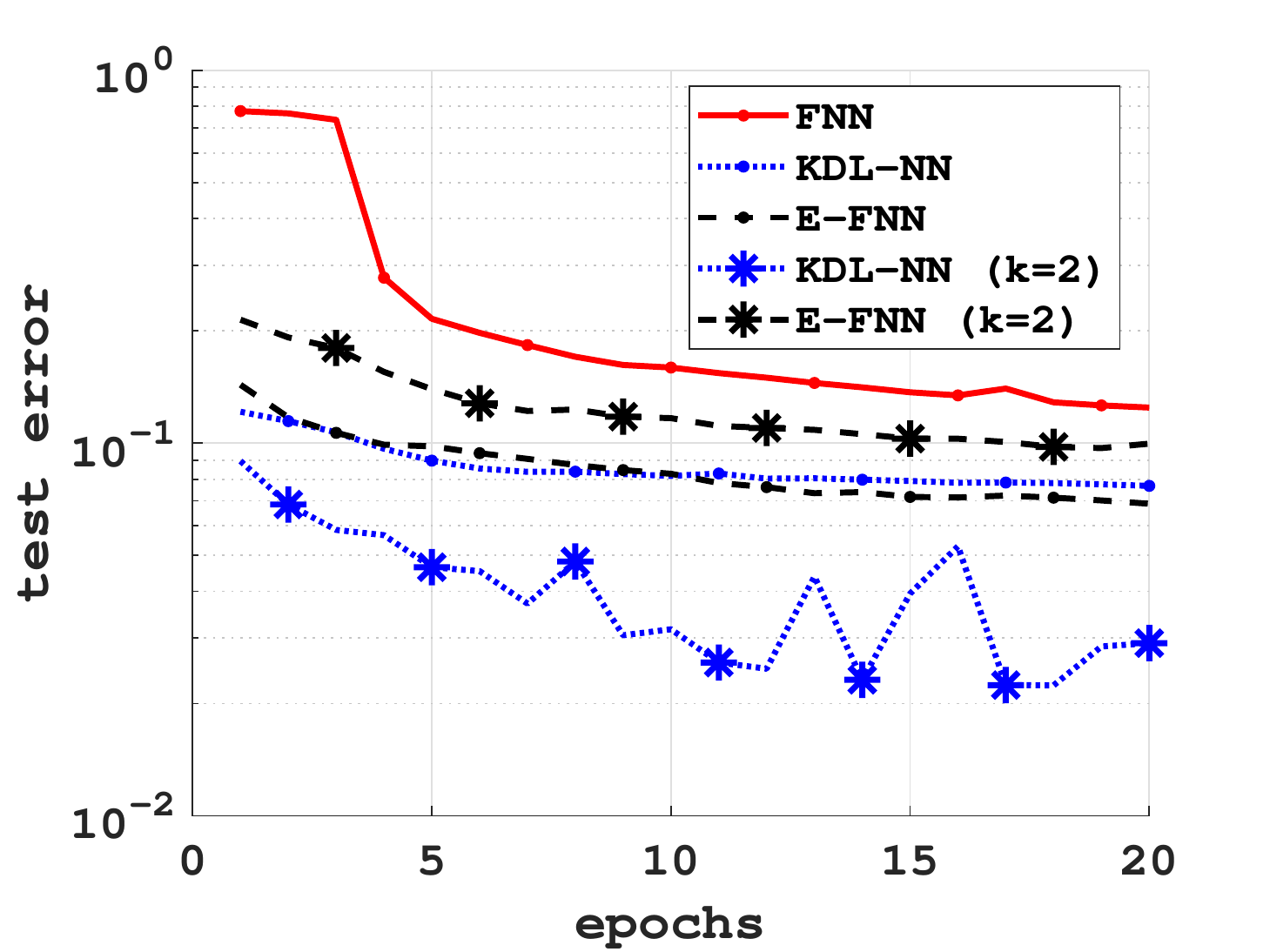}
\includegraphics[width=0.24\textwidth]{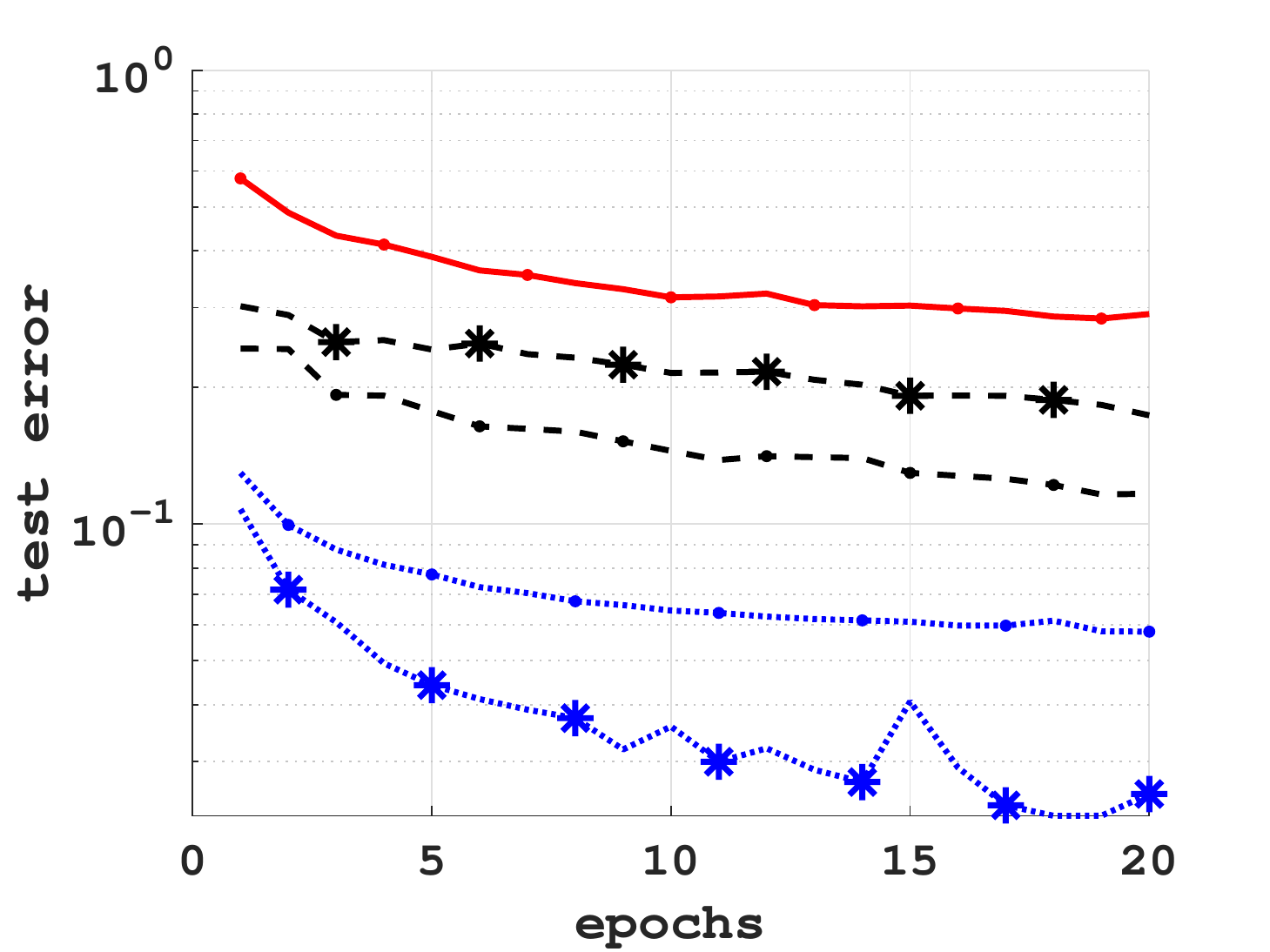}
\includegraphics[width=0.24\textwidth]{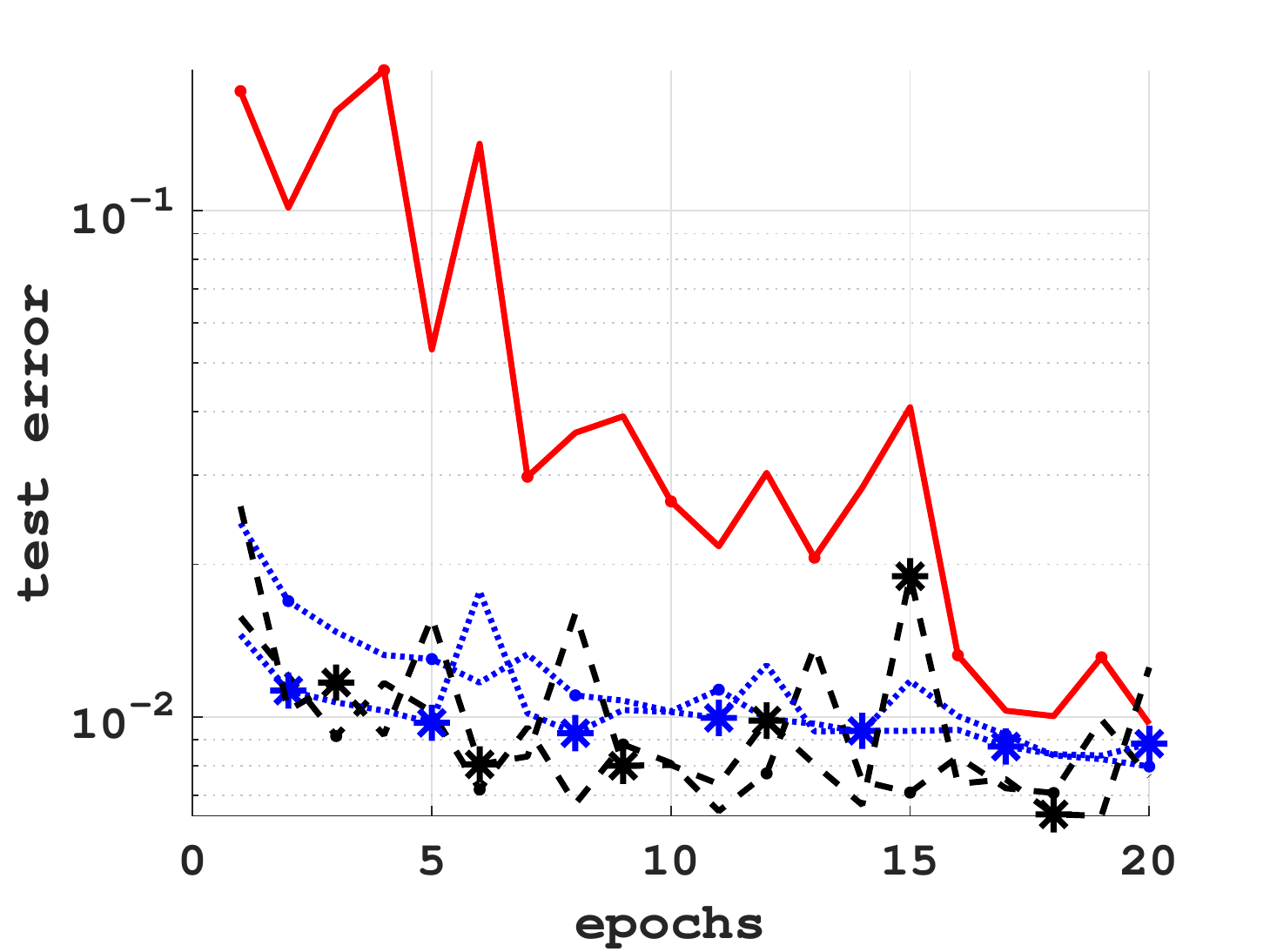}
\includegraphics[width=0.24\textwidth]{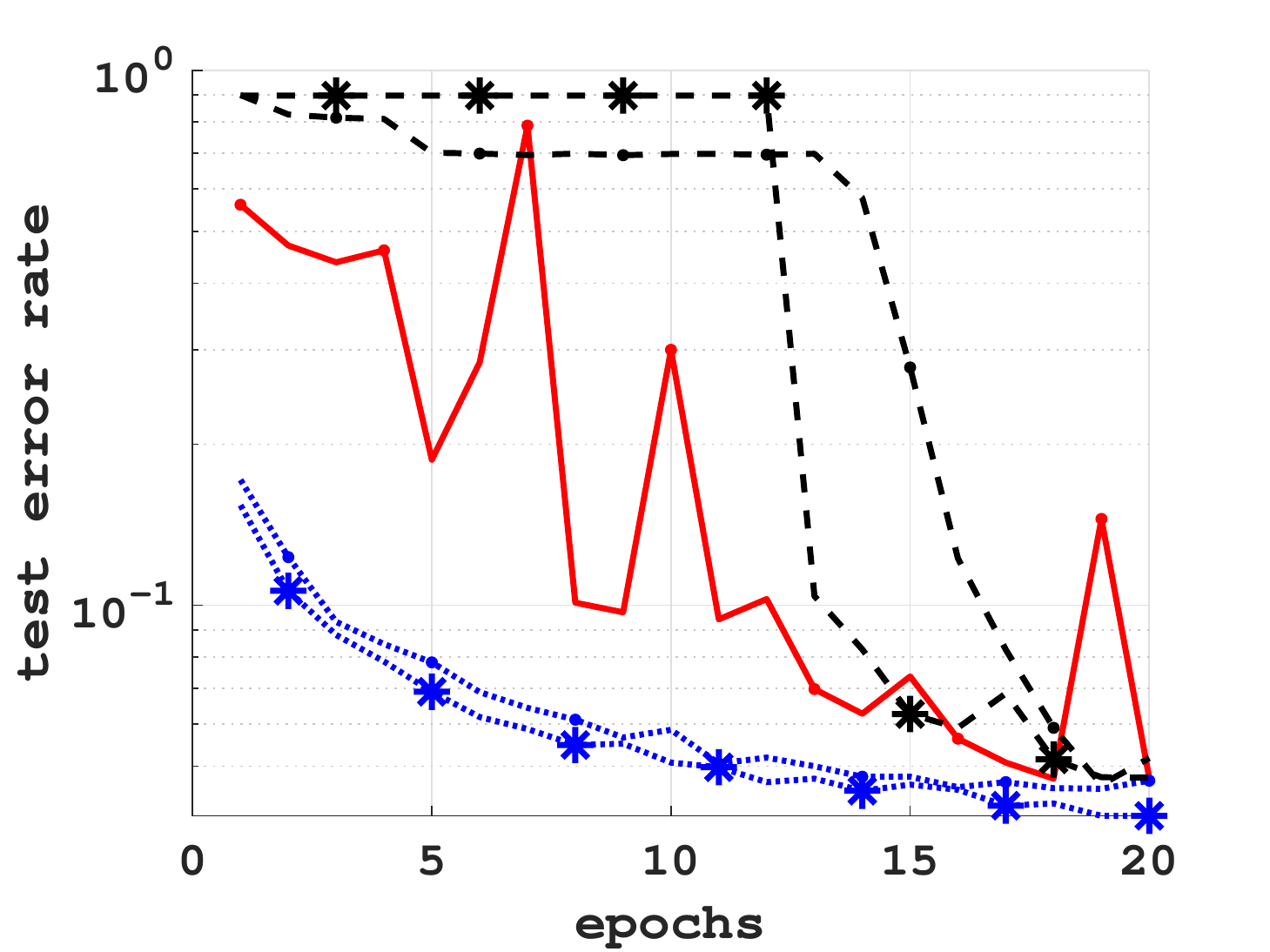}
\\
\subfloat[BSD (a) 
\label{bikes8b}]{\includegraphics[width=0.25\textwidth]{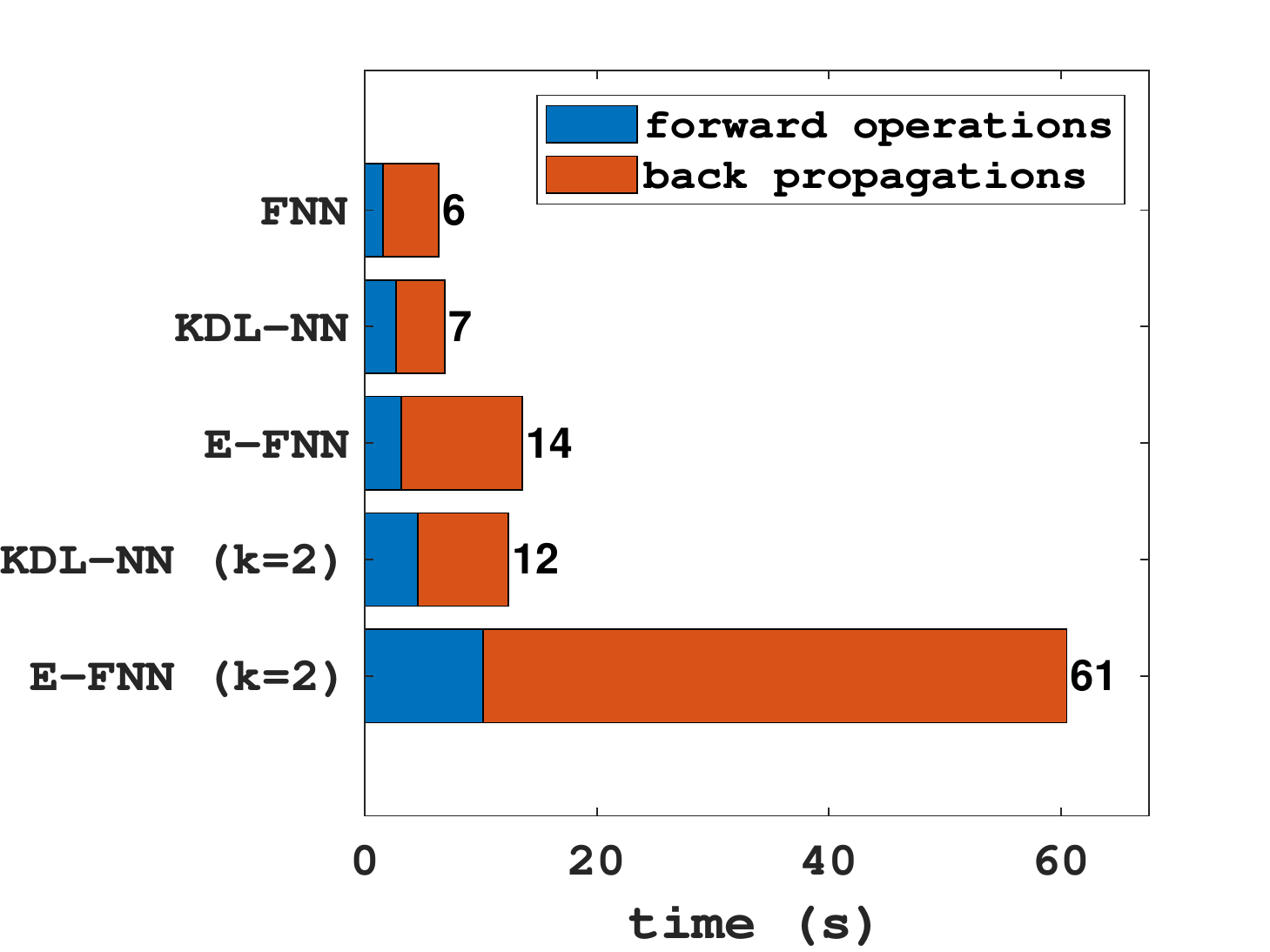}}
\subfloat[BSD (b)  
\label{bikes}]{\includegraphics[width=0.25\textwidth]{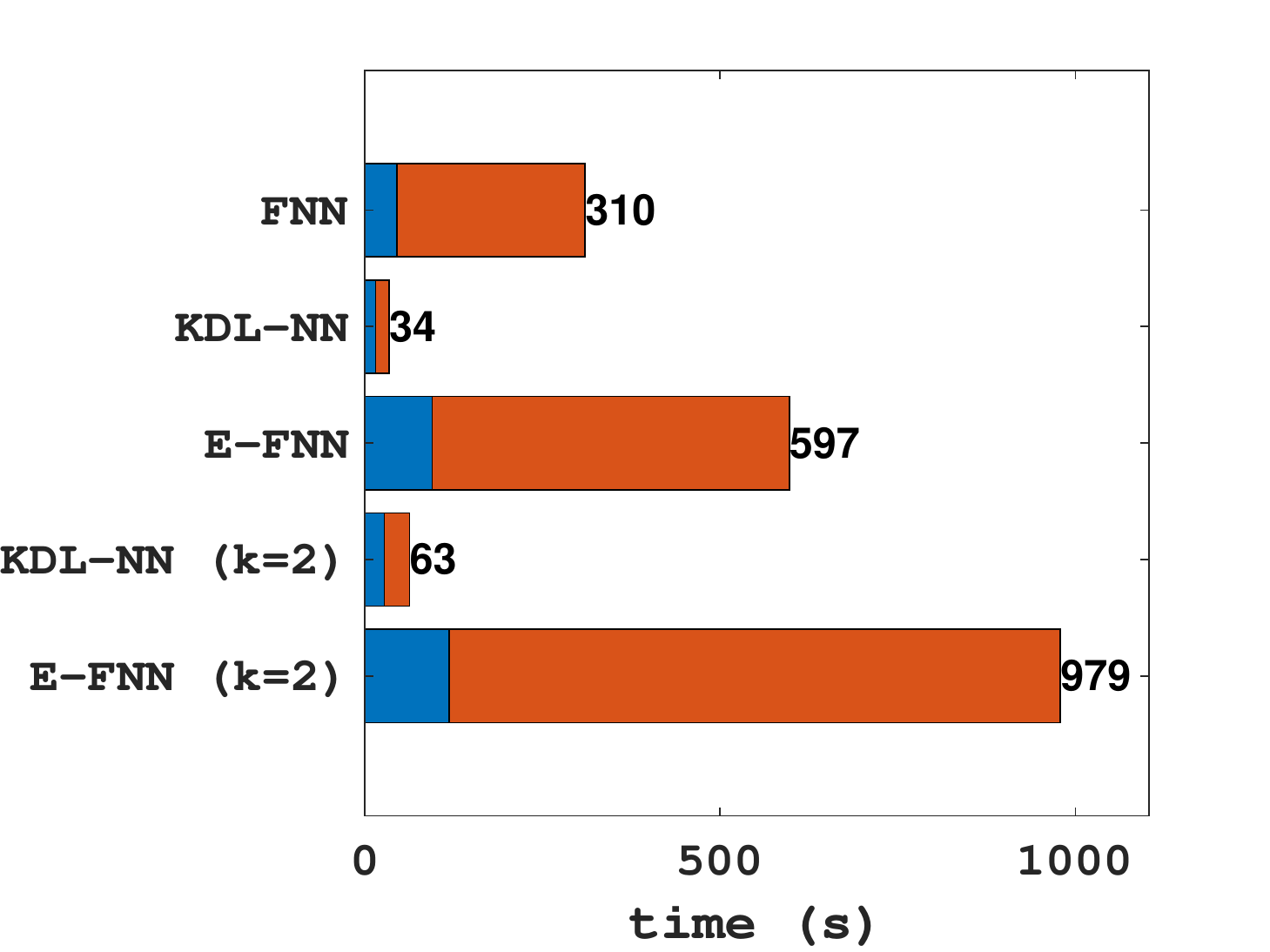}}
\subfloat[BF  
\label{blog}]{\includegraphics[width=0.25\textwidth]{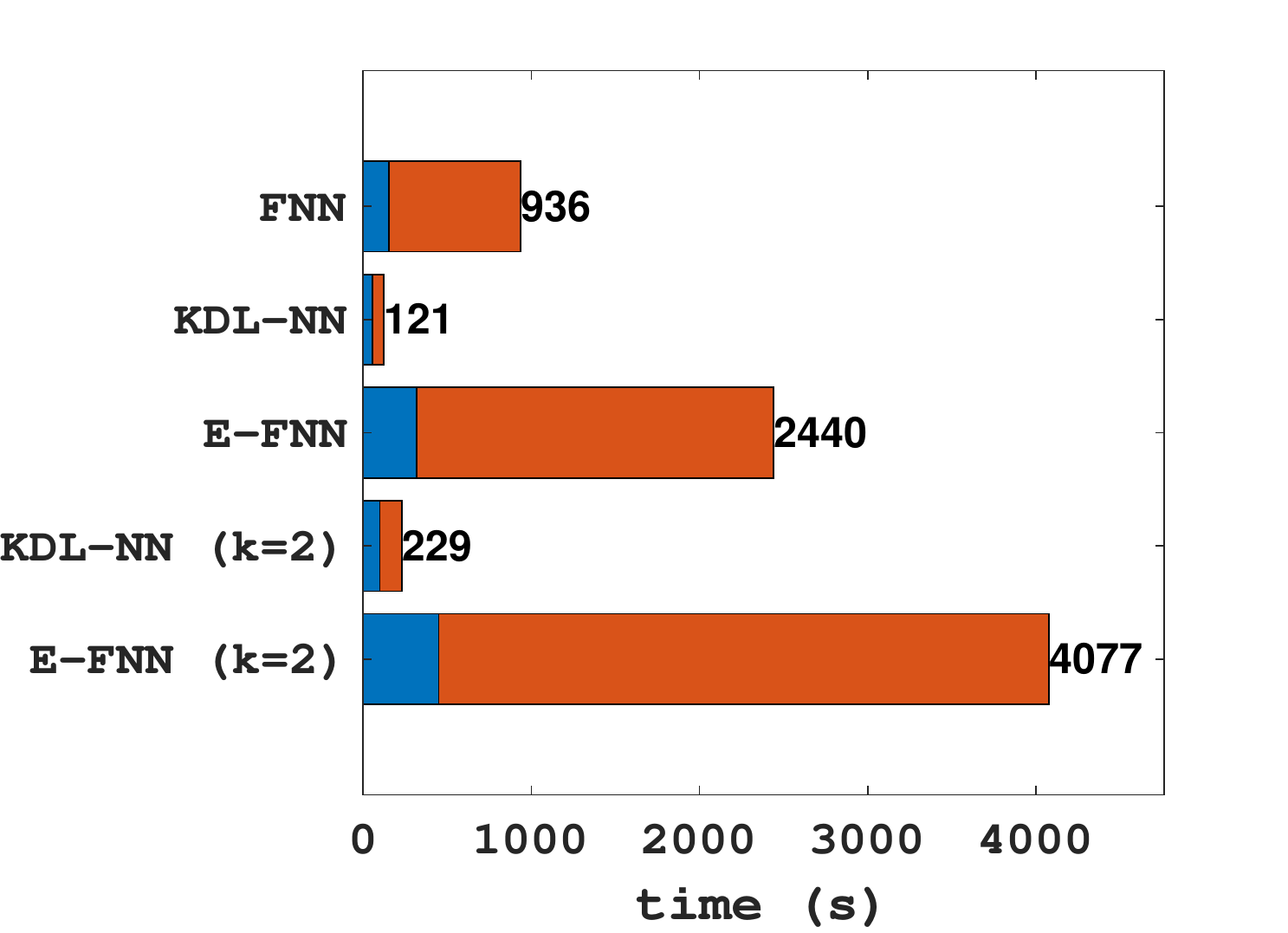}}
\subfloat[MNIST  
\label{MNIST}]{\includegraphics[width=0.25\textwidth]{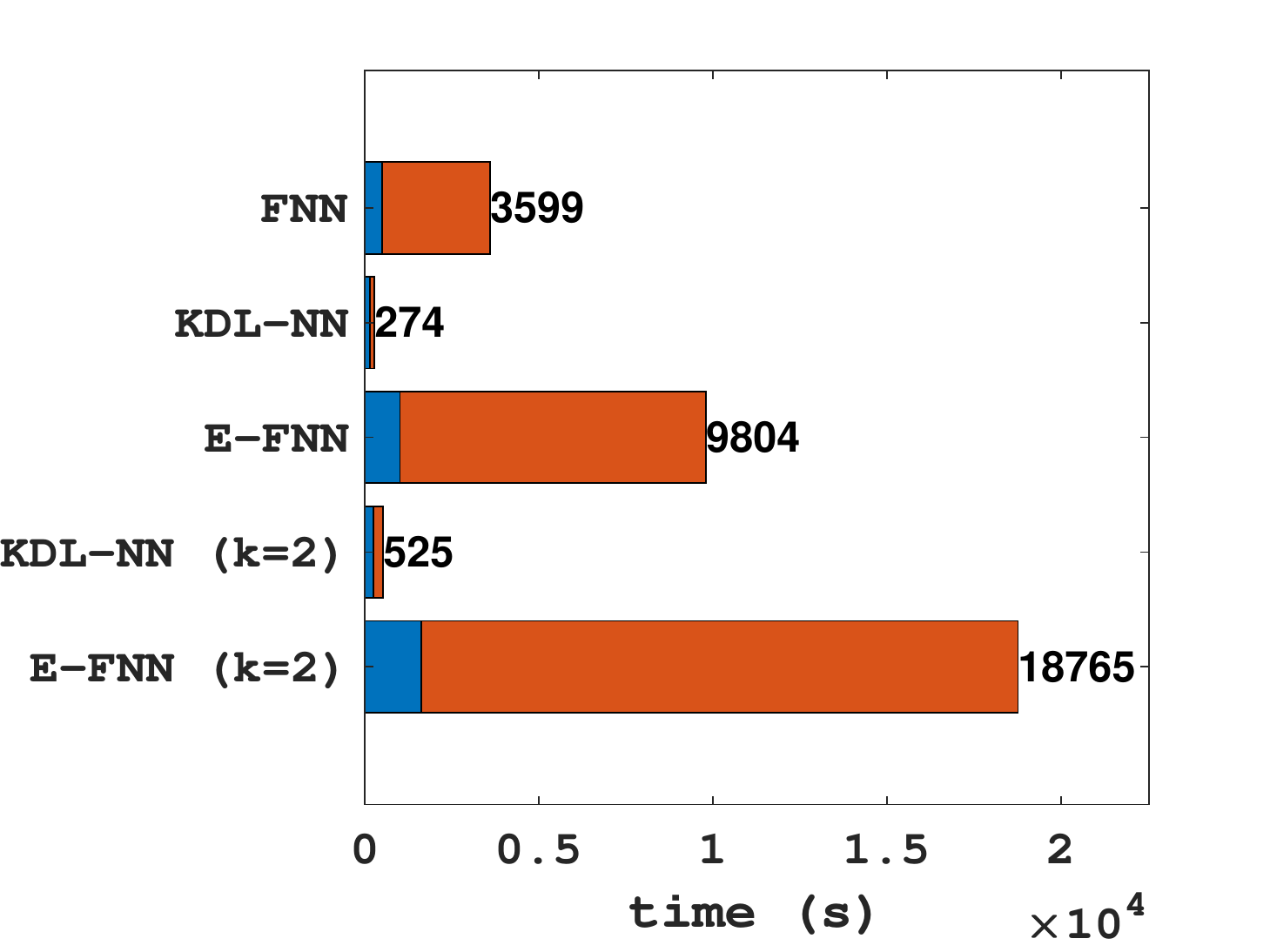}}
\end{center}
\caption{Figures~\ref{bikes8b} to \ref{MNIST} show the test errors and timing, broken down by forward operations and back-propagations, for FNN, KDL-NN, and E-FNN for BSD (a and b), BF, and MNIST respectively as defined in Table~\ref{Tab:times} with Kronecker ranks~$k=1$ and $k=2$. \label{fig:val}}
\end{figure}

\begin{table}[!htbp]
  \begin{center}
  \resizebox{1.0\textwidth}{!}{
    \renewcommand{\tabcolsep}{0.2cm}
    \renewcommand{\arraystretch}{1.3}
    {\scriptsize
    \begin{tabular}{lrcrrrrrr}
      \toprule
      Data set & Network & Architecture & \# Parameters & $\rightarrow$ & $\leftarrow$ & Total time & Test error (\%) \\\midrule \rowcolor{Gray}
               & FNN     & $8|64|64|1$ & 4,801 & \textbf{1.2} & \textbf{3.7} & \textbf{4.8} & 6.53\\\rowcolor{Gray}
    $f(\bm{x}$)& KDL-NN  & $(2,4)|(8,8)|(8,8)|(1,1)$ & \textbf{409} & 2.0 & 3.9 & 5.9 & 6.47 \\\rowcolor{Gray}
               & E-FNN   & $8|16|64|64|64|8|1$ & 10,081 & 2.2 & 8.3 & 10.5 & \textbf{6.47} \\
               & FNN     & $14|64|64|1$ & 5,185 & \textbf{1.6} & 4.8 & \textbf{6.4} & 12.45 \\
               & KDL-NN  & $(2,7)|(8,8)|(8,8)|(1,1)$ & \textbf{433} & 2.7 & \textbf{4.2} & 6.9 & 7.68 \\
      BSD (a)  & E-FNN   & $14|16|64|64|64|8|1$ & 10,177 & 3.1 & 10.4 & 13.6 & 6.87 \\
               & KDL-NN  & $(2,7)|^2(8,8)|^2(8,8)|^2(1,1)$ & 866 & 4.6 & 7.8 & 12.4 & \textbf{2.91} \\
               & E-FNN   & $14|32|64|128|64|16|1$ & 20,225 & 10.2 & 50.3 & 60.5 & 9.96 \\\rowcolor{Gray}
               & FNN     & $14|400|400|1$ & 166,801 & 45.1 & 264.7 & 309.8 & 29.03\\\rowcolor{Gray}
               & KDL-NN  & $(2,7)|(20,20)|(20,20)|(1,1)$ & \textbf{2,281} & \textbf{14.9} & \textbf{19.0} & \textbf{33.9} & 5.79 \\\rowcolor{Gray}
      BSD (b)  & E-FNN   & $14|40|400|400|400|20|1$ & 345,841 & 94.6 & 502.6 & 597.2 & 11.67 \\\rowcolor{Gray}
               & KDL-NN  & $(2,7)|^2(20,20)|^2(20,20)|^2(1,1)$ & 4,562 & 27.1 & 35.6 & 62.7 & \textbf{2.54} \\\rowcolor{Gray}
               & E-FNN   & $14|80|400|800|400|40|1$ & 690,881 & 118.1 & 860.7 & 978.8 & 17.36 \\
               & FNN     & $280|400|400|1$ & 273,201 & 153.7 & 782.3 & 935.9 & 0.97 \\
               & KDL-NN  & $(20,14)|(20,20)|(20,20)|(1,1)$ & \textbf{3,141} & \textbf{54.1} & \textbf{66.9} & \textbf{121.0} & 0.80 \\
      BF       & E-FNN   & $280|400|400|400|400|20|1$ & 601,641 & 318.6 & 2,121.1 & 2,439.7 & \textbf{0.76} \\
               & KDL-NN  & $(20,14)|^2(20,20)|^2(20,20)|^2(1,1)$ & 6,282 & 99.7 & 129.8 & 229.4 & 0.88 \\
               & E-FNN   & $280|800|400|800|400|40|1$ & 1,202,481 & 449.8 & 3,627.6 & 4,077.4 & 1.25 \\\rowcolor{Gray}
               & FNN     & $784|784|784|10$ & 1,238,730 & 486.5 & 3,112.8 & 3,599.4 & 4.75 \\\rowcolor{Gray}
               & KDL-NN  & $(28,28)|(28,28)|(28,28)|(5,2)$ & \textbf{6,534} & \textbf{140.1} & \textbf{134.3} & \textbf{274.3} & 4.70  \\\rowcolor{Gray}
     MNIST     & E-FNN   & $784|784|784|784|784|56|10$ & 2,506,290 & 1,004.9 & 8,799.6 & 9,804.5 & 5.18 \\\rowcolor{Gray}
               & KDL-NN  & $(28,28)|^2(28,28)|^2(28,28)|^2(5,2)$ & 13,068 & 236.6 & 288.0 & 524.5 & \textbf{4.04}  \\\rowcolor{Gray}
               & E-FNN   & $784|1568|784|1568|784|112|10$ & 5,011,002 & 1,627.1 & 17,137.8 & 18,764.8 & 4.76 \\
    \bottomrule
    \end{tabular}
  }
    \renewcommand{\arraystretch}{1}
  }
  \end{center}
  \caption{Network architectures for each example, including total number of trainable parameters, 20-Epoch training time (s) divided into forward propagation ($\rightarrow$), back-propagation ($\leftarrow$), and total training time, and test data loss. The optimal result for each category is boldfaced without regard to rounding.}\label{Tab:times}
\end{table}

\subsection{Adaptive Choice of Rank}
In general, the rank needed to optimize the KDL-NN is unknown.  Since the KDL summands use separate weight matrices, it is straightforward to add new pairs of weight matrices during training to increase the rank.  

To check the decay of the errors, a validation set is pulled from the training set based on 10\% of the total set size.  New pairs of weight matrices are initialized to normal random matrices, scaled to machine epsilon, and added when the decay of a range in the validation error levels off.  In order to achieve a reduction in error when adding matrices, the learning rate may need to be adjusted.  Factors are chosen in a range from $\frac{1}{n}$ to $2$.  A learning rate based on each factor is then used for a set number of epochs, and the learning rate that produces the smallest error is selected moving forward.  Figure~\ref{fig:adapt} shows training errors and timing using this procedure with a range of 4 learning rates running for 10 epochs when increasing rank, in comparison to preset ranks ranging form 1 to 3.  Results are shown for an average over 10 runs. While the adaptive method is capable of improving the model capacity to be in line with a larger rank solution, the timing is similar to choosing a larger rank to begin with.  In addition, choosing a rank larger than necessary does not seem to have any detrimental effects on the accuracy of the network.

\begin{figure}[!htbp]\begin{center}
\includegraphics[width=0.24\textwidth]{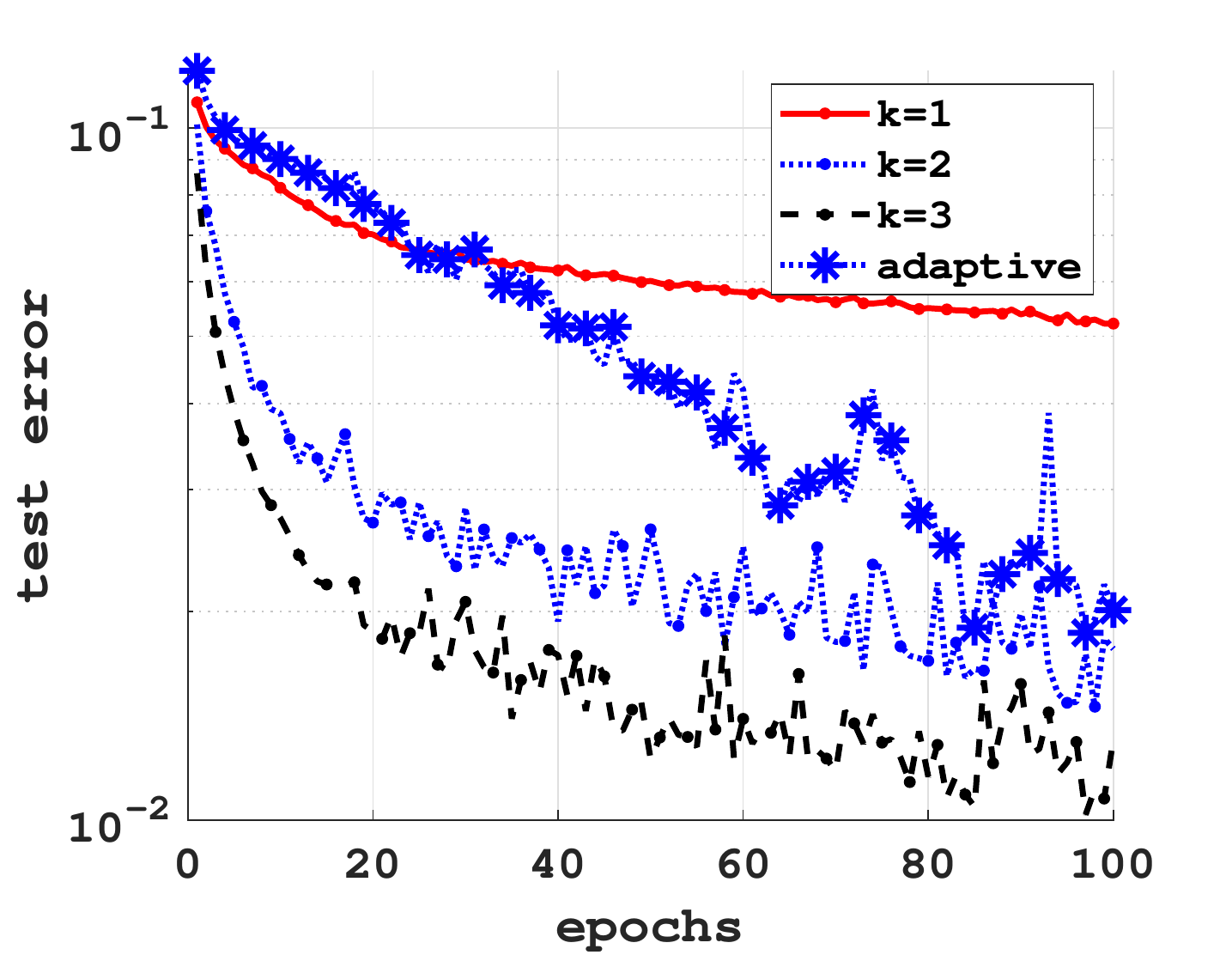}
\includegraphics[width=0.24\textwidth]{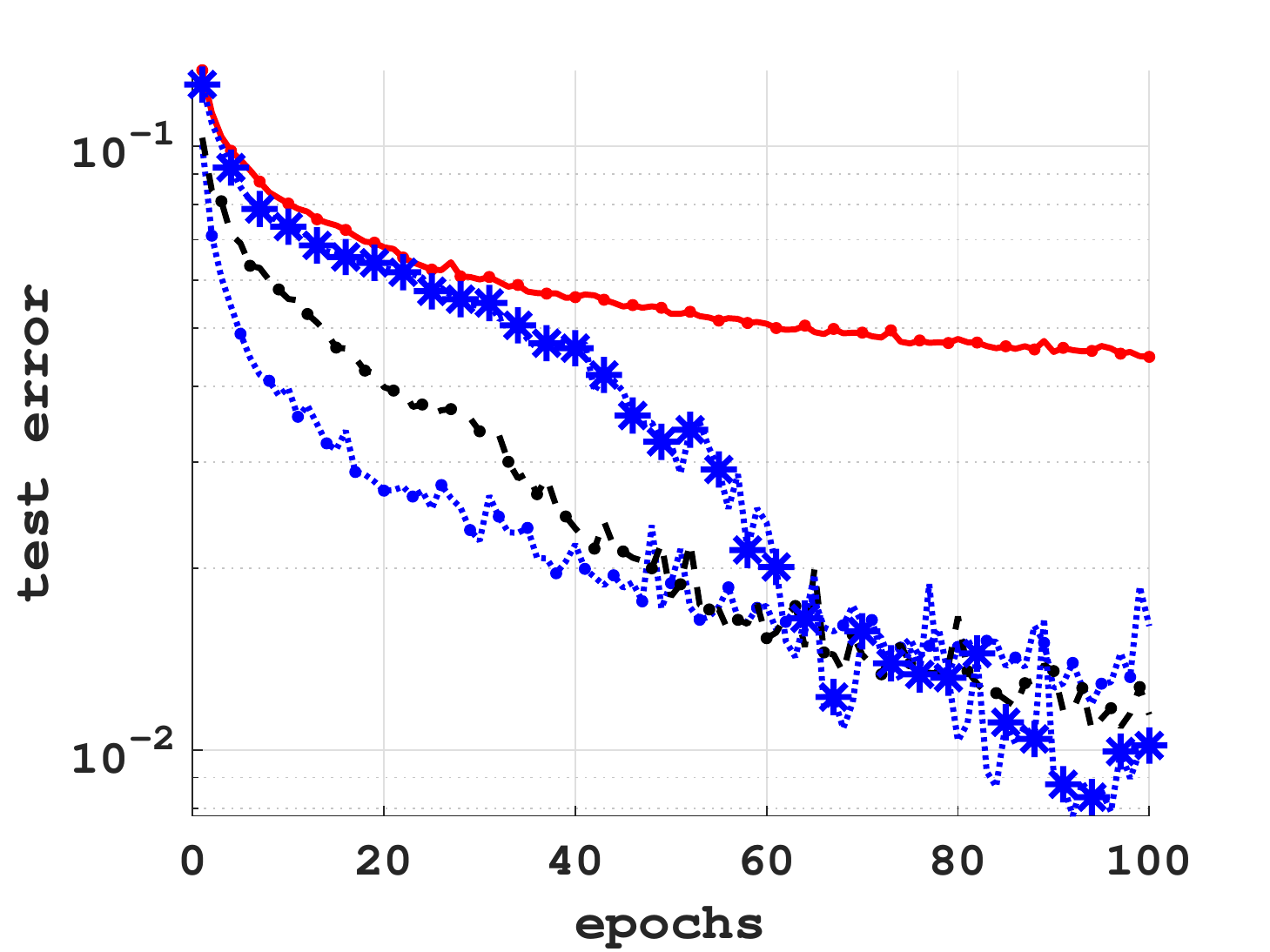}
\includegraphics[width=0.24\textwidth]{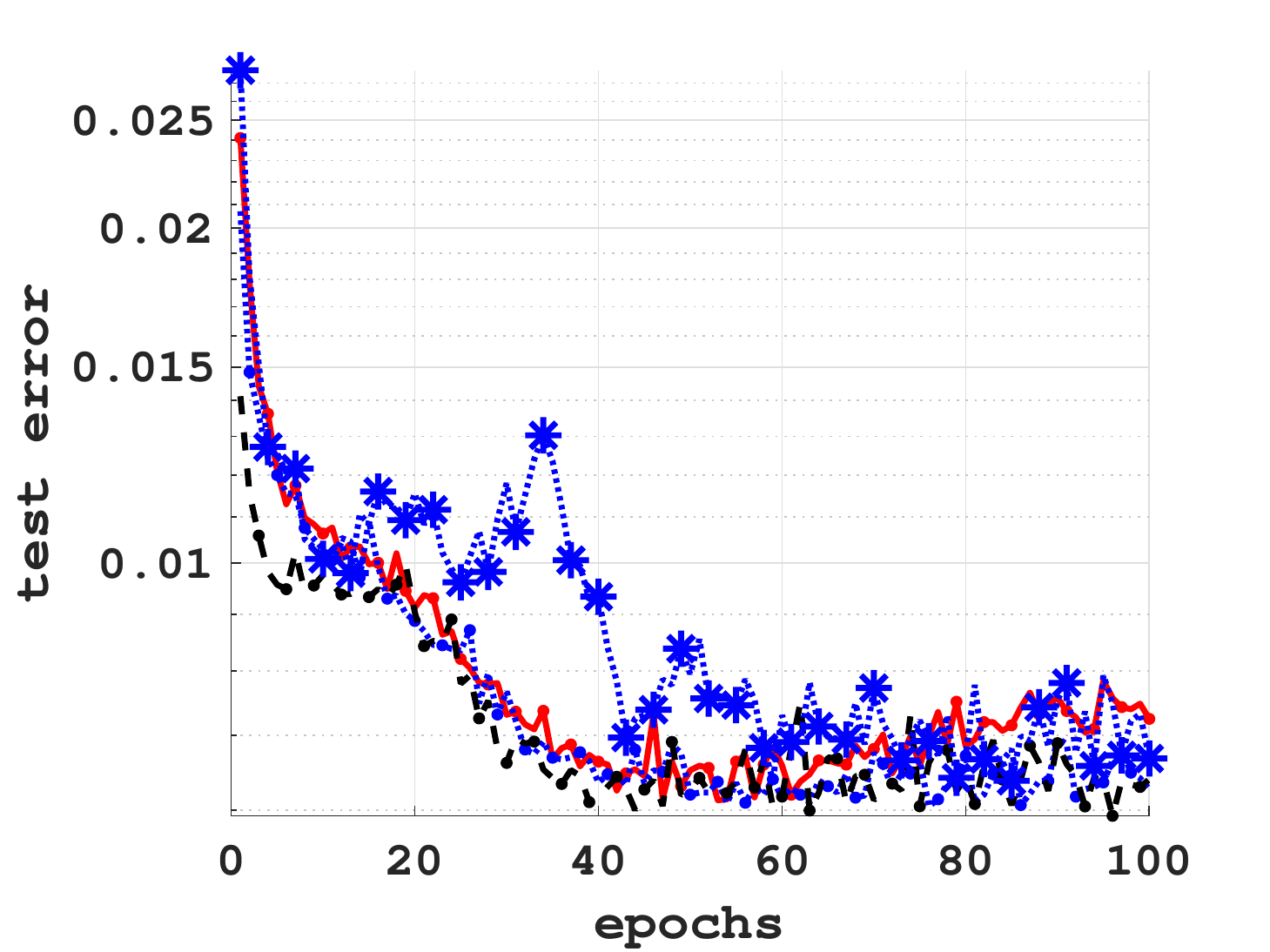}
\includegraphics[width=0.24\textwidth]{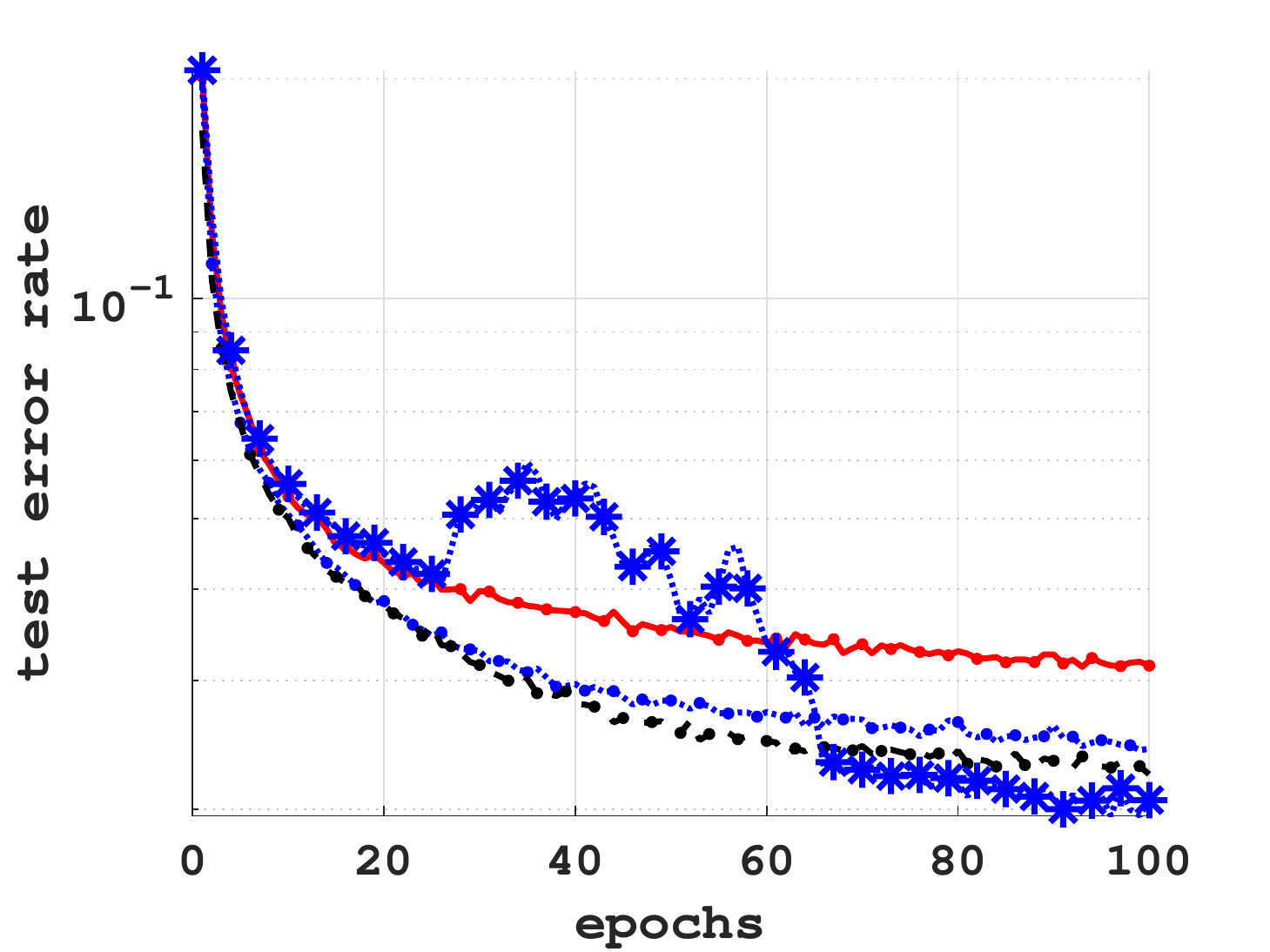}\\
\subfloat[BSD (a) \label{bikes8_adapt}]{\includegraphics[width=0.25\textwidth]{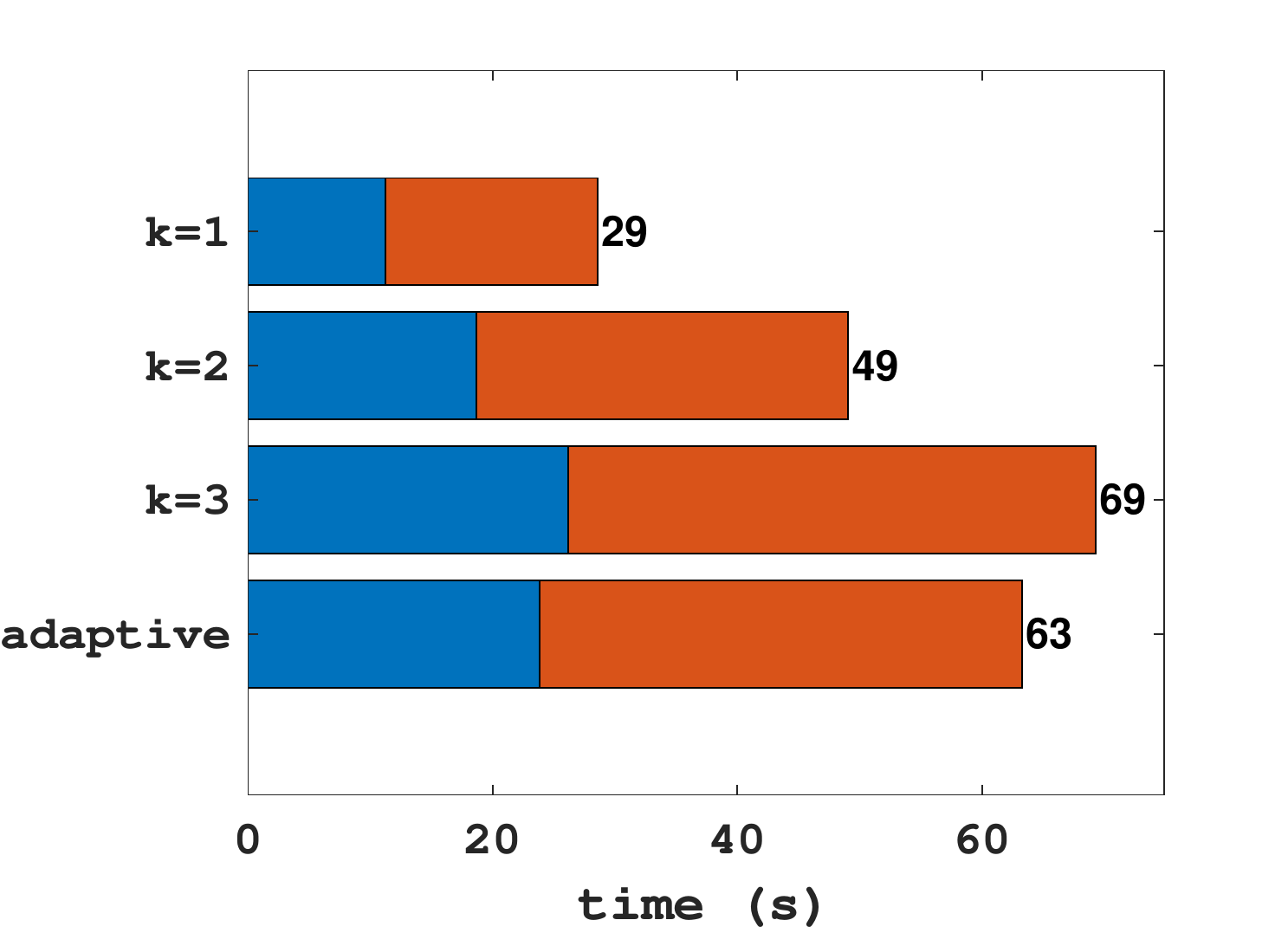}}
\subfloat[BSD (b) \label{bikes_adapt}]{\includegraphics[width=0.25\textwidth]{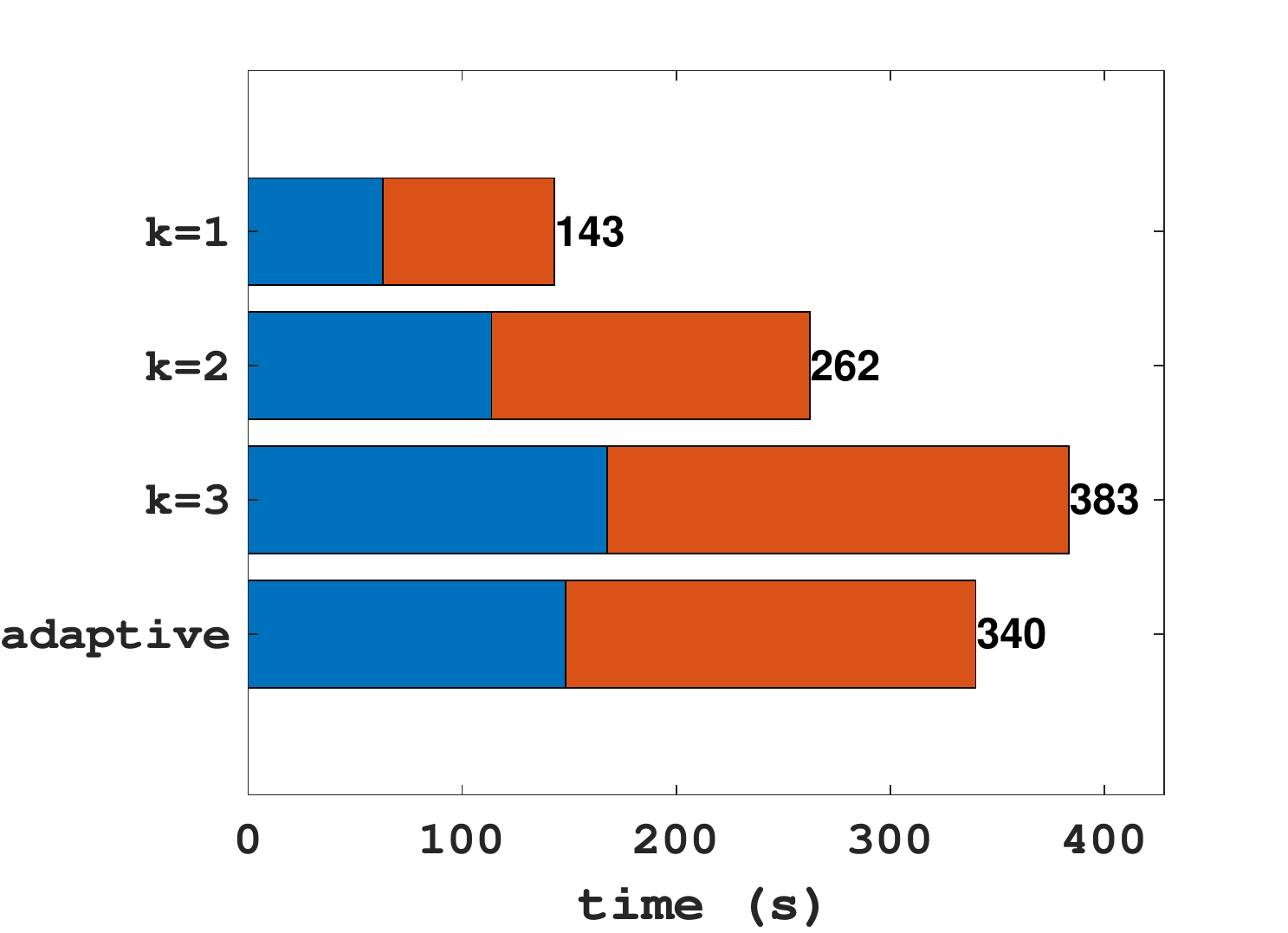}}
\subfloat[BF \label{blog_adapt}]{\includegraphics[width=0.25\textwidth]{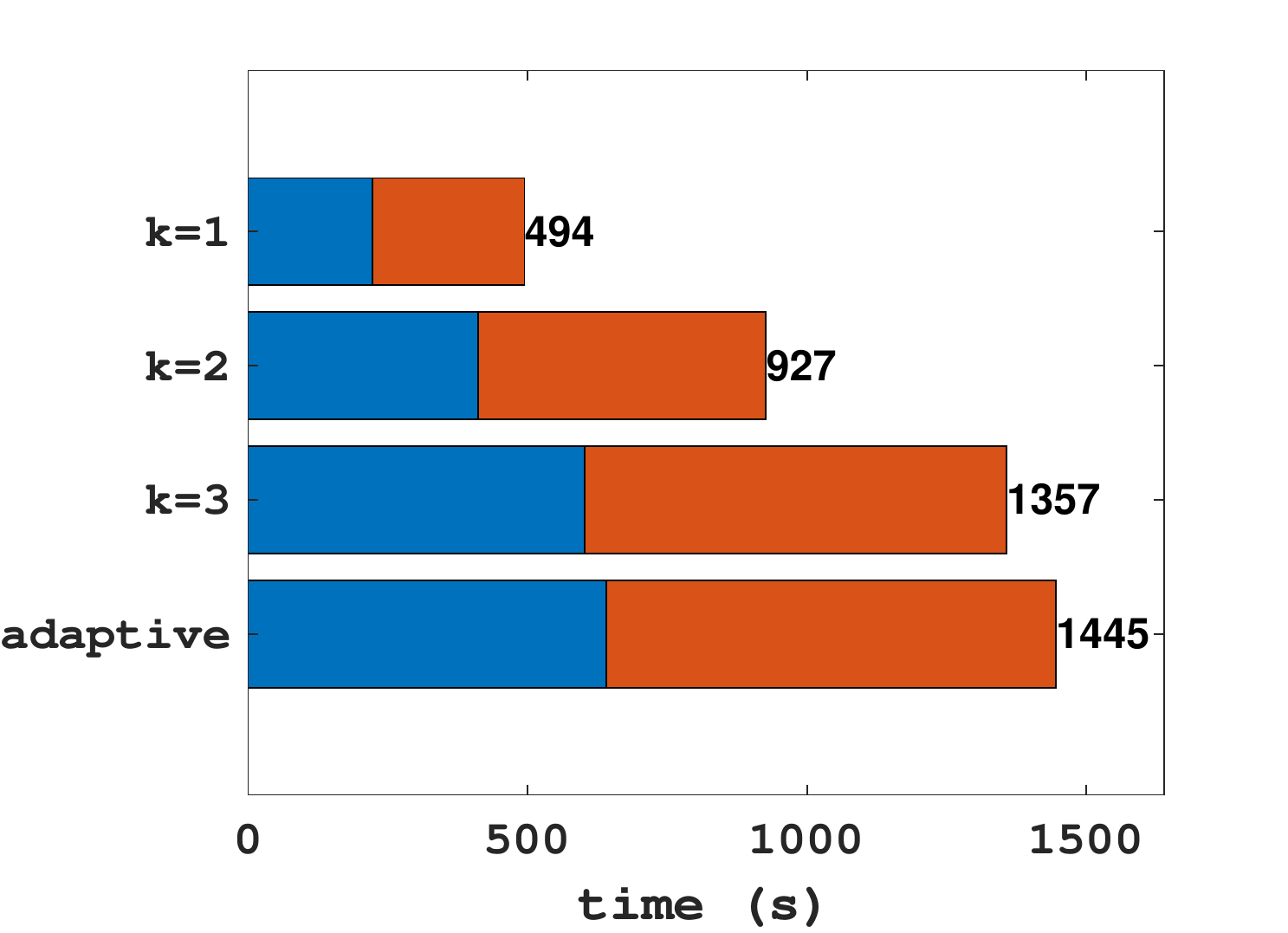}}
\subfloat[MNIST  \label{MNIST_adapt}]{\includegraphics[width=0.25\textwidth]{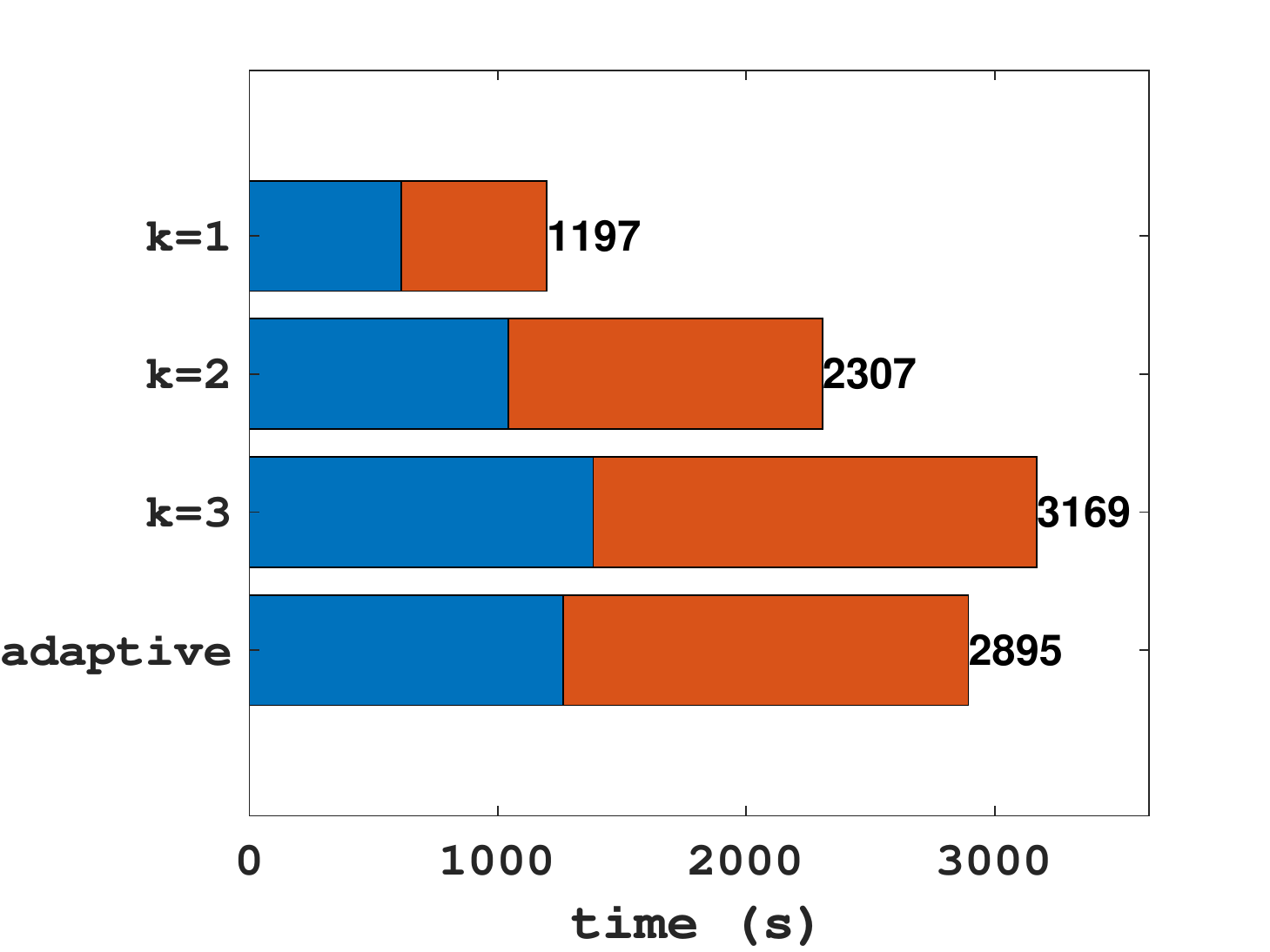}}
\end{center}
\caption{Figures~\ref{bikes8_adapt} to \ref{MNIST_adapt} show the average test errors and timing breakdown for KDL-NNs with ranks~$k=1$, $k=2$, $k=3$, and adaptive rank for BSD (a and b), BF, and MNIST respectively over 10 trials.\label{fig:adapt}}
\end{figure}

\subsection{Tensorflow Implementation}\label{sub:Tensorflow}
Here, Tensorflow is used to provide generalized implementations of KDL-NNs, and while more versatile, contains more overhead costs than the ``bare-bones'' \textsc{Matlab} implementation.  In addition, KDLs are incorporated into convolutional neural networks (CNNs) \cite{Nebauer}, which are commonly used for image classification.  Standard CNNs are comprised of three types of layers: convolutional layers, pooling layers, and fully connected layers.  As such, KDLs being used in place of the fully connected layers only comprise a portion of the CNNs.  Results are generated for the CIFAR-10 data set \cite{Krizhevsky}, comprised of $32\times 32\times 3$ color images.  After several convolutional and max pooling layers, 2 hidden layers and an output layer using either fully connected layers or KDLs are implemented.  ReLu activation functions are used for intermediate layers, and the softmax activation function is used on the output layer.  The networks and resulting dimensions at each layer are summarized in Table~\ref{Tab:CNN}.

\begin{table}[!htbp]
\begin{center}
\resizebox{1.0\textwidth}{!}{
\renewcommand{\tabcolsep}{0.2cm}
\renewcommand{\arraystretch}{1.3}
{\scriptsize
\begin{tabular}{lcccrrrr}
\toprule
Input & Conv. + Max Pool & Conv. + Max Pool & Conv. + Max Pool & Reshape & Hidden & Hidden & Output \\ 
\midrule 
\multirow{2}{*}{$32\times 32\times 3$} & \multirow{2}{*}{$16\times 16 \times 16$} & \multirow{2}{*}{$8\times 8\times 64$} & \multirow{2}{*}{$4\times 4\times 256$} & (FNN) 4096 & 4096 & 4096 & 10 \\ 
& & & & \cellcolor{Gray} (KDL) $64\times 64$ & \cellcolor{Gray} $64\times 64$ & \cellcolor{Gray} $64\times 64$ & \cellcolor{Gray} $5\times 2$\\ 
\bottomrule
\end{tabular}
}
}
    \renewcommand{\arraystretch}{1}
  \end{center}
  \caption{CNN Network architectures for CIFAR-10 with corresponding dimensions.}\label{Tab:CNN}
\end{table}

Test accuracy is shown for Tensorflow implementations of MNIST and CIFAR-10 in Figure~\ref{fig:tf} using a log-log scale.  Open source code is available at \href{https://github.com/JaromHogue/KDLayers}{github.com/JaromHogue/KDLayers}.  
While the Tensorflow implementation does not provide as much reduction in computational time using KDL-NNs compared to an FNN for MNIST, likely due to overhead costs, the KDLs still scale better as seen by the reduction in computational time for the CNNs used for CIFAR-10.  Thus, KDLs are still shown to scale better as the size of the problem increases, while maintaining a similar level of accuracy.  In addition, this shows that incorporating KDLs into frameworks that typically use fully connected layers is straightforward and simple.

\begin{figure}[!htbp]\begin{center}
\subfloat[MNIST \label{mnist_tf}]{\includegraphics[width=0.4\textwidth]{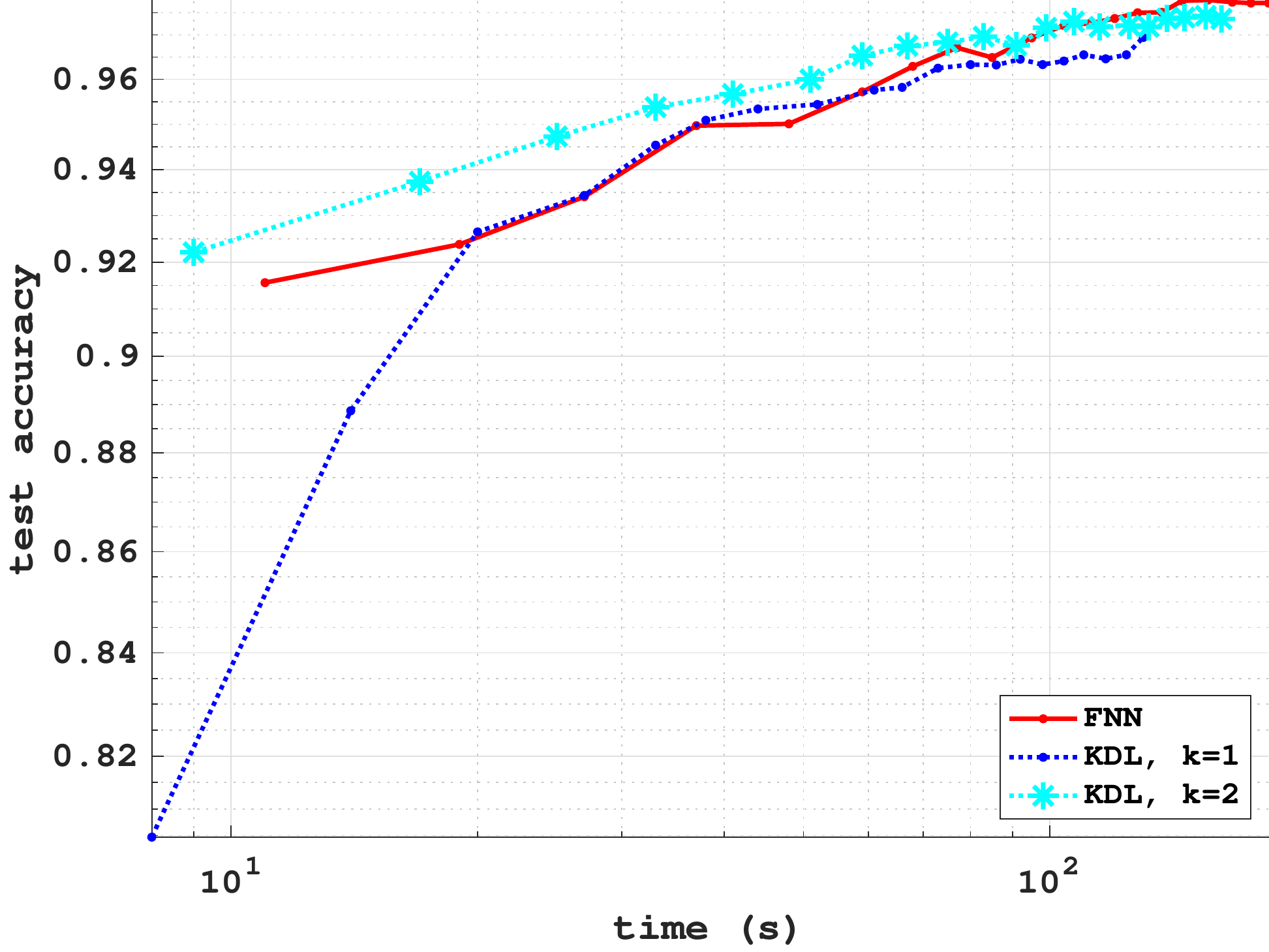}}
\subfloat[Cifar-10 \label{cifar-10}]{\includegraphics[width=0.4\textwidth]{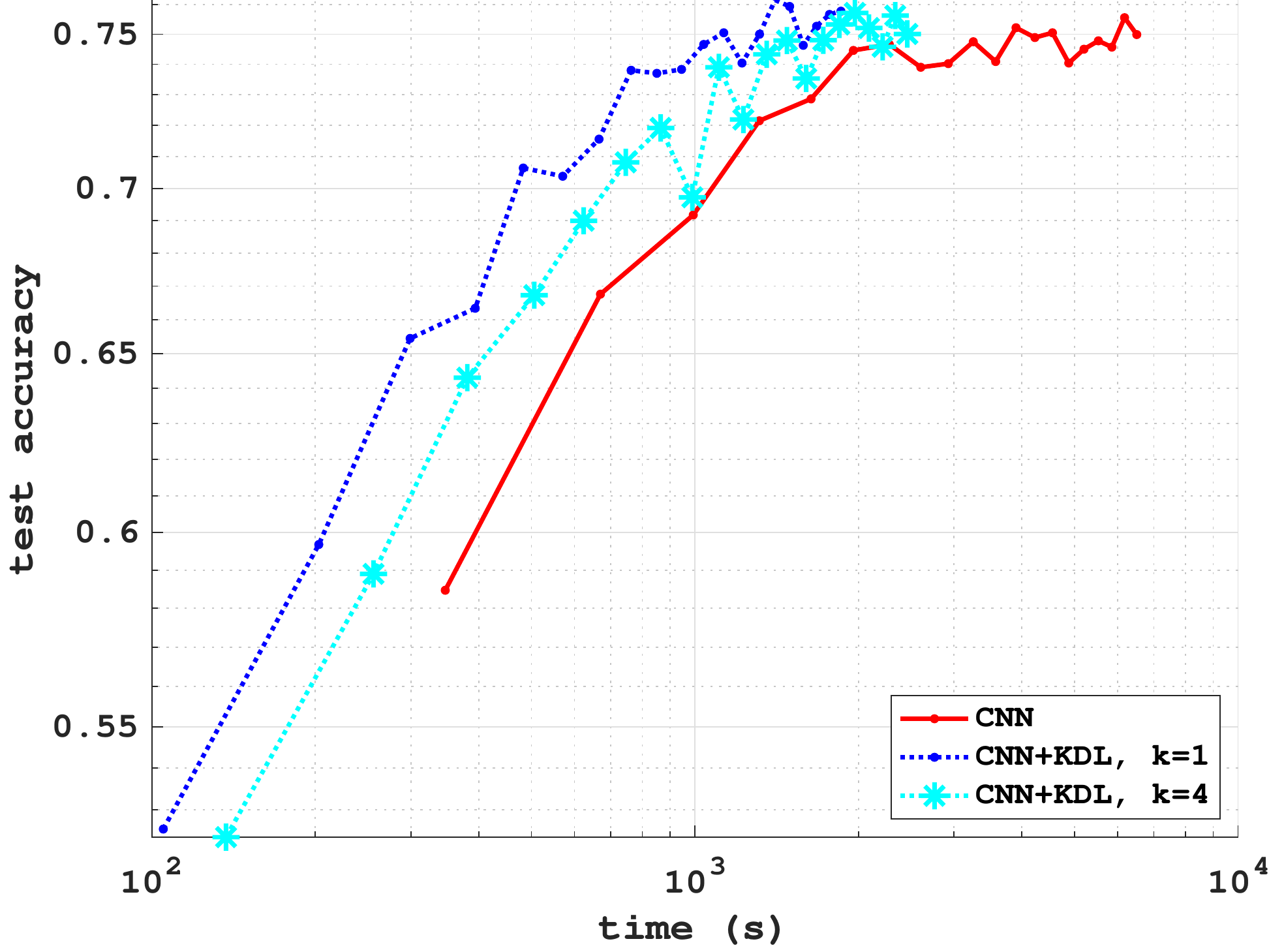}}
\end{center}
\caption{Test accuracy versus time on a log-log scale is shown in Figure~\ref{mnist_tf} for MNIST using an FNN and KDL-NNs with ranks~$k=1$ and $k=2$, and in Figure~\ref{cifar-10} for CIFAR-10 using the architectures in Table~\ref{Tab:CNN} where KDLs are implemented with ranks~$k=1$ and $k=4$. \label{fig:tf}}
\end{figure}

\section{Conclusions}
We have introduced a new neural network architecture, the KDL-NN, for use in deep learning. The architecture has been developed to exploit computational acceleration afforded by a Kronecker product representation of matrix multiplication when multiplying by large weight matrices during training. For an $m \times n$ matrix that is represent as the Kronecker product of $m_1 \times n_1$ and $m_2 \times n_2$ matrices ($m_1 m_2 = m$, $n_1 n_2 = n$), then analysis and practical evaluations have shown that when moderately large factors $m_1 \approx m_2 \approx n_1 \approx n_2$ are available for numbers of nodes, training a KDL-NN requires significantly less time compared to an FNN.  In addition, we have shown on several examples that the resulting accuracy of using a KDL-NN is generally improved compared to a FNN, and seems to be comparable to essentially doubling the number of layers.  However, further analysis is required to determine the extent to which this holds. In particular, our analysis does not reveal precisely what properties of the data suggest that a KDL-NN approach is effective.  While the reduction in computational time on a robust implementation such as Tensorflow is less dramatic, KDL's have been shown to scale at a slower rate compared to fully connected layers as the size of the problem increases, and are simple and straightforward to incorporate in place of fully connected layers.

Further, adding weight matrices to a KDL-NN is straightforward, but practical examples have shown that altering the learning rate when increasing the representative rank may be necessary. Since KDL-NNs provide a new framework for deep learning, there are many avenues of research that are yet to be pursued.  However, this work has shown the potential benefits of adopting KDLs and provided impetus to further establish the extent to which they may prove relevant. In the supplementary documentation of this paper, we show that a higher order Kronecker Multi-Layer NN (KML-NN) is feasible, but examples we have investigated suggest that such a generalization may be less effective than the simpler KDL-NN approach.

\appendix
\section{KDL Back-Propagation Derivation} \label{app:back}
Given a KDL-NN with $L-1$ KDL pairs and $A_R^{(1)} = X$, define
\begin{align*}
Z_L^{(\ell,i)} &= A_R^{(\ell-1)}W_L^{(\ell,i)} + B_L^{(\ell,i)}, &A_L^{(\ell,i)} &= \phi_2(Z_L^{(\ell,i)}),\\ Z_R^{(\ell,i)} &=  W_R^{(\ell,i)} A_L^{(\ell,i)} + B_R^{(\ell,i)}, & A_R^{(\ell)} &= \sum_i \phi_1(Z_R^{(\ell,i)}).\end{align*}

Differentiation from layer $L$ to 2, splitting the loss function $\mathcal{L}=\mathcal{L}_1+\mathcal{L}_2$ with $\mathcal{L}_1 = \frac{1}{2}\left\|Y-A_R^{(L)}\right\|_F^2$, and using $\circ$ to represent element-wise multiplication 
produces,
\begin{align*}
\frac{\partial \mathcal{L}}{\partial W_R^{(\ell,i)}} &= 
\frac{\partial \mathcal{L}_1}{\partial Z_R^{(\ell,i)}} \frac{\partial Z_R^{(\ell,i)}}{\partial W_R^{(\ell,i)}} + \frac{\partial \mathcal{L}_2}{\partial W_R^{(\ell,i)}} = \Delta_1^{(\ell,i)}A_L^{(\ell,i)T} + \lambda W_R^{(\ell,i)}\\
\frac{\partial \mathcal{L}}{\partial B_R^{(\ell,i)}} &= 
\frac{\partial \mathcal{L}_1}{\partial Z_R^{(\ell,i)}} \frac{\partial Z_R^{(\ell,i)}}{\partial B_R^{(\ell,i)}} + \frac{\partial \mathcal{L}_2}{\partial B_R^{(\ell,i)}} = \Delta_1^{(\ell,i)} + \lambda B_R^{(\ell,i)}\\
\frac{\partial \mathcal{L}}{\partial W_L^{(\ell,i)}} &=
\frac{\partial \mathcal{L}_1}{\partial Z_L^{(\ell,i)}} \frac{\partial Z_L^{(\ell,i)}}{\partial W_L^{(\ell,i)}} = A_R^{(\ell-1)T}\Delta_2^{(\ell,i)} + \frac{\partial \mathcal{L}_2}{\partial W_L^{(\ell,i)}} + \lambda W_L^{(\ell,i)}\\
\frac{\partial \mathcal{L}}{\partial B_L^{(\ell,i)}} &=
\frac{\partial \mathcal{L}_1}{\partial Z_L^{(\ell,i)}} \frac{\partial Z_L^{(\ell,i)}}{\partial B_L^{(\ell,i)}} + \frac{\partial \mathcal{L}_2}{\partial B_L^{(\ell,i)}} = \Delta_2^{(\ell,i)} + \lambda B_L^{(\ell,i)},
\end{align*}
where we have introduced the following notation for $\ell = L-1, \ldots, 2$:
\begin{align*}
  \Gamma^{(L+1)} &\coloneqq (Y-A_R^{(L)}), &
  \Gamma^{(\ell+1)} &\coloneqq \frac{\partial \mathcal{L}_1}{\partial A_R^{(\ell)}} = \sum_i \Delta_2^{(\ell+1,i)}W_L^{(\ell+1,i)T}, \\
  \Delta_1^{(\ell,i)} &\coloneqq \frac{\partial \mathcal{L}_1}{\partial A_R^{(\ell)}}, & 
  \Delta_2^{(\ell,i)} &\coloneqq \frac{\partial \mathcal{L}_1}{\partial Z_R^{(\ell,i)}}\frac{\partial Z_R^{(\ell,i)}}{\partial Z_L^{(\ell,i)}} = ((W_R^{(\ell,i)})^T\Delta_1^{(\ell,i)})\circ \phi_2'(Z_L^{(\ell,i)}),\\
  \frac{\partial A_R^{(\ell)}}{\partial Z_R^{(\ell,i)}} &= \Gamma^{(\ell+1)}\circ \phi_1'(Z_R^{(\ell,i)})\\
\end{align*}

\section{Proof of Theorem~\ref{thm:main-result}}\label{app:proof}
The proof of Theorem~\ref{thm:main-result} relies on some lemmas. Lemma~\ref{Lem:Lip} below computes Lipschitz constants for the individual functions $f_\ell$ defined in \eqref{eq:f-def}. Lemmas~\ref{Lem:2} and \ref{Lem:3} compute error estimates associated with KPD truncations of weight matrices, and hence leverage the foundational Kronecker product rearrangement result, Lemma~\ref{lemma:kron-approx}. The final intermediate result, Lemma~\ref{Lem:4}, computes an error estimate for a single layer of the KPD-NN versus a corresponding FNN. Following this, the proof of Theorem~\ref{thm:main-result} is furnished.

\begin{lemma}\label{Lem:Lip}
  Given KDL forward operation $f_\ell$ from \eqref{eq:f-def} with $c_1$- and $c_2$-Lipschitz activation functions $\phi_1$ and $\phi_2$ respectively, then $f_\ell$ is $C^{(\ell)}$-Lipschitz, where
\begin{align*}
C^{(\ell)} = c_1c_2 \left| \theta^{(\ell)} \right|_k^2.   
\end{align*}
\end{lemma}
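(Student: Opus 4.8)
The plan is to establish the bound directly from the definition by decomposing $f_\ell = \phi_1 \circ h_{\theta_R^{(\ell)}} \circ \phi_2 \circ \tilde{h}_{\theta_L^{(\ell)}}$ and tracking the multiplicative contribution of each of the four constituent maps to $\|f_\ell(A) - f_\ell(B)\|_F$. Two elementary facts drive the estimate: (i) a componentwise $c$-Lipschitz activation $\phi$ is $c$-Lipschitz in the Frobenius norm, since $\|\phi(M)-\phi(N)\|_F^2 = \sum_{i,j}(\phi(M_{ij})-\phi(N_{ij}))^2 \le c^2\|M-N\|_F^2$; and (ii) submultiplicativity $\|WM\|_F \le \|W\|_F\|M\|_F$ (and likewise $\|MW\|_F \le \|M\|_F\|W\|_F$), obtained via $\|W\|_2 \le \|W\|_F$, which converts each affine matrix multiplication into a factor of the relevant $\|W\|_F$.

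First I would peel off the outer activation: since $\phi_1$ is $c_1$-Lipschitz, fact (i) gives $\|f_\ell(A)-f_\ell(B)\|_F \le c_1 \|g(A)-g(B)\|_F$, where $g = h_{\theta_R^{(\ell)}} \circ \phi_2 \circ \tilde{h}_{\theta_L^{(\ell)}}$. Next, writing $A_L^{(\ell,i)}$ and $\tilde{A}_L^{(\ell,i)}$ for the branch outputs $\phi_2(\tilde{h}_{\theta_L^{(\ell,i)}}(\cdot))$ evaluated at $A$ and $B$ respectively, the map $h_{\theta_R^{(\ell)}}(\{\,\cdot\,\}) = \sum_{i=1}^k (W_R^{(\ell,i)}(\cdot) + B_R^{(\ell,i)})$ has the property that the bias terms cancel in the difference, so by the triangle inequality and fact (ii), $\|g(A)-g(B)\|_F \le \sum_{i=1}^k \|W_R^{(\ell,i)}\|_F\,\|A_L^{(\ell,i)} - \tilde{A}_L^{(\ell,i)}\|_F$. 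For each branch $i$, fact (i) applied to $\phi_2$ followed by the affine right-multiplication $\tilde{h}_{\theta_L^{(\ell,i)}}(A) - \tilde{h}_{\theta_L^{(\ell,i)}}(B) = (A-B)W_L^{(\ell,i)}$ (biases again cancelling) yields $\|A_L^{(\ell,i)} - \tilde{A}_L^{(\ell,i)}\|_F \le c_2 \|W_L^{(\ell,i)}\|_F \|A-B\|_F$.

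Chaining these together gives $\|f_\ell(A)-f_\ell(B)\|_F \le c_1 c_2 \bigl(\sum_{i=1}^k \|W_R^{(\ell,i)}\|_F \|W_L^{(\ell,i)}\|_F\bigr)\|A-B\|_F$, and the parenthesized sum is exactly $|\theta^{(\ell)}|_\kappa^2$ by its definition, so $C^{(\ell)} = c_1 c_2 |\theta^{(\ell)}|_\kappa^2$ as claimed. I do not expect a genuine obstacle; the only care needed is bookkeeping through the $k$-fold branch structure, namely recognizing that the $R$-side weight of branch $i$ multiplies only the $i$-th branch's perturbation, so the contributions assemble into the summed product form $\sum_i \|W_R^{(\ell,i)}\|_F\|W_L^{(\ell,i)}\|_F$ rather than a product of two separate sums. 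It is also worth noting explicitly that every bias matrix drops out of every difference, which is why $C^{(\ell)}$ depends only on the weight norms, and that the use of $\|\cdot\|_F$ in place of the sharper spectral norm in fact (ii) is precisely what makes the bound match the stated $|\cdot|_\kappa$ quantity rather than something tighter.
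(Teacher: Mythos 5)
Your proposal is correct and follows essentially the same route as the paper's proof: peel off $\phi_1$ for a factor $c_1$, apply the triangle inequality over the $k$ branches, use Frobenius submultiplicativity for $W_R^{(\ell,i)}$, apply the $c_2$-Lipschitz bound for $\phi_2$ with the biases cancelling, and use submultiplicativity again for $W_L^{(\ell,i)}$ to assemble the sum $\sum_{i=1}^k \|W_R^{(\ell,i)}\|_F \|W_L^{(\ell,i)}\|_F = |\theta^{(\ell)}|_\kappa^2$. The only difference is cosmetic: you spell out the two elementary facts (componentwise Lipschitz continuity in the Frobenius norm, and $\|WM\|_F \le \|W\|_F\|M\|_F$) that the paper uses implicitly.
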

\begin{proof}
Since $\phi_1$ and $\phi_2$ are Lipschitz, given inputs $X_1$ and $X_2$,
\begin{align*}
\left\|f_\ell \left(X_1\right) - f_\ell \left(X_2\right) \right\|_F \hspace{-60pt} &\\
&\leq c_1\left\|\sum_{i=1}^k W_R^{(\ell,i)}\phi_2\left(X_1 W_L^{(\ell,i)} + B_L^{(\ell,i)}\right) - \sum_{i=1}^k W_R^{(\ell,i)}\phi_2\left(X_2 W_L^{(\ell,i)} + B_L^{(\ell,i)}\right) \right\|_F \\
&\leq c_1\sum_{i=1}^k\left\| W_R^{(\ell,i)}\phi_2\left(X_1 W_L^{(\ell,i)} + B_L^{(\ell,i)}\right) - W_R^{(\ell,i)}\phi_2\left(X_2 W_L^{(\ell,i)} + B_L^{(\ell,i)}\right) \right\|_F \\
&\leq c_1\sum_{i=1}^k\left\| W_R^{(\ell,i)}\right\|_F \left\| \phi_2\left(X_1 W_L^{(\ell,i)} + B_L^{(\ell,i)}\right) - \phi_2\left(X_2 W_L^{(\ell,i)} + B_L^{(\ell,i)}\right) \right\|_F \\
&\leq c_1c_2\sum_{i=1}^k\left\| W_R^{(\ell,i)}\right\|_F \left\| X_1 W_L^{(\ell,i)} + B_L^{(\ell,i)} - X_2 W_L^{(\ell,i)} - B_L^{(\ell,i)} \right\|_F \\
&\leq c_1c_2\sum_{i=1}^k\left\| W_R^{(\ell,i)}\right\|_F \left\| X_1 - X_2\right\|_F \left\| W_L^{(\ell,i)} \right\|_F \\
&= \left(c_1c_2 \left| \theta^{(\ell)} \right|_k^2 \right) \left\| X_1 - X_2\right\|_F \\
\end{align*}
\end{proof}

\begin{lemma}\label{Lem:2}
Suppose fully connected FNN output $\bm{y}$ with matrix reshaping $Y$ has $L$ layers and output at layer $\ell\in (2,L)$ with parameters $\tilde{\theta}$, activation function $\phi = \phi_1$, layer input $\bm{a}^{(n-1)}$ with $\bm{a}^{(1)} = \bm{x}_m$ for training pair $(\bm{x}_m,\bm{y}_m)$ with matrix reshapings $X_m$ and $Y_m$, then there exists $\theta$, and activation function $\phi_2$ such that for full-rank KDL-NN output $Y_{r}^{\kappa}$ with $L$ layer pairs and layer pair $\ell\in (2,L)$ given by \eqref{Eqn:KPlayer1,Eqn:KPlayer2},
\begin{equation}
\argmin_{\theta,\phi_2} \left\| Y_m - Y_{r}^{\kappa}(X_m) \right\|_F^2 \leq \left\|Y_m - Y(X_m)\right\|_F^2
\end{equation}
\end{lemma}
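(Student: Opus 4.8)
The plan is to prove the inequality by exhibiting one admissible choice of KDL-NN parameters $\theta$ and inner activation $\phi_2$ for which the full-rank KDL-NN reproduces the given FNN \emph{exactly}; since the left-hand side is a minimum over all such $(\theta,\phi_2)$, attaining equality at a single choice immediately yields the stated bound. The enabling fact is Lemma~\ref{lemma:kron-approx} evaluated at full rank $k=r^{(\ell)}$ in each layer: the truncation error $\epsilon^{(\ell,r)}$ is an empty sum of singular values and hence vanishes, so by $\left\|W-W_k\right\|_F^2=\sum_{i=k+1}^r\sigma_i^2=0$ each FNN weight matrix admits the \emph{exact} decomposition $W^{(\ell)}=\sum_{i=1}^{r^{(\ell)}} L^{(\ell,i)T}\otimes R^{(\ell,i)}$ appearing in \eqref{eq:fnn-kp-representation}.

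First I would fix $\phi_2$ to be the identity and set the KDL parameters by $W_L^{(\ell,i)}=L^{(\ell,i)}$, $W_R^{(\ell,i)}=R^{(\ell,i)}$, $B_L^{(\ell,i)}=0$, and $B_R^{(\ell,1)}=\mathrm{mat}(\bs{b}^{(\ell)})$ with $B_R^{(\ell,i)}=0$ for $i>1$, so that the $k$ summand biases aggregate to the matricized FNN bias. With $\phi_2=\mathrm{id}$ and $B_L^{(\ell,i)}=0$, the left intermediate states in \eqref{Eqn:KPlayer1} collapse to $A_L^{(\ell,i)}=A_R^{(\ell-1)}L^{(\ell,i)}$, and the right aggregation feeding \eqref{Eqn:KPlayer2} becomes $\sum_i\bigl(W_R^{(\ell,i)}A_L^{(\ell,i)}+B_R^{(\ell,i)}\bigr)=\sum_i R^{(\ell,i)}A_R^{(\ell-1)}L^{(\ell,i)}+\mathrm{mat}(\bs{b}^{(\ell)})$.

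The core step is then an induction on $\ell$ establishing $A_R^{(\ell)}=\mathrm{mat}(\bs{a}^{(\ell)})$, i.e., that every KDL state equals the matricization of the corresponding FNN node vector. The base case $A_R^{(1)}=X_m=\mathrm{mat}(\bs{x}_m)$ holds by \eqref{eq:f1-def}. For the inductive step I would invoke the Kronecker rearrangement identity \eqref{eq:kron-rearrangement} together with the exact decomposition above to identify $\sum_i R^{(\ell,i)}A_R^{(\ell-1)}L^{(\ell,i)}=\mathrm{mat}(W^{(\ell)}\bs{a}^{(\ell-1)})$, and then use that the componentwise activation $\phi_1=\phi$ commutes with $\mathrm{mat}$ to conclude $A_R^{(\ell)}=\phi_1(\cdots)=\mathrm{mat}(\phi(W^{(\ell)}\bs{a}^{(\ell-1)}+\bs{b}^{(\ell)}))=\mathrm{mat}(\bs{a}^{(\ell)})$. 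Carrying the induction to $\ell=L$ gives $Y_r^{\kappa}(X_m)=\mathrm{mat}(\bs{y}(\bs{x}_m))=Y(X_m)$, so this choice achieves $\left\|Y_m-Y_r^{\kappa}(X_m)\right\|_F^2=\left\|Y_m-Y(X_m)\right\|_F^2$; taking the minimum over $(\theta,\phi_2)$ then completes the argument.

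I expect the main obstacle to be bookkeeping rather than anything conceptual: one must track matricization shapes carefully so that the right-then-left multiplication pattern of the KDL layer aligns with the $R(\cdot)L$ structure of the Kronecker identity, respecting the transpose convention $W^{(\ell)}=\sum_i L^{(\ell,i)T}\otimes R^{(\ell,i)}$, and verify that the chosen bias distribution reconstructs $\bs{b}^{(\ell)}$ after aggregation. A secondary point worth stating explicitly is that this exact construction is available only at full rank, where $\epsilon^{(\ell,r)}=0$; for $k<r^{(\ell)}$ one instead incurs the truncated singular value terms, which is precisely the content of the more general Theorem~\ref{thm:main-result}.
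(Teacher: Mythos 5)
Your proposal is correct and follows essentially the same route as the paper's proof: both exploit the exact full-rank KPD $W^{(\ell)}=\sum_{i=1}^{r}L^{(\ell,i)T}\otimes R^{(\ell,i)}$, set $W_L^{(\ell,i)}=L^{(\ell,i)}$, $W_R^{(\ell,i)}=R^{(\ell,i)}$, $B_L^{(\ell,i)}=0$, distribute $\bs{b}^{(\ell)}$ across the $B_R^{(\ell,i)}$, and take $\phi_2$ to be the identity, so that the KDL-NN reproduces the FNN exactly and the minimum is bounded by this particular choice. Your version merely spells out more carefully (via the explicit induction on layers and the rearrangement identity) what the paper asserts in one line, namely that $A_R^{(\ell)}$ is a reshaping of $\bs{a}^{(\ell)}$.
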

\begin{proof}
First note that for KPD $W^{(\ell)} = \sum_{i=1}^r L^{(\ell,i)T} \otimes R^{(\ell,i)}$, setting $W_L^{(\ell,i)} = L^{(\ell,i)}$, $W_R^{(\ell,i)} = R^{(\ell,i)}$, $B_L^{(\ell,i)} = 0$, $\text{vec}\left(\sum_{i=1}^r B_R^{(\ell,i)}\right) = \bm{b}$,  and using $\phi_2$ as the linear activation function, then $A_R^{(\ell)}$ is a reshaping of $\bm{a}^{(\ell)}$ for $\ell\in(2,L)$, and $Y_{r}^{\kappa}(X_m) = Y(X_m)$.  Thus, the general result holds.
\end{proof}

\begin{lemma}\label{Lem:3}
Under assumptions of Lemma~\ref{Lem:2}, setting $k<r$, and for KPD at layer $\ell$ $\sum_{i=1}^r L^{(\ell,i)T} \otimes R^{(\ell,i)} = W^{(\ell)}$, then
$$ \left\| \left(\sum_{i=1}^k L^{(\ell,i)T} \otimes R^{(\ell,i)}\right)\bm{a}^{(\ell-1)} - W^{(\ell)}\bm{a}^{(\ell-1)} \right\|_2 \leq \epsilon^{(\ell,k)} \left\| \bm{a}^{(\ell-1)} \right\|_2, $$
where $\epsilon^{(\ell,k)} = \left(\sum_{i=k+1}^r \sigma_i^{(\ell)2}\right)^{\frac{1}{2}}$ for $\sigma_i^{(\ell)}$ as the $i^{\text{th}}$ singular value of $\mathcal{R}(W^{(\ell)})$.
\end{lemma}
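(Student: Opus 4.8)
The plan is to identify the truncated Kronecker sum with the optimal rank-$k$ KPD approximation of $W^{(\ell)}$ supplied by Lemma~\ref{lemma:kron-approx}, and then to convert the resulting Frobenius-norm matrix error into the stated vector bound through submultiplicativity. Under the assumptions inherited from Lemma~\ref{Lem:2}, the factors $L^{(\ell,i)}$ and $R^{(\ell,i)}$ are exactly the SVD-derived factors of the rearrangement $\mathcal{R}(W^{(\ell)})$, so the partial sum $\sum_{i=1}^k L^{(\ell,i)T}\otimes R^{(\ell,i)}$ is precisely the minimizer $W_k^{(\ell)}$ from Lemma~\ref{lemma:kron-approx}. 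First I would rewrite the quantity inside the norm on the left-hand side as $\bigl(W_k^{(\ell)}-W^{(\ell)}\bigr)\bm{a}^{(\ell-1)}$, reducing the claim to an estimate on the action of the truncation-error matrix.

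The key steps then follow in order. I would apply the standard chain of inequalities $\bigl\|\bigl(W_k^{(\ell)}-W^{(\ell)}\bigr)\bm{a}^{(\ell-1)}\bigr\|_2 \le \bigl\|W_k^{(\ell)}-W^{(\ell)}\bigr\|_2\,\bigl\|\bm{a}^{(\ell-1)}\bigr\|_2 \le \bigl\|W_k^{(\ell)}-W^{(\ell)}\bigr\|_F\,\bigl\|\bm{a}^{(\ell-1)}\bigr\|_2$, using first that the vector $2$-norm of the image is controlled by the spectral norm of the matrix, and then that the spectral norm is dominated by the Frobenius norm. Invoking Lemma~\ref{lemma:kron-approx} gives $\bigl\|W^{(\ell)}-W_k^{(\ell)}\bigr\|_F^2 = \sum_{i=k+1}^{r}\bigl(\sigma_i^{(\ell)}\bigr)^2$, and taking the square root identifies this Frobenius norm with $\epsilon^{(\ell,k)}$, which closes the estimate.

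The argument is short, so the only real care lies in two bookkeeping points rather than in any single hard step. The first is ensuring that the truncated factors are the \emph{optimal} SVD-based ones, so that the Frobenius error equals the tail of the squared singular values exactly; this is guaranteed by the construction carried over from Lemma~\ref{Lem:2} together with Lemma~\ref{lemma:kron-approx}. The second is that the transpose on $L^{(\ell,i)}$ is merely the reshaping convention consistent with \eqref{eq:kron-rearrangement} and does not affect the rearrangement identity or the Frobenius-norm bound. The passage from the spectral norm to the Frobenius norm is where the estimate is loosened, but this is a standard and deliberate relaxation, and I do not anticipate a genuine obstacle elsewhere.
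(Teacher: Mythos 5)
Your proof is correct and takes essentially the same route as the paper's: rewrite the left-hand side as $\bigl(W^{(\ell)}_k - W^{(\ell)}\bigr)\bm{a}^{(\ell-1)}$, bound its norm by a matrix norm of the truncation error times $\bigl\|\bm{a}^{(\ell-1)}\bigr\|_2$, and invoke Lemma~\ref{lemma:kron-approx} to identify the Frobenius error with the singular-value tail. If anything, your version is slightly more careful than the paper's, which writes the intermediate bound using $\|\cdot\|_2$ on the matrix and then asserts \emph{equality} with $\sum_{i=k+1}^r \sigma_i^{(\ell)2}$ (an identity that Lemma~\ref{lemma:kron-approx} supplies only for the Frobenius norm); your explicit spectral-to-Frobenius relaxation makes that step airtight.
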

\begin{proof}
\begin{align*}
\left\| \left(\sum_{i=1}^k L^{(\ell,i)T} \otimes R^{(\ell,i)}\right)\bm{a}^{(\ell-1)} - W^{(\ell)}\bm{a}^{(\ell-1)} \right\|_2^2\hspace{-100pt}&\\ &= \left\| \left(\sum_{i=1}^k L^{(\ell,i)T} \otimes R^{(\ell,i)} - W^{(\ell)}\right)\bm{a}^{(\ell-1)}\right\|_2^2\\
&\leq \left\| \left(\sum_{i=1}^k L^{(\ell,i)T} \otimes R^{(\ell,i)} - W^{(\ell)}\right)\right\|_2^2 \left\|\bm{a}^{(\ell-1)}\right\|_2^2\\
&= \sum_{i=k+1}^r \sigma_i^{(\ell)2} \left\| \bm{a}^{(\ell-1)} \right\|_2^2,
\end{align*} 
where the final equality holds by Lemma~\ref{lemma:kron-approx}.
\end{proof}

\begin{lemma}\label{Lem:4}
Under assumptions of Lemma~\ref{Lem:3}, and for $c_1$-Lipschitz activation functions $\phi$ and $\phi_1$, and layer operator $f_\ell = \left( \phi_1 \circ h_{\theta_R^{(\ell)}} \circ \phi_2 \circ h_{\theta_L^{(\ell)}} \right)$, then there exists $\theta$, and activation function $\phi_2$ such that 
$$
\argmin_{\theta,\phi_2} \left\| f_\ell \left(A^{(\ell-1)}\right) - A^{(\ell)} \right\|_F\\ \leq c_1 \epsilon^{(\ell,k)} \left\| A^{(\ell-1)} \right\|_F, $$
where $\epsilon^{(\ell,k)} = \left(\sum_{i=k+1}^r \sigma_i^{(\ell)2}\right)^{\frac{1}{2}}$ for $\sigma_i^{(\ell)}$ as the $i^{\text{th}}$ singular value of $\mathcal{R}(W^{(\ell)})$.
\end{lemma}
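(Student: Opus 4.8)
The plan is to combine the exact full-rank representation of Lemma~\ref{Lem:2} with the rank-$k$ truncation estimate of Lemma~\ref{Lem:3}, using only the Lipschitz continuity of the outer activation $\phi_1$. First I would fix the parameter choice inherited from Lemma~\ref{Lem:2}: take $\phi_2$ to be the identity (linear) map, set $W_L^{(\ell,i)} = L^{(\ell,i)}$ and $W_R^{(\ell,i)} = R^{(\ell,i)}$ for $i \in [k]$, put $B_L^{(\ell,i)} = 0$, and distribute the FNN bias across the right-hand biases so that $\sum_{i=1}^k B_R^{(\ell,i)} = \mathrm{mat}(\bs{b}^{(\ell)})$. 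With $\phi_2$ linear and $B_L^{(\ell,i)}=0$, the intermediate states reduce to $A_L^{(\ell,i)} = A^{(\ell-1)} L^{(\ell,i)}$, so that the pre-activation fed into $\phi_1$ is $\sum_{i=1}^k R^{(\ell,i)} A^{(\ell-1)} L^{(\ell,i)} + \mathrm{mat}(\bs{b}^{(\ell)})$.

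Next I would identify both layer outputs as $\phi_1$ applied to a matricized affine map. Using the Kronecker rearrangement identity \eqref{eq:kron-rearrangement}, each summand satisfies $R^{(\ell,i)} A^{(\ell-1)} L^{(\ell,i)} = \mathrm{mat}\big((L^{(\ell,i)T} \otimes R^{(\ell,i)}) \bs{a}^{(\ell-1)}\big)$, so the rank-$k$ KDL output is $f_\ell(A^{(\ell-1)}) = \phi_1\big(\mathrm{mat}((\sum_{i=1}^k L^{(\ell,i)T} \otimes R^{(\ell,i)}) \bs{a}^{(\ell-1)} + \bs{b}^{(\ell)})\big)$, whereas by Lemma~\ref{Lem:2} the exact FNN layer is $A^{(\ell)} = \phi_1\big(\mathrm{mat}(W^{(\ell)} \bs{a}^{(\ell-1)} + \bs{b}^{(\ell)})\big)$, recalling $\phi_1 = \phi$. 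The bias term $\mathrm{mat}(\bs{b}^{(\ell)})$ appears identically in both and cancels.

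Then I would apply the $c_1$-Lipschitz property of $\phi_1$ to extract the factor $c_1$, reducing the Frobenius-norm difference of the two outputs to the Frobenius-norm difference of the two pre-activations. Since $\mathrm{mat}$ is an isometry between the Euclidean $2$-norm and the Frobenius norm, this difference equals $\big\|(\sum_{i=1}^k L^{(\ell,i)T} \otimes R^{(\ell,i)}) \bs{a}^{(\ell-1)} - W^{(\ell)} \bs{a}^{(\ell-1)}\big\|_2$, which is precisely the quantity bounded in Lemma~\ref{Lem:3} by $\epsilon^{(\ell,k)} \|\bs{a}^{(\ell-1)}\|_2 = \epsilon^{(\ell,k)} \|A^{(\ell-1)}\|_F$. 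Assembling these steps yields $\|f_\ell(A^{(\ell-1)}) - A^{(\ell)}\|_F \leq c_1 \epsilon^{(\ell,k)} \|A^{(\ell-1)}\|_F$, and since the displayed parameters and choice $\phi_2 = \mathrm{id}$ form an admissible point, the minimum over $(\theta, \phi_2)$ is at most this value, giving the claim.

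I do not anticipate a genuine obstacle, as the argument is essentially a bookkeeping combination of the two preceding lemmas; the one point requiring care is the consistent accounting of the bias terms. One must verify that $\sum_{i=1}^k B_R^{(\ell,i)} = \mathrm{mat}(\bs{b}^{(\ell)})$ reproduces the FNN bias \emph{exactly} so that it cancels cleanly and contributes nothing to the residual. A secondary detail is to confirm that the $\mathrm{mat}/\mathrm{vec}$ reshaping preserves the relevant norm, which is immediate since both the Frobenius and Euclidean norms coincide with the $\ell^2$ norm of the entries, but it should be stated explicitly to justify passing between the Euclidean estimate in Lemma~\ref{Lem:3} and the Frobenius-norm form of the present claim.
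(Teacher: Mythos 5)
Your proposal is correct and follows essentially the same route as the paper's proof: the same parameter assignment ($W_L^{(\ell,i)} = L^{(\ell,i)}$, $W_R^{(\ell,i)} = R^{(\ell,i)}$, $B_L^{(\ell,i)} = 0$, the FNN bias distributed over the $B_R^{(\ell,i)}$, and $\phi_2$ linear), the same use of the Kronecker rearrangement identity to rewrite the KDL pre-activation as $\bigl(\sum_{i=1}^k L^{(\ell,i)T} \otimes R^{(\ell,i)}\bigr)\bs{a}^{(\ell-1)} + \bs{b}^{(\ell)}$, followed by the $c_1$-Lipschitz bound and Lemma~\ref{Lem:3}. Your explicit remarks on the exact cancellation of the bias and the $\mathrm{vec}/\mathrm{mat}$ norm isometry are points the paper leaves implicit ("reshaping into matrix format"), but they do not change the argument.
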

\begin{proof}
Setting $ B_L^{(\ell,i)} = 0 $, $ \sum_{i=1}^k \text{vec}(B_R^{(\ell,i)}) = \bm{b}^{(\ell)} $, $ W_L^{(\ell,i)} = L^{(\ell,i)} $, $ W_R^{(\ell,i)} = R^{(\ell,i)} $, and choosing $\phi_2$ as the linear activation function yields 
\begin{multline*}
\text{vec}\left(\sum_{i=1}^k W_R^{(\ell,i)}\left( A^{(\ell-1)} W_L^{(\ell,i)} + B_L^{(\ell,i)}\right) + B_R^{(\ell,i)}\right) = \left(\sum_{i=1}^k L^{(\ell,i)T} \otimes R^{(\ell,i)}\right)\bm{a}^{(\ell-1)} + \bm{b}^{(\ell)},
\end{multline*} 
and
\begin{multline*}
\left\| \phi\left(\left(\sum_{i=1}^k L^{(\ell,i)T} \otimes R^{(\ell,i)}\right)\bm{a}^{(\ell-1)} + \bm{b}^{(\ell)}\right) - \phi\left(W^{(\ell)}\bm{a}^{(\ell-1)} + \bm{b}^{(\ell)}\right) \right\|_2\\ 
\leq c_1 \left\| \left(\sum_{i=1}^k L^{(\ell,i)T} \otimes R^{(\ell,i)}\right)\bm{a}^{(\ell-1)} - W^{(\ell)}\bm{a}^{(\ell-1)} \right\|_2,
\end{multline*} 
since $\phi$ is $c_1$-Lipschitz.  Now applying Lemma~\ref{Lem:3} and reshaping into matrix format, the general result holds.
\end{proof}

\begin{proof}[Proof of Theorem~\ref{thm:main-result}]
Define the error at layer $\ell$ by $E^{(\ell)} = \left\| A_R^{(\ell)} - A^{(\ell)} \right\|_F$, and error from applying KDL forward operator $f_\ell$ by $e^{(\ell)} = \left\| f_\ell \left(A^{(\ell-1)}\right) - A^{(\ell)} \right\|_F$.  Then by Lemma~\ref{Lem:4} and by definition of $f_1(0)$, 
$$
E^{(2)} = e^{(2)} \leq c_1 \epsilon^{(2,k)} \left\| X \right\|_F = c_1 \epsilon^{(2,k)} \left\| f_1(0) \right\|_F, $$
since $A^{(1)} = A_R^{(1)} = X$, and 
\begin{align*}
e^{(\ell)} &\leq c_1 \epsilon^{(\ell,k)} \left\| A^{(\ell-1)} \right\|_F,\\
&= c_1 \epsilon^{(\ell,k)} \left\| f_{\ell-1}\left(A^{(\ell-2)}\right) - f_{\ell-1}(0) + f_{\ell-1}(0) \right\|_F\\ 
&\leq c_1^2 \epsilon^{(\ell,k)} \left\| A^{(\ell-2)} - 0 \right\|_F + c_1 \epsilon^{(\ell,k)} \left\| f_{\ell-1}(0) \right\|_F \\
&= c_1^2 \epsilon^{(\ell,k)} \left\| f_{\ell-2}\left(A^{(\ell-3)}\right) - f_{\ell-2}(0) + f_{\ell-2}(0) \right\|_F + c_1 \epsilon^{(\ell,k)} \left\| f_{\ell-1}(0) \right\|_F \\ 
&\vdots\\
&\leq \sum_{i=1}^{n-1} c_1^{n-i} \left\| f_i (0) \right\|_F.
\end{align*}
Further,
\begin{align*}
E^{(\ell)} &= \left\| A_R^{(\ell)} - f_{\ell} \left(A^{(\ell-1)}\right) + f_{\ell} \left(A^{(\ell-1)}\right) - A^{(\ell)} \right\|_F\\
&\leq \left\|A_R^{(\ell)} - f_{\ell} \left(A^{(\ell-1)}\right) \right\|_F + e^{(\ell)}.
\end{align*}
By Lemma~\ref{Lem:Lip},
\begin{align*}
\left\|f_{\ell} \left(A_R^{(\ell-1)}\right) - f_{\ell} \left(A^{(\ell-1)}\right) \right\|_F &\leq C^{(\ell)} \left\|A_R^{(\ell-1)} - A^{(\ell-1)} \right\|_F\\
&= C^{(\ell)} E^{(\ell-1)} \\
\end{align*}
Thus,
\begin{align*}
E^{(\ell)} &\leq C^{(\ell)} E^{(\ell-1)} + \epsilon^{(\ell,k)} \sum_{i=1}^{\ell-1} c_1^{\ell-i} \left\| f_i (0) \right\|_F,
\end{align*}
and
\begin{align*}
E^{(L)} &\leq C^{(L)}\left( C^{(L-1)} \left(\cdots \left( c_1 \epsilon^{(2,k)} C^{(3)} \left\| f_1(0) \right\|_F + \cdots \right)\cdots \right) + \cdots \right)\\ &\hspace{24pt} + \epsilon^{(L,k)} \sum_{i=1}^{L-1} c_1^{L-i} \left\| f_i (0) \right\|_F\\
&= \sum_{i=2}^L \epsilon^{(i,k)} \left( \prod_{j=i+1}^L C^{(j)} \right) \sum_{k=1}^{i-1} c_1^{i-k} \left\| f_k (0) \right\|_F\\
&= \sum_{i=2}^L \epsilon^{(i,k)} \left( \prod_{j=i+1}^L c_1c_2\left| \theta^{(j)} \right|_k^2 \right) \sum_{k=1}^{i-1} c_1^{i-k} \left\| f_k (0) \right\|_F\\
&= \sum_{i=2}^L \epsilon^{(i,k)} \left( \prod_{j=i+1}^L c_2\left| \theta^{(j)} \right|_k^2 \right) c_1^{(L-i)}\sum_{k=1}^{i-1} c_1^{i-k} \left\| f_k (0) \right\|_F\\
&= \sum_{i=2}^L \epsilon^{(i,k)} \left( \prod_{j=i+1}^L c_2\left| \theta^{(j)} \right|_k^2 \right) \sum_{k=1}^{i-1} c_1^{L-k} \left\| f_k (0) \right\|_F.
\end{align*}
Finally, given $\phi_2(X) = X$, then $c_2=1$.
\end{proof}

\bibliographystyle{siamplain}
\bibliography{references}
\clearpage

\thispagestyle{empty}
\vspace*{24pt}

\title{\LARGE{\textbf{Supplementary Materials: Weight Matrix Dimensionality Reduction in Deep Learning via Kronecker Multi-layer Architectures}}\footnote{This research was sponsored by ARL under Cooperative Agreement Number W911NF-12-2-0023. The views and conclusions contained in this document are those of the authors and should not be interpreted as representing the official policies, either expressed or implied, of ARL or the U.S. Government. The U.S. Government is authorized to reproduce and distribute reprints for Government purposes notwithstanding any copyright notation herein.  The first and third authors are partially supported by NSF DMS-1848508 and AFOSR FA9550-20-1-0338.}}
\author{Jarom D. Hogue\footnote{Scientific Computing and Imaging Institute, University of Utah, Salt Lake City, UT(\href{mailto:jdhogue@sci.utah.edu}{jdhogue@sci.utah.edu}).} \and Robert M. Kirby\footnote{Scientific Computing and Imaging Institute and School of Computing, University of Utah, Salt Lake City, UT(\href{mailto:kirby@cs.utah.edu}{kirby@cs.utah.edu}).} \and Akil  Narayan\footnote{Scientific Computing and Imaging Institute and Department of Mathematics, University of Utah, Salt Lake City, UT(\href{mailto:akil@sci.utah.edu}{akil@sci.utah.edu}).}}

\maketitle

\noindent\rule{\textwidth}{0.8pt}

\section{Choice of Optimizer and Minibatch Size}

Figure~\ref{fig:val_a100} utilizes the Adam optimizer with a minibatch size of 100 instead of stochastic gradient descent (SGD) used in Figure 5.2, and produces similar results.  While this is not an exhaustive test on optimizers, it does show that the efficacy of KDL-NNs is not reliant on SGD.

\begin{figure}[htbp]
  \begin{center}
\includegraphics[width=0.25\textwidth]{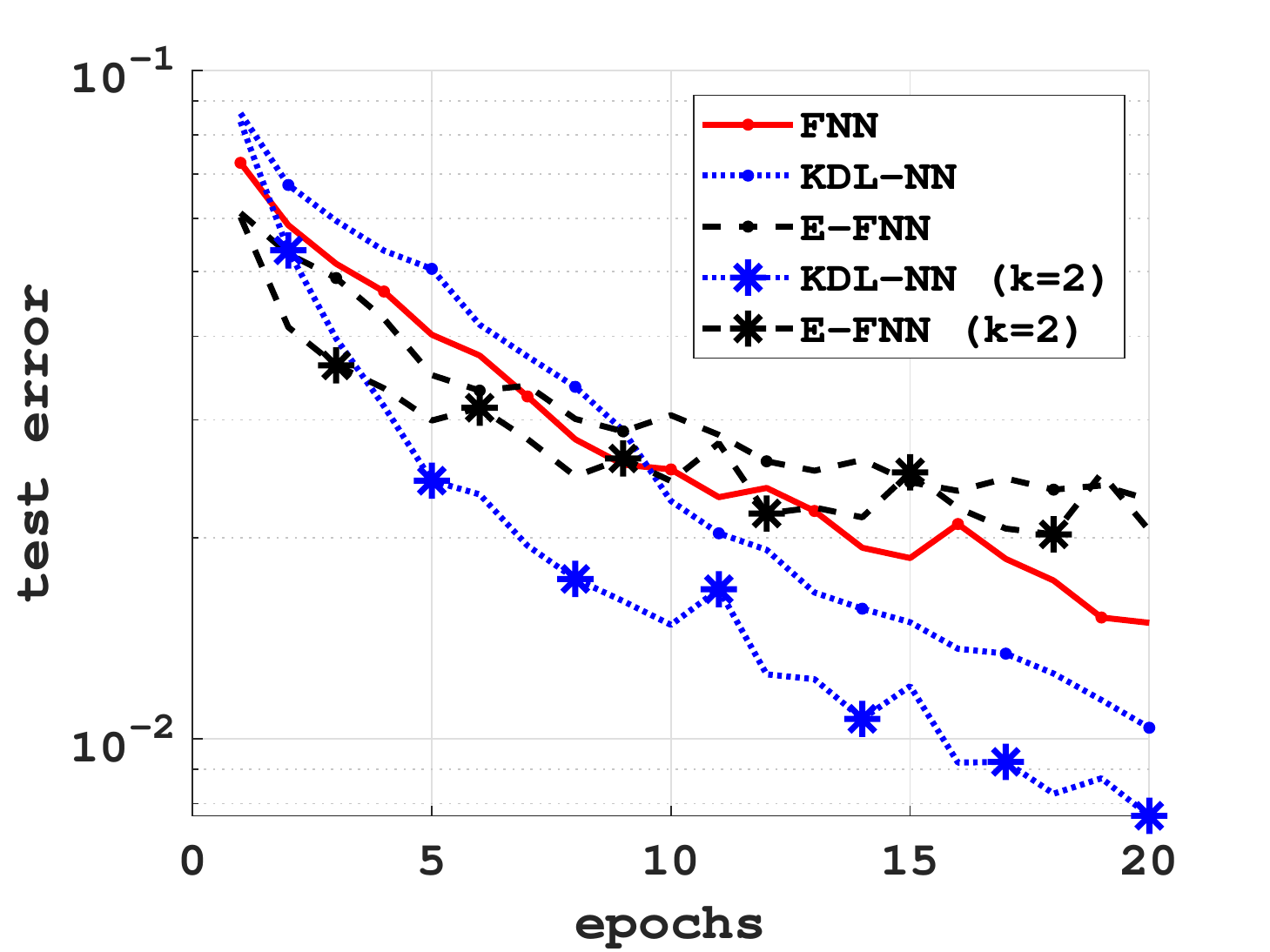}
\includegraphics[width=0.25\textwidth]{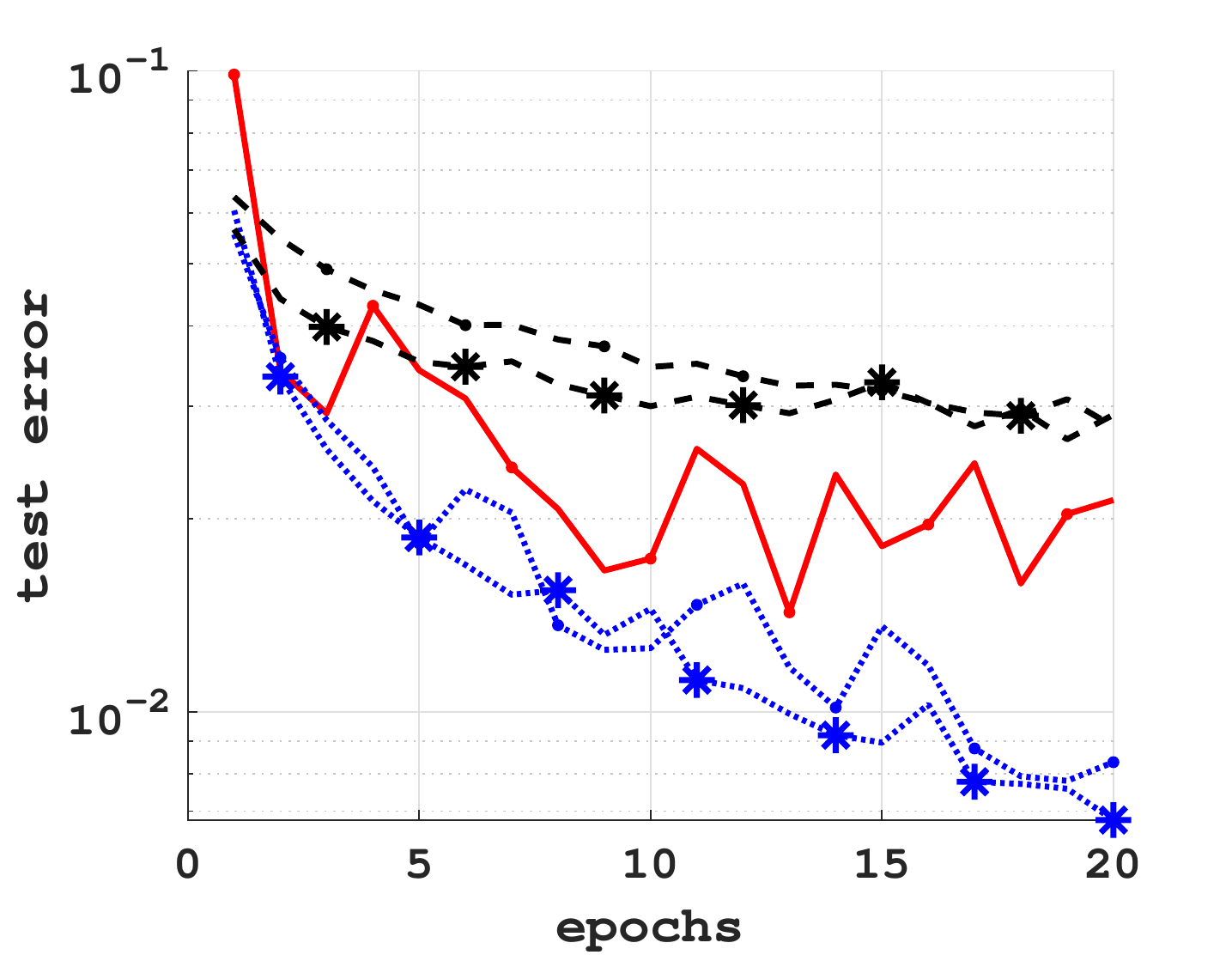}
\includegraphics[width=0.25\textwidth]{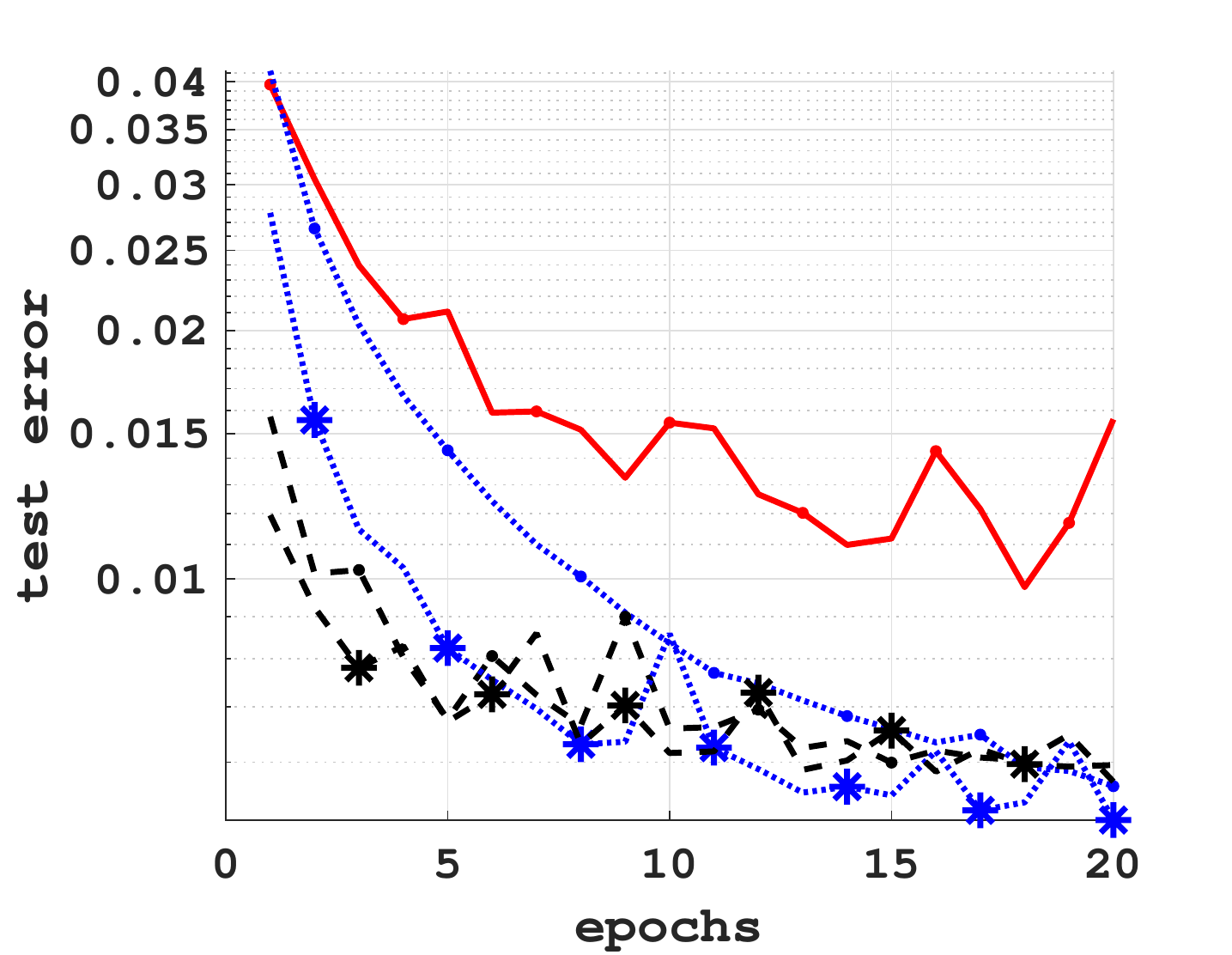}
\includegraphics[width=0.25\textwidth]{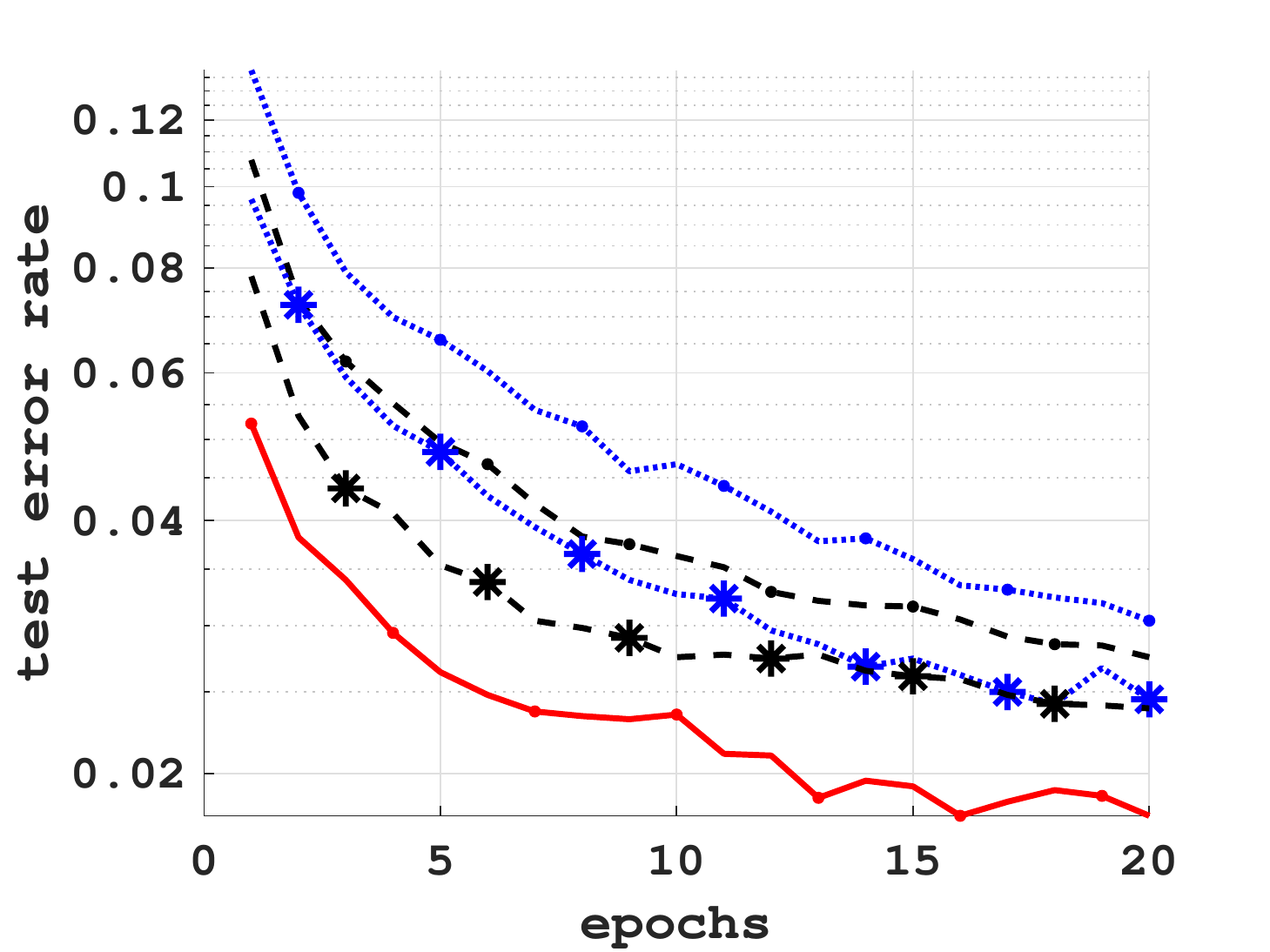}
\\
\subfloat[BSD (a) 
\label{bikes8c}]{\includegraphics[width=0.25\textwidth]{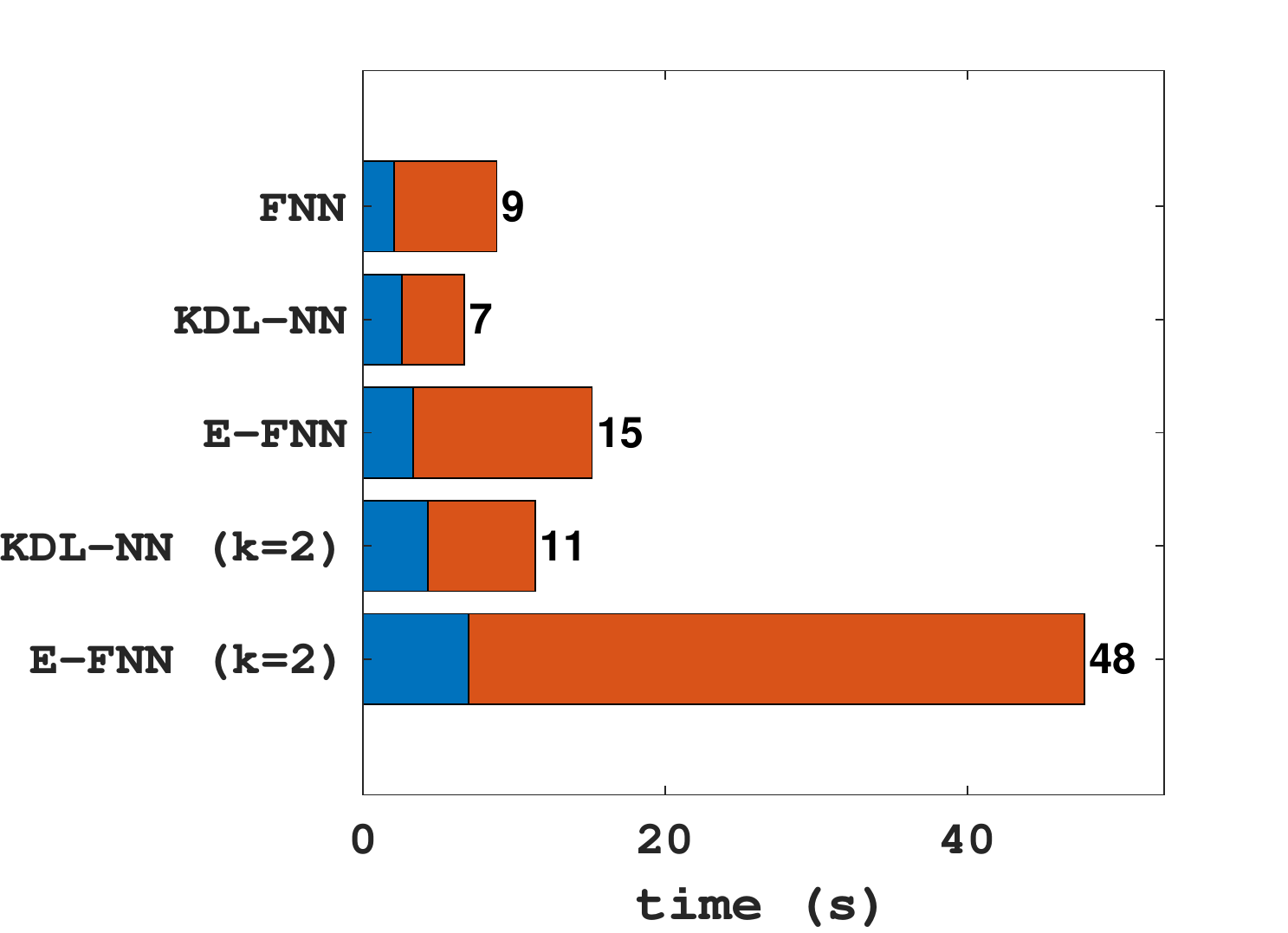}}
\subfloat[BSD (b) 
\label{bikesc}]{\includegraphics[width=0.25\textwidth]{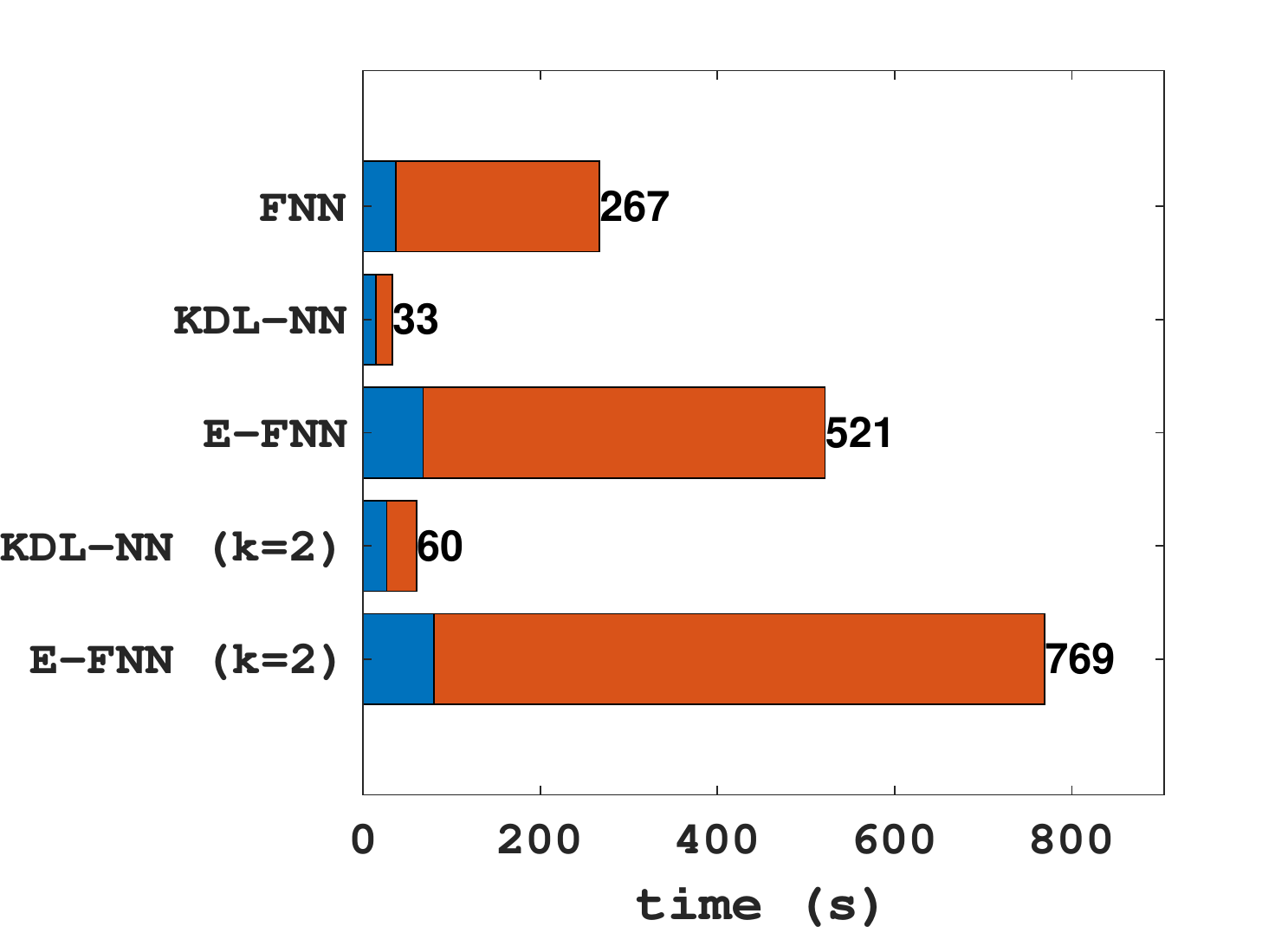}}
\subfloat[BF 
\label{blogc}]{\includegraphics[width=0.25\textwidth]{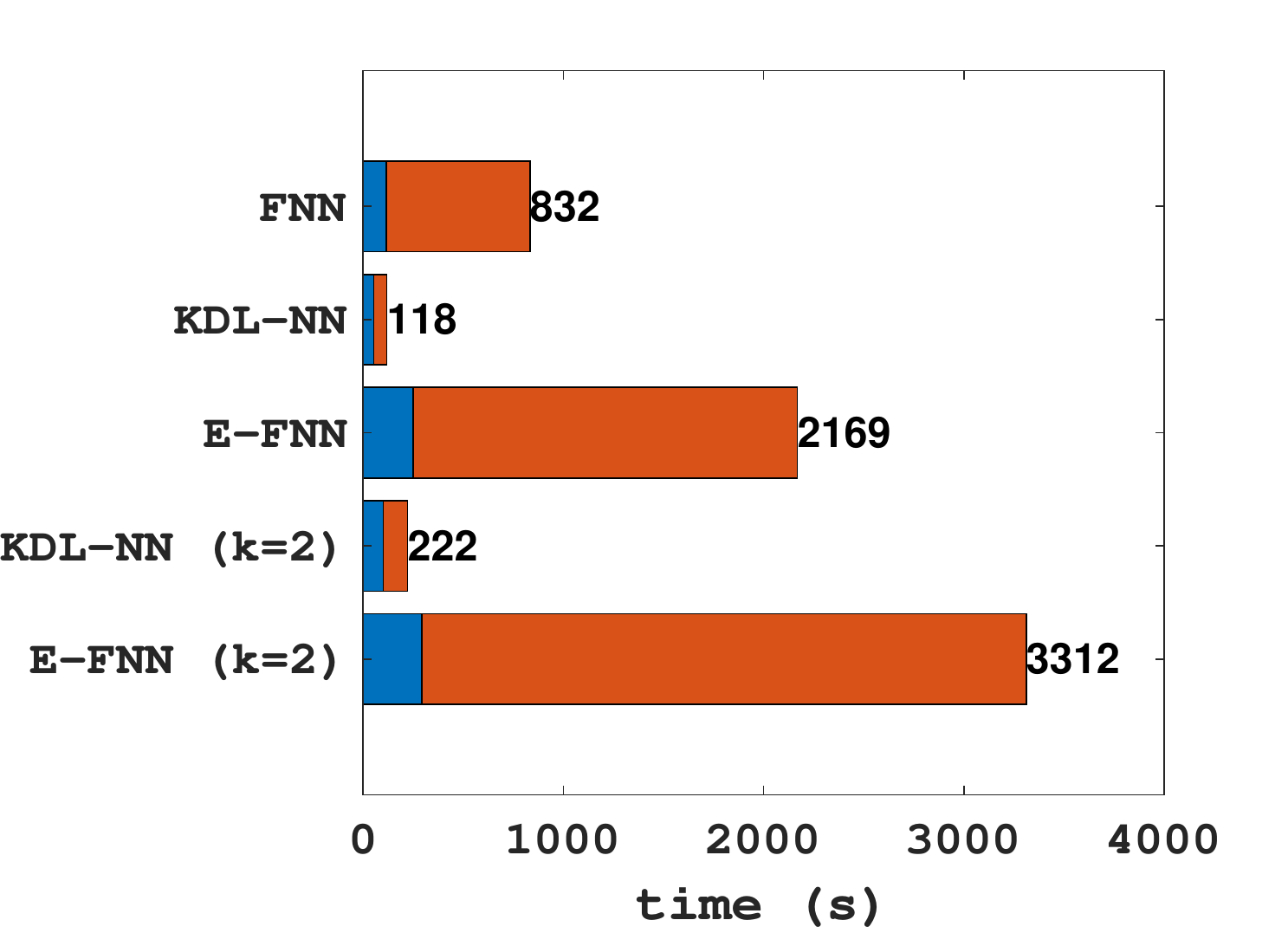}}
\subfloat[MNIST 
\label{MNISTc}]{\includegraphics[width=0.25\textwidth]{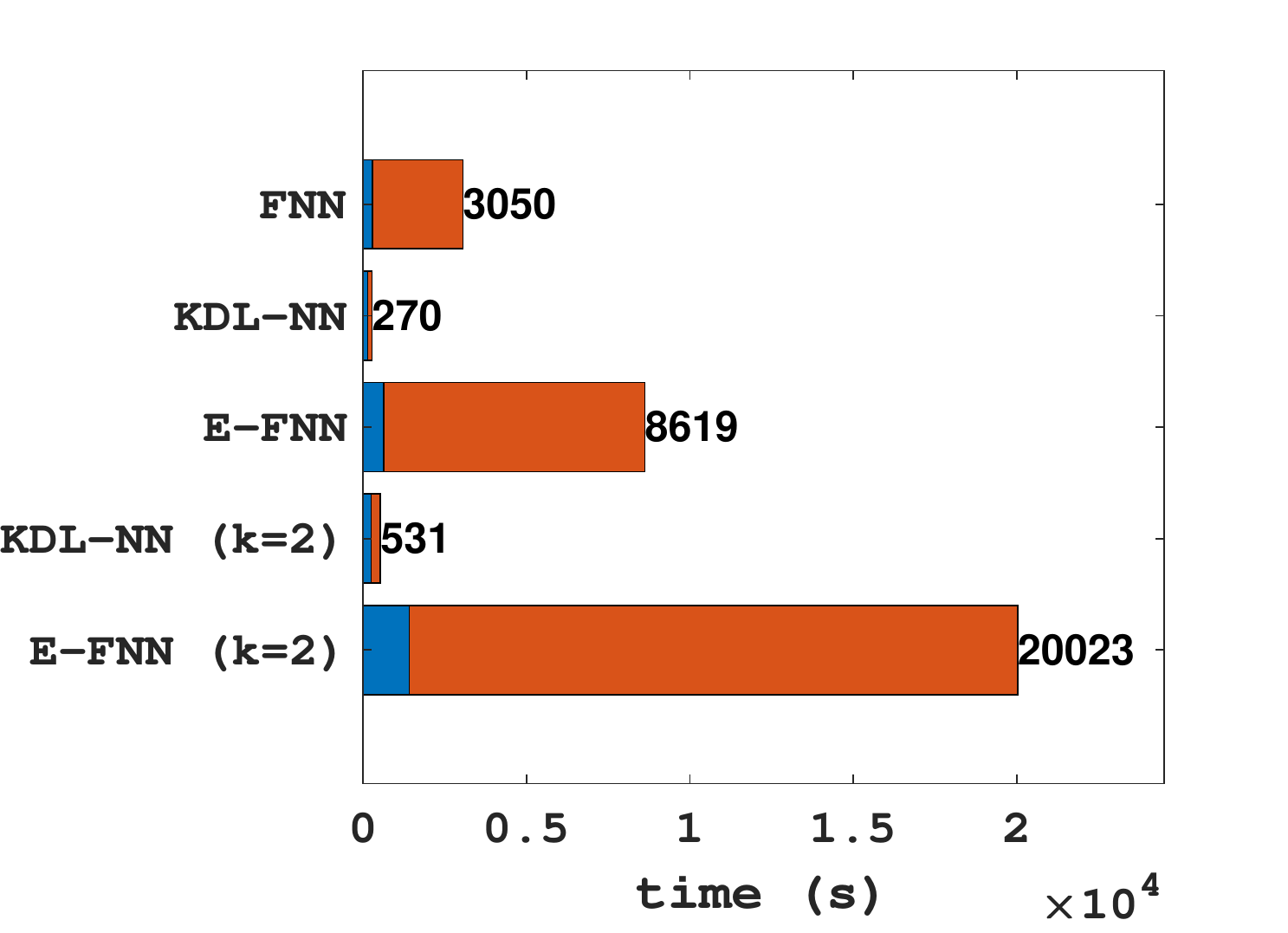}}
\end{center}
\caption{Figures~\ref{bikes8c} to \ref{MNISTc} show the test errors and timing, broken down by forward operations and back-propagations, for FNN, KDL-NN, and E-FNN for BSD (a and b), BF, and MNIST respectively as defined in Table 5.2 with Kronecker ranks~$k=1$ and $k=2$. Adam optimization is utilized with a mini-batch size of 100. 
\label{fig:val_a100}}
\end{figure}

\section{High Order Kronecker Multi-Layer}\label{app:high}

A Kronecker multi-layer (KML) approach is explained here. Given a KDL, consider the refinements $W_L^{(\ell,i)} = \sum_{j=1}^r W_2^{(\ell,j)T}\otimes W_1^{(\ell,j)}$ and $W_R^{(\ell,i)} = \sum_{j=1}^r W_3^{(\ell,j)}\otimes W_4^{(\ell,j)T}$. Then for column $\bm{a}_i$ of $A$, \linebreak $(A^T W_L^{(\ell,i)})^T = \sum_{j=1}^r \left[\begin{array}{ccc}(W_1^{(\ell,j)T}\otimes W_2^{(\ell,j)})\bm{a}_1 & \cdots & (W_1^{(\ell,j)T}\otimes W_2^{(\ell,j)})\bm{a}_p\end{array}\right]$, and with a slight abuse of notation, $W_R^{(\ell,i)} A = \sum_{j=1}^r \left[\begin{array}{ccc}(W_3^{(\ell,j)T}\otimes W_4^{(\ell,j)})\bm{a}_1 & \cdots & (W_3^{(\ell,j)T}\otimes W_4^{(\ell,j)})\bm{a}_p\end{array}\right]$. KP multiplication operations with $A_i = \text(mat)(\bm{a}_i$ are then implemented as $W_2^{(\ell,j)})A_i W_1^{(\ell,j)}$ and $W_2^{(\ell,j)})A_i W_1^{(\ell,j)}$.  This even split into 4 multi-layers will be referred to in the node configuration by a refinement using parenthesis. i.e. a KDL given by $(28,28)$ could be refined into a configuration given by $((7,4),(7,4))$.

Adding activation functions and bias terms, the multi-layers can be written as
\begin{align*}
Z_1^{(\ell,i,j)} &= A_{4(j,:)}^{(\ell-1)}W_1^{(\ell,i)} + B_1^{(\ell,i)}, &A_1^{(\ell,i,j)} &= \phi(Z_1^{(\ell,i,j)}),\\ 
Z_2^{(\ell,i,j)} &=  W_2^{(\ell,i)} A_1^{(\ell,i,j)} + B_2^{(\ell,i)}, & A_{2(j,:)}^{(\ell,i)} &= \phi(\text{vec}(Z_2^{(\ell,i,j)})^T),\\
Z_3^{(\ell,i,j)} &=  A_{2(:,j)}^{(\ell,i)}W_3^{(\ell,i)} + B_3^{(\ell,i)}, &A_3^{(\ell,i,j)} &= \phi(Z_3^{(\ell,i,j)}),\\ 
Z_4^{(\ell,i,j)} &=  W_4^{(\ell,i)} A_3^{(\ell,i,j)} + B_4^{(\ell,i)}, & A_{4(:,j)}^{(\ell)} &= \sum_i \phi(\text{vec}(Z_4^{(\ell,i,j)})),
\end{align*}
where \textsc{Matlab} style notation is used in subscripts to differentiate between rows or columns being reshaped into matrix form.

Back-propagation follows with
\begin{align*}
& \Gamma^{(L+1,j)} := (Y-A_4^{(L)}) \\
& \Gamma^{(\ell+1,j)} :=  
\frac{\partial \mathcal{L}_1}{\partial A_{4(:,j)}^{(\ell)}} = \sum_i \Delta_1^{(\ell+1,i,j)}W_1^{(\ell+1,i)T}, \qquad \ell = L-1,\cdots 2 \\
& \Delta_4^{(\ell,i,j)} := \frac{\partial \mathcal{L}_1}{\partial A_{4(:,j)}^{(\ell)}}
\frac{\partial A_{4(:,j)}^{(\ell)}}{\partial Z_4^{(\ell,i,j)}} = \sum_j \Gamma^{(\ell+1,j)} \circ \phi'(Z_4^{(\ell,i,j)})\\
\frac{\partial C}{\partial W_4^{(\ell,i)}} &= \sum_j\frac{\partial \mathcal{L}_1}{\partial Z_4^{(\ell,i,j)}}\frac{\partial Z_4^{(\ell,i,j)}}{\partial W_4^{(\ell,i)}} + \frac{\partial \mathcal{L}_2}{\partial W_4^{(\ell,i)}} = \sum_j\Delta_4^{(\ell,i,j)} A_3^{(\ell,i,j)T} + \lambda W_4^{(\ell,i)}\\
\frac{\partial C}{\partial B_4^{(\ell,i)}} &= \sum_j\frac{\partial \mathcal{L}_1}{\partial Z_4^{(\ell,i,j)}}\frac{\partial Z_4^{(\ell,i,j)}}{\partial B_4^{(\ell,i)}} + \frac{\partial \mathcal{L}_2}{\partial B_4^{(\ell,i)}} = \sum_j\Delta_4^{(\ell,i,j)} + \lambda B_4^{(\ell,i)}\\
& \Delta_3^{(\ell,i,j)} := \frac{\partial \mathcal{L}_1}{\partial Z_4^{(\ell,i,j)}}\frac{\partial Z_4^{(\ell,i,j)}}{\partial Z_3^{(\ell,i,j)}} = ((W_4^{(\ell,i)})^T\Delta_4^{(\ell,i,j)})\circ \phi'(Z_3^{(\ell,i,j)})\\
\frac{\partial C}{\partial W_3^{(\ell,i)}} &= \sum_j\frac{\partial \mathcal{L}_1}{\partial Z_3^{(\ell,i,j)}} \frac{\partial Z_3^{(\ell,i,j)}}{\partial W_3^{(\ell,i)}} + \frac{\partial \mathcal{L}_2}{\partial W_3^{(\ell,i)}} = \sum_j A_2^{(\ell,i,j)T}\Delta_3^{(\ell,i,j)} + \lambda W_3^{(\ell,i)}\\
\frac{\partial C}{\partial B_3^{(\ell,i)}} &= \sum_j\frac{\partial \mathcal{L}_1}{\partial Z_3^{(\ell,i,j)}} \frac{\partial Z_3^{(\ell,i,j)}}{\partial B_3^{(\ell,i)}} + \frac{\partial \mathcal{L}_2}{\partial B_3^{(\ell,i)}} = \sum_j\Delta_3^{(\ell,i,j)} + \lambda B_3^{(\ell,i)}\\
& \Delta_{2(p,q)}^{(\ell,i,j)}:= \frac{\partial \mathcal{L}_1}{\partial Z_{3(j,q)}^{(\ell,i,p)}}\frac{\partial Z_{3(j,q)}^{(\ell,i,p)}}{\partial Z_{2(p,q)}^{(\ell,i,j)}} = (\Delta_{3(j,q)}^{(\ell,i,p)} W_{3(p,q)}^{(\ell,i)T}) \circ \phi'(Z_{2(p,q)}^{(\ell,i,j)})\\
\frac{\partial C}{\partial W_2^{(\ell,i)}} &= \sum_j \frac{\partial \mathcal{L}_1}{\partial Z_2^{(\ell,i,j)}}\frac{\partial Z_2^{(\ell,i,j)}}{\partial W_2^{(\ell,i)}} + \frac{\partial \mathcal{L}_2}{\partial W_2^{(\ell,i)}} = \sum_j \Delta_2^{(\ell,i,j)} A_1^{(\ell,i,j)T} + \lambda W_2^{(\ell,i)}\\
\frac{\partial C}{\partial B_2^{(\ell,i)}} &= \sum_j \frac{\partial \mathcal{L}_1}{\partial Z_2^{(\ell,i,j)}}\frac{\partial Z_2^{(\ell,i,j)}}{\partial B_2^{(\ell,i)}} + \frac{\partial \mathcal{L}_2}{\partial B_4^{(\ell,i)}} = \sum_j \Delta_2^{(\ell,i,j)} + \lambda B_2^{(\ell,i)}\\
& \Delta_1^{(\ell,i,j)} := \frac{\partial \mathcal{L}_1}{\partial Z_2^{(\ell,i,j)}}\frac{\partial Z_2^{(\ell,i,j)}}{\partial Z_1^{(\ell,i,j)}} = \sum_j ((W_2^{(\ell,i)})^T\Delta_2^{(\ell,i,j)})\circ \phi'(Z_1^{(\ell,i,j)})\\
\frac{\partial C}{\partial W_1^{(\ell,i)}} &= \sum_j \frac{\partial \mathcal{L}_1}{\partial Z_1^{(\ell,i,j)}} \frac{\partial Z_1^{(\ell,i,j)}}{\partial W_1^{(\ell,i)}} + \frac{\partial \mathcal{L}_2}{\partial W_1^{(\ell,i)}} = \sum_j A_2^{(\ell,i,j)T}\Delta_1^{(\ell,i,j)} + \lambda W_1^{(\ell,i)}\\
\frac{\partial C}{\partial B_1^{(\ell,i)}} &= \sum_j \frac{\partial \mathcal{L}_1}{\partial Z_1^{(\ell,i,j)}} \frac{\partial Z_1^{(\ell,i,j)}}{\partial B_1^{(\ell,i)}} + \frac{\partial \mathcal{L}_2}{\partial B_1^{(\ell,i)}} = \sum_j \Delta_1^{(\ell,i,j)} + \lambda B_1^{(\ell,i)}
\end{align*}

In practice, viable training has required separate weights and biases for each layer of the split, i.e.
\begin{align*}
Z_1^{(\ell,i,j)} &= A_{4(j,:)}^{(\ell-1)}W_1^{(\ell,i,j)} + B_1^{(\ell,i,j)}, &A_1^{(\ell,i,j)} &= \phi(Z_1^{(\ell,i,j)}),\\ 
Z_2^{(\ell,i,j)} &=  W_2^{(\ell,i,j)} A_1^{(\ell,i,j)} + B_2^{(\ell,i,j)}, & A_{2(j,:)}^{(\ell,i)} &= \phi(\text{vec}(Z_2^{(\ell,i,j)})^T),\\
Z_3^{(\ell,i,j)} &=  A_{2(:,j)}^{(\ell,i)}W_3^{(\ell,i,j)} + B_3^{(\ell,i,j)}, &A_3^{(\ell,i,j)} &= \phi(Z_3^{(\ell,i,j)}),\\ 
Z_4^{(\ell,i,j)} &=  W_4^{(\ell,i,j)} A_3^{(\ell,i,j)} + B_4^{(\ell,i,j)}, & A_{4(:,j)}^{(\ell)} &= \sum_i \phi(\text{vec}(Z_4^{(\ell,i,j)})).
\end{align*}

Results are shown in Figure~\ref{fig:K4} using an even 4-split KML-NN on MNIST with node configuration $N = \{ ((7,4),(7,4)),((7,4),(7,4)),((7,4),(7,4)),((5,1),(2,1)) \}$.  Similar to using KDL-NN on BSD (a), using this KML-NN on MNIST with these small values results in an overall increase in time, and further analysis on larger sets is still required to determine the benefits of adopting higher order KMLs.

\begin{figure}[htbp]\begin{center} 
\subfloat[MNIST \label{MNIST_K4}]{\includegraphics[width=0.35\textwidth]{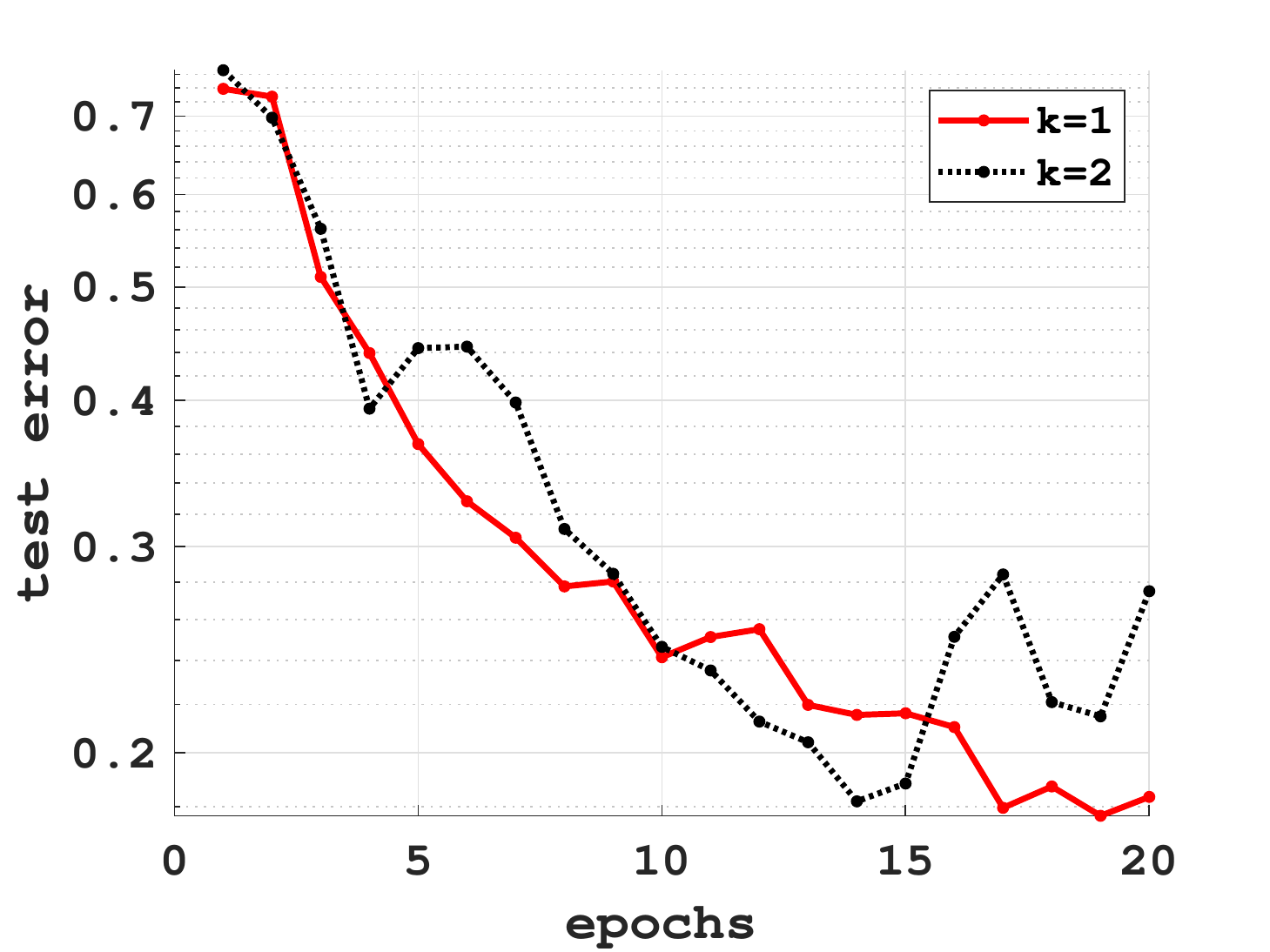}}
\subfloat[MNIST \label{MNIST_K4t}]{\includegraphics[width=0.35\textwidth]{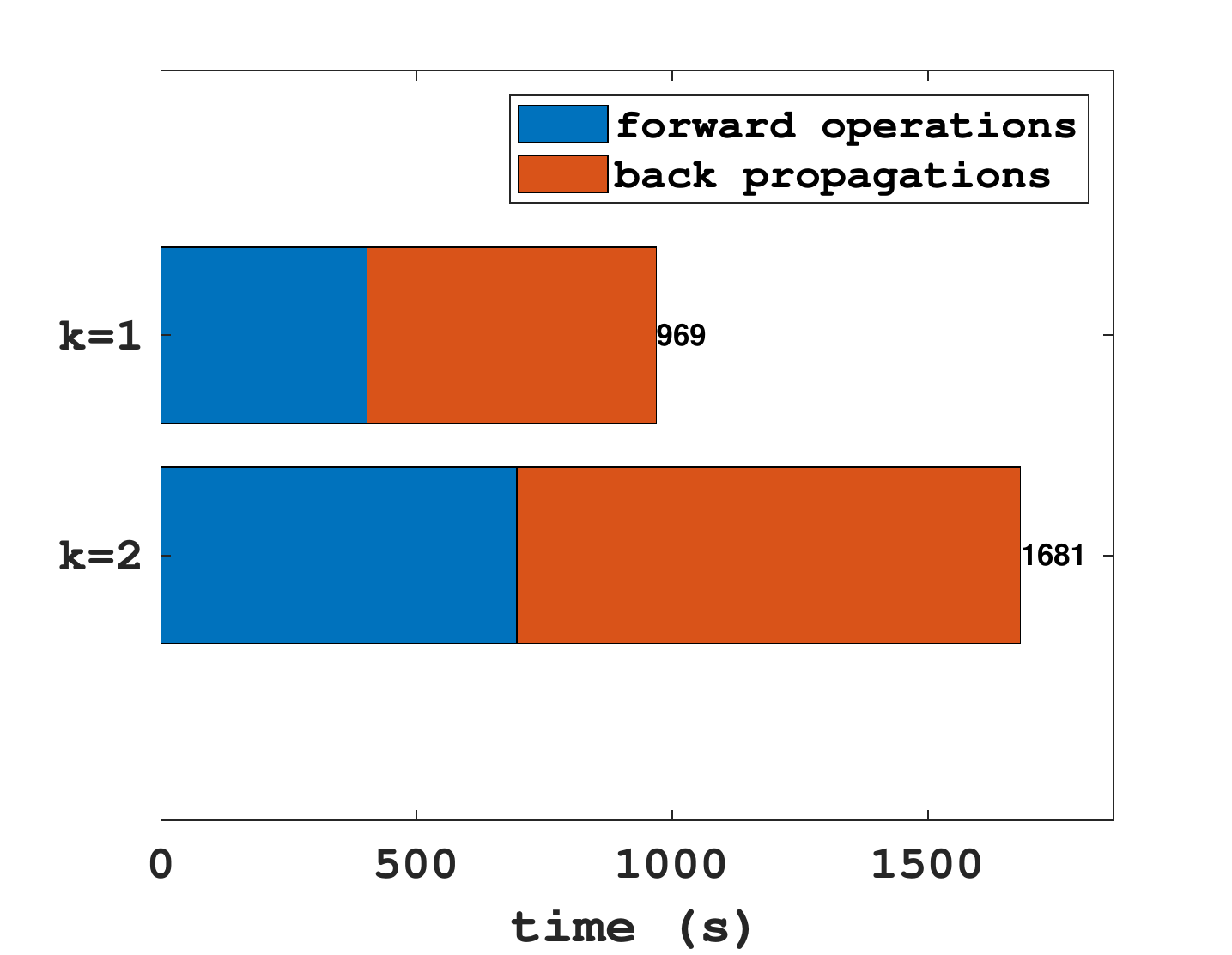}}
\end{center}
\caption{Figures~\ref{MNIST_K4} and \ref{MNIST_K4t} show the test errors and timing breakdowns for a KML-NN with 4 multi-layers with Kronecker ranks 1 and 2 for MNIST.\label{fig:K4}}
\end{figure}

\end{document}